\newcommand{\ds}[1]{{\color{magenta}[DS: #1]}}
\newcommand\numberthis{\addtocounter{equation}{1}\tag{\theequation}}
\pgfplotsset{compat=newest}
\tikzset{
    block/.style = {draw, rectangle,
        minimum height=1cm,
        minimum width=2cm},
    input/.style = {coordinate,node distance=1cm},
    output/.style = {coordinate,node distance=4cm},
    arrow/.style={draw, -latex,node distance=2cm},
    pinstyle/.style = {pin edge={latex-, black,node distance=2cm}},
    sum/.style = {draw, circle, node distance=1cm},
    latent/.append style={minimum size=1cm},
    obs/.append style={minimum size=1cm}
}
\DeclareMathAlphabet{\mathbbold}{U}{bbold}{m}{n}
\DeclareSymbolFont{bbold}{U}{bbold}{m}{n}
\DeclareSymbolFontAlphabet{\mathbbold}{bbold}
\renewcommand{\vector}[1]{\bm{\lowercase{#1}}}
\renewcommand{\matrix}[1]{\bm{\uppercase{#1}}}
\newcommand{\rv}[1]{\mathsf{#1}}
\DeclareMathOperator\sqltwo{SL^2}
\DeclareMathOperator\kl{KL}
\DeclareMathOperator\tv{TV}
\DeclareMathOperator\sqhel{SH}
\DeclareMathOperator{\Tr}{Tr}
\DeclareMathOperator{\Cov}{Cov}
\DeclareMathOperator*{\argmin}{\arg\!\min}
\newtheorem{assumption}{Assumption}
\DeclareMathOperator{\rank}{rank}
\begin{document}

%\title{Computational, geometric and statistical properties of singular, even-order monomial and squared families}
%\title{Computational, geometric and statistical properties of squared families and their friends}
%\title{When can a singularity help? Squared families and their friends as alternatives to exponential families}
%\title{When can a singularity help? Squared and friends as alternatives to exponential families}
\title{Squared families: Searching beyond regular probability models}

\author{\name Russell Tsuchida\thanks{Work partially done while at Data61, CSIRO} \email russell.tsuchida@monash.edu \\
       \addr Data Science and AI Group\\
       Monash University \\
       Clayton, VIC 3800, Australia
       \AND
       \name Jiawei Liu \email jiawei.liu3@anu.edu.au \\
       \addr School of Computing\\
       Australian National University \\
       Canberra, ACT 2600, Australia 
       \AND 
       \name Cheng Soon Ong \email chengsoon.ong@anu.edu.au \\
       \addr Data61, CSIRO and 
       Australian National University\\
       Canberra, ACT 2600, Australia
       \AND
       \name Dino Sejdinovic \email dino.sejdinovic@adelaide.edu.au \\
       \addr School of Computer and Mathematical Sciences \\
       The University of Adelaide \\
       Adelaide, SA 5005, Australia}

\editor{My editor}

\maketitle

\begin{abstract}
We introduce squared families, which are families of probability densities obtained by squaring a linear transformation of a statistic. Squared families are singular, however their singularity can easily be handled so that they form regular models. After handling the singularity, squared families possess many convenient properties. 
%These properties are shown in a ``forward'' fashion, by starting with the definition of squared families.
Their Fisher information is a conformal transformation of the Hessian metric induced from a Bregman generator. The Bregman generator is the normalising constant, and yields a statistical divergence on the family. The normalising constant admits a helpful parameter-integral factorisation, meaning that only one parameter-independent integral needs to be computed for all normalising constants in the family, unlike in exponential families. Finally, the squared family kernel is the only integral that needs to be computed for the Fisher information, statistical divergence and normalising constant. 
We then describe how squared families are special in the broader class of $g$-families, which are obtained by applying a sufficiently regular function $g$ to a linear transformation of a statistic. 
%We derive their properties in a ``backwards'' fashion, starting with desiderata and providing characterisations leading to the squared family.
After removing special singularities, positively homogeneous families and exponential families are the only $g$-families for which the Fisher information is a conformal transformation of the Hessian metric, where the generator depends on the parameter only through the normalising constant. Even-order monomial families are the only infinitely differentiable positively homogeneous families, and also admit natural parameter-integral factorisations, unlike exponential families. 
Finally, we study parameter estimation and density estimation in squared families, in the well-specified and misspecified settings. We use a universal approximation property to show that squared families can learn sufficiently well-behaved target densities at an asymptotic rate of $\mathcal{O}(N^{-1/2})+C n^{-1/4}$, where $N$ is the number of datapoints, $n$ is the number of parameters, and $C$ is some data-independent constant.
\end{abstract}

\begin{keywords}
Density estimation, Information geometry, Exponential family, Universal approximation
\end{keywords}

\setcounter{tocdepth}{2}
%\tableofcontents

\section{Introduction}
\subsection{Families of probability distributions and their applications}
Tractable, flexible and learnable families of probability density functions find applications throughout statistics and machine learning.
An extreme but common application is (conditional) density estimation, where one attempts to learn an approximation to a target density $q$ by picking an element of the family which in some sense best matches data sampled from $q$~\citep{barron1991approximation,deisenroth2020mathematics,mclachlan2019finite}. 
Another extreme is parameter estimation, where the target density $q$ is known to belong to the family and one attempts to find identifiable parameters of $q$~\citep{lehmann2006theory}.
In between these two extremes, there exists a rich set of problems such as density ratio estimation~\citep{sugiyama2012density}, divergence estimation, clustering~\citep{banerjee2005clustering-bregman}, generalised linear modelling (including regression and classification)~\citep{mccullagh1989generalized},
parametric two-sample testing~\citep{lehmann1986testing}, and generally, inference and estimation in any graphical model~\citep{wainwright2008graphical}.
Central to all these problems are parametric families of probability densities which have favourable computational, geometric and statistical properties.

\paragraph{Exponential families}
Many of the works cited in the previous paragraph study special cases of (mixtures of) \emph{exponential families}.
Exponential families are ubiquitous in such applications mostly because of their convenient geometric, statistical, and, in some cases, computational properties.

\emph{Geometric properties: } Every exponential family forms a manifold indexed by its natural parameter~\citep{amari2016information-geometry}. 
On this manifold, the Bregman divergence between parameters generated by the strictly convex log normalising constant is equal to the reverse KL divergence between probability distributions.
A dual parameter (the expectation parameter) and divergence is obtained through the convex conjugate of the Bregman generator.
The Riemannian metric is the strictly positive definite Fisher information.

\emph{Statistical properties: } The Fisher information is exactly the precision of parameter estimates in the exponential family, attaining equality in the Cramer-Rao bound (uniquely so, under certain regularity conditions~\citep{EF-CRLB-1973,joshi1976}). 
More generally, beyond the exponential family, the Fisher information describes the precision of the asymptotically Gaussian maximum likelihood estimates around the true value. 
The Fisher information has a particularly nice form in exponential families, as it is the Hessian of the Bregman generator, which is also the covariance of the minimal sufficient statistic under the model~\citep{wainwright2008graphical}.

\emph{Computational properties: }
Exponential families are also the only families in which there is a sufficient statistic whose dimension remains bounded as the size of i.i.d. samples increases (among families in which the domain does not depend on the parameter), a result known as the Pitman-Koopman-Darmois theorem.
This means that parameters can be updated using only a finite dimensional statistic, which can be maintained without storing the full dataset of observations.
This is helpful in graphical models, and particularly attractive in Bayesian settings, where Bayesian updating can sometimes be done in closed-form by updating only the finite dimensional sufficient statistic~\citep{wainwright2008graphical}.
Unfortunately, conjugate priors for exponential family likelihoods must be carefully chosen, often forcing one to use a particular conceptually inappropriate prior.
Furthermore, even in the non-Bayesian setting, computing the normalising constant in general beyond special named cases (such as Gaussian, Laplacian, Poisson, Gamma, Binomial, Rayleigh etc.~\citep{nielsen2009statistical}) requires computing or approximating a non-trivial exponential integral. 

In modern machine learning architectures, maintaining flexibility of models with tractable normalising constants~\citep{pmlr-v51-wilson16,papamakarios2021normalizing}, approximating or otherwise bypassing computation~\citep{lecun2006tutorial,graves2011practical,knoblauch2022optimization} of the normalising constant remains an active area of research, with applications in parameter estimation, inference and prediction.

\begin{table}
\centering
\resizebox{\textwidth}{!}{{\begin{tblr}{
  vline{2} = {-}{},
  hline{2} = {-}{},
}
 & $g$ & Exponential & {Positively \\ homogeneous} & {Even-order \\ monomial} & Squared \\
Normalising \\constant $z(\vector{\theta})$ & $\int_{\mathbb{X}} g\big( \vector{\theta}^\top \vector{\psi}(\vector{x}) \big)\, \mu(d\vector{x})$ & $\int_{\mathbb{X}} \exp\big( \vector{\theta}^\top \vector{\psi}(\vector{x}) \big)\, \mu(d\vector{x})$ & $\int_{\mathbb{X}} g\big( \vector{\theta}^\top \vector{\psi}(\vector{x}) \big)\, \mu(d\vector{x})$ & $\left\langle  \vector{\theta}^{\otimes k},\int_{\mathbb{X}}\vector{\psi}(\vector{x})^{\otimes k}\, \mu(d\vector{x})  \right\rangle $ & $\left\langle \vector{\theta}\vector{\theta}^\top, \matrix{K}_{\mu, \vector{\psi}} \right\rangle = \vector{\theta}^\top \matrix{K}_{\mu, \vector{\psi}} \vector{\theta}$ \\
Bregman \\generator $\phi(\vector{\theta})$ & ? & $\log z(\vector{\theta}) $ & $z(\vector{\theta})$ & $z(\vector{\theta})$ & $z(\vector{\theta})$ \\
Divergence link & ? & $d_\phi(\vector{\theta}_1 : \vector{\theta}_2) = \kl(p_2: p_1)$ & ? & ? & $d_\phi(\vector{\theta}_1 : \vector{\theta}_2) = \sqltwo(\sqrt{p_2}: \sqrt{p_1})$ \\
Metric link & ? & $\matrix{G}(\vector{\theta} ) = \nabla^2 \phi(\vector{\theta}) = \Cov[\vector{\psi}(\vector{\rv{x}})]$ & $\matrix{G}(\vector{\theta}) = \frac{k}{k-1} \frac{\nabla^2 \phi(\vector{\theta})}{z(\vector{\theta})}$ & $\matrix{G}(\vector{\theta}) = \frac{k}{k-1} \frac{\nabla^2 \phi(\vector{\theta})}{z(\vector{\theta})}$ & $\matrix{G}(\vector{\theta}) = 2 \frac{\nabla^2 \phi(\vector{\theta})}{z(\vector{\theta})}$ \\
\end{tblr}}
}
\caption{\label{tab:contrast} Contrasting normalising constants $z$, Bregman generators $\phi$, Bregman-statistical divergence links, and Hessian-Fisher information metric links for various $g$-families (after appropriately handling singularity through dimension augmentation --- see \S~\ref{sec:squared_singular} and~\ref{sec:g_singular}) of densities of the form $p(\vector{x} \mid \vector{\theta}) \mu(d\vector{x}) \propto g\big( \vector{\theta}^\top \vector{\psi}(\vector{x}) \big) \mu(d\vector{x})$. 
Here $\matrix{G}$ denotes the Fisher information and $d_\phi$ denotes the Bregman divergenced generated by $\phi$.
Positively homogeneous families have order $k \geq 2$, and even-order monomial families have order $k \in \{2, 4, \ldots\}$.
For squared families, after an appropriate restriction on the parameter space --- see Assumption~\ref{ass:simple_parameter_space} --- there is a uniquely defined ``square root'', and the squared L2 distance is a proper statistical divergence. 
%A particularly nice property of squared families is that only the parameter and the second moment $\matrix{K}_{\mu, \vector{\psi}} = \int_{\mathbb{X}} \vector{\psi}(\vector{x}) \vector{\psi}(\vector{x})^\top \mu(d\vector{x})$ needs to be known for the normalising constant, divergence, and Fisher information
}
\end{table}

\subsection{Contributions}
%In order to motivate our general setting and results, we first discuss the main special case of the models which we study, called \emph{squared families}.
%We return to this special case in more detail in \S~\ref{sec:squared_fam}.
In \S~\ref{sec:squared_fam}, we focus on squared families, and describe their convenient properties. 
Let $(\mathbb{X},\mathcal{F},\mu)$ be a measure space, with a set $\mathbb{X}$, a $\sigma$-algebra $\mathcal F$ and a reference $\sigma$-finite measure $\mu$, and consider probability density functions with respect to $\mu$ of the form
\begin{align*}
    p(\vector{x} \mid \vector{\theta}) &= \frac{\big(\vector{\theta}^\top \vector{\psi}(\vector{x}) \big)^2}{z(\vector{\theta})}, \numberthis \label{eq:special_case}
\end{align*}
where $\psi:\mathbb{X} \to \mathbb{R}^n$, $z(\vector{\theta}) = \int_{\mathbb{X}} \big(\vector{\theta}^\top \vector{\psi}(\vector{x}) \big)^2 \, \mu(d\vector{x})$, and $\vector{\theta} \in \mathbbold{\Theta} \subseteq \{ \vector{\theta} \in \mathbb{R}^n \mid 0< z(\vector{\theta}) < \infty \}$.
A family of the form $\{ p(\vector{x} \mid \vector{\theta} ) \}_{\vector{\theta} \in \mathbbold{\Theta}}$ is called a \emph{squared family}. 
We study squared families in a ``forward'' fashion, starting with their definition and deriving their properties. 
We discuss their special singularity, derive their Fisher information, introduce a simple technique to remove their singularity, show a tractable factorisation of their normalising constant, and introduce a statistical divergence for the family. 
The Fisher information, normalising constant and statistical divergence are all computationally linked by a single integral called the \emph{squared family kernel}, which only needs to be computed once for the entire family. 
The normalising constant acts as a strictly convex generator, which yields the Fisher information and the statistical divergence.

In \S~\ref{sec:g_fam}, we make steps towards characterising what makes squared families special in the space of broader families.
We derive squared families in a ``backwards'' fashion, starting with key desiderata and proving characterisations leading to squared families. 
We consider families of densities of the form $p(\vector{x} \mid \vector{\theta}) \propto g\big( \vector{\theta}^\top \vector{\psi}(\vector{x}) \big)$, where $g$ is some sufficiently regular nonnegative function. 
We show that $g$-families with a special singularity are characterised as positively homogeneous families, to which even-order monomial and squared families belong.
We derive their Fisher information and show a simple technique which removes their singularity, and show a tractable factorisation of their normalising constant in the special case of even-order monomial families.
After removing the singularity, only for exponential families and positively homogeneous families is the Fisher information a conformal transformation of the Hessian metric derived from a Bregman generator which depends on the parameter only through the normalising constant. 
This is a very computationally convenient property, as it allows for reusing integrals for Fisher information, normalising constants and statistical divergences. 
Beyond exponential families, even-order monomial families additionally possess a tractable factorisation of their normalising constant.
A summary of comparisons between various $g$-families is given in Table~\ref{tab:contrast}.

In \S~\ref{sec:estimation}, we study maximum likelihood estimation in squared families. 
When estimating parameters of well-specified models, standard results on asymptotic normality apply, with precision determined by the Fisher information, which essentially reduces to the squared family kernel.
Under model misspecification, when estimating an arbitrary but well-behaved target density, we show that the KL divergence from the target density to the squared family fit through maximum likelihood estimation is close to the KL divergence from the target density to the best (in the sense of KL divergence) density in the squared family. 
In particular, the difference between these two KL divergences converges in probability to zero at a rate $N^{-1/2}$, where $N$ is the number of datapoints.
Finally, the KL divergence from the target density to the best density in the squared family is bounded by $\mathcal{O}(n^{-1/4})$, using universal approximation results on neural networks, where $n$ is the number of parameters.  

We give all proofs in the Appendix.

\paragraph{Related works}
While models resembling squared families already exist in the literature, to the best of our knowledge (as discussed in \S~\ref{sec:squared_fam}), with a particular emphasis on computational properties and representation ability with respect to other model classes, no prior work has analysed the geometric properties and downstream statistical estimation properties of such models, nor have they placed them within the broad existing framework of universal approximation results. We discuss related constructs.

\section{Background}
\label{sec:roadmap}

%\subsection{Preliminaries and background}
\paragraph{Notation}
We assume that all distributions $P$ and $Q$ admit a probability density function $p$ and $q$ with respect to a common base measure $\mu$. 
Two-argument divergences in parameter, distribution and function space (like $d(\vector{a}:\vector{b})$) are always written with a colon argument separator $:$, even if they may be symmetric.
A summary of notational conventions and important reused symbols is given in Table~\ref{tab:notation}.
\begin{table}[ht!]
    \centering
    \resizebox{\textwidth}{!}{\begin{tabular}{c|c|c} \hline
         Notation & Description & Example  \\ \hline
         Boldface, lowercase & Vector & $\vector{x}$ \\
         Boldface, uppercase & Matrix & $\matrix{X}$ \\
         Blackboard bold & Set & $\mathbb{X}$ \\
         Caligraphic font & Set & $\mathcal{F}$ \\
         Sans serif & Random element & $\vector{\rv{x}}$ \\  \hline \hline
         Name & Symbol & Note  \\ \hline
         Feature size & $n$ & $n \in \{1, 2, \ldots \}$ \\
         & $m$ & $m \in \{1, 2, \ldots \}$ \\
         Dataset size & $N$ & $N \in \{1, 2, \ldots \}$ \\
         Parameter set & $\mathbbold{\Theta}$ & $\mathbbold{\Theta} \subseteq \mathbb{R}^{mn}$ \\
         Canonical parameter & $\matrix{\Theta}$ &  $m \times n$ matrix \\
         Flattened canonical parameter & $\vector{\theta}$ & $\vector{\theta} = \text{vec}(\matrix{\Theta}) \in \mathbbold{\Theta}$ \\
         PSD parameter & $\matrix{M}$ & $\matrix{M} = \matrix{\Theta}^\top \matrix{\Theta} \succeq 0$ \\
         Base support & $\mathbb{X}$ & $\mathbb{X} \subseteq \mathbb{R}^d$, does not depend on $\vector{\theta}$ \\
         Feature / sufficient statistic / hidden layer & $\vector{\psi}$ &  $\vector{\psi}:\mathbb{X} \to \mathbb{R}^{n}$ \\
         Base measure & $\mu$ & \\
         Squared family kernel & $\matrix{K}_{\mu, \vector{\psi}}$ & $\int_{\mathbb{X}} \vector{\psi}(\vector{x})\vector{\psi}(\vector{x})^\top \, \mu(d\vector{x})$ \\ \hline
    \end{tabular}}
    \caption{Notational conventions (top) and some important frequently manipulated objects (bottom) used throughout this paper.}
    \label{tab:notation}
\end{table}
%Let $(\Omega, \mathcal{F}, \mu)$ be a measure space.
%Let $f$ be a function with domain $\mathbb{X} \subseteq \Omega$.
%We will adopt the notation $\mathbb{E}_p[f(\vector{\rv{x}})]$ to mean $\int_{\mathbb{X}} p(\vector{x})  f(\vector{x}) \, \mu (d\vector{x})$ for a (not necessarily nonnegative or normalised) function $p$, and $\sum_{\vector{x} \in \mathbb{X}} p(\vector{x}) f(\vector{x})$ for a function $p$ with a discrete domain $\mathbb{X}$.
%We use $z(p)$ to denote $\mathbb{E}_p[1]$, the \emph{normalising constant}, where finite.
%If $p$ is a \emph{probability density function} with respect to $\mu$, we have $z(p) =1$ and $0 < p(\vector{x}) < \infty$ for $\vector{x} \in \mathbb{X}$.
%We will always call such functions probability density (not mass) functions, even if they are absolutely continuous with respect to counting measure.
%As a result, all probability density functions $p$ may be viewed as Radon Nikodym derivatives of a corresponding probability measure $P$ satisfying $P(d\vector{x}) = p(\vector{x}) \mu (d\vector{x})$, which is absolutely continuous with respect to base measure $\mu$.
%We will use the sign function $\text{sgn}:\mathbb{R} \to \{-1, 1\}$, which maps negative values to $-1$ and nonnegative values to $1$. 

We do not attempt to give a complete background on Fisher information, exponential families, statistical divergences, parameter-space divergences or function-space divergences.
Rather, we assume a certain background and list only the important quantities and qualities we require, referring readers to texts for further background where required.

\subsection{Fisher information}

%We are interested in estimation, so the score function $\nabla_\theta$ is an important object. 
%We consider non-regular models, where the score function to be orthogonal to the parameter.
%Score functions live in an $n-1$ dimensional manifold.  If we want this, we have one zero eigenvalue in the Fisher information
For a probability density $p(\cdot \mid \vector{\theta})$ belonging to some family $\{ p(\cdot \mid \vector{\theta}) \}_{\vector{\theta} \in \mathbbold{\Theta}}$, with $\mathbbold{\Theta}\subseteq \mathbb{R}^n$, under appropriate regularity conditions, the Fisher information $\matrix{G}(\vector{\theta})$ is the expected outer product of the gradient of the log density, 
%and also an integral involving the principal square root of the density,
\begin{align*}
    \matrix{G}_{p}(\vector{\theta}) &= \mathbb{E}_{p(\cdot \mid \vector{\theta})}\Big[ \frac{\partial}{\partial \vector{\theta} }\log p(\vector{\rv{x}}\mid \vector{\theta}) \frac{\partial}{\partial \vector{\theta} }\log p(\vector{\rv{x}}\mid \vector{\theta})^\top \Big]. \label{eq:FIM} \numberthis
    %= 4 \int \frac{\partial}{\partial \vector{\theta} }\sqrt{ p(\vector{x}\mid \vector{\theta})}  \frac{\partial}{\partial \vector{\theta} }\sqrt{ p(\vector{x}\mid \vector{\theta})}^\top \, \mu (d\vector{x}). 
\end{align*}
%the latter being also standard but slightly less common than the former.
Singularities in the Fisher information can present analytical challenges in analysing statistical properties of estimators as well as instantiating estimation procedures, which often rely on a full-rank Fisher information.
Thus, it is usually desirable to avoid zero eigenvalues.
A family $\{ p(\cdot \mid \vector{\theta}) \}_{\vector{\theta} \in \mathbbold{\Theta}}$ is called \emph{regular} if for all $\theta \in \mathbbold{\Theta}$, $\rank(\matrix{G}\big(\vector{\theta})\big)= n $, and \emph{singular} if $\rank(\matrix{G}\big(\vector{\theta})\big)< n$. 
 The Fisher information of the joint distribution of two independent random vectors is the sum of the two Fisher informations of the marginal distributions of the random vectors,
\begin{align*}
p(\vector{x}_1, \vector{x}_2\mid \vector{\theta}) = p_1(\vector{x}_1\mid \vector{\theta})p_2(\vector{x}_2\mid \vector{\theta}) \implies \matrix{G}_{p_1(\cdot) p_2(\cdot)}(\vector{\theta}) = \matrix{G}_{p_1}(\vector{\theta}) + \matrix{G}_{p_2}(\vector{\theta}). \numberthis \label{eq:additive_fisher}
\end{align*}
We assume throughout this paper that the Fisher information is finite and is not equal to $\matrix{0}$.

\subsection{Parameter, statistical and function divergences}
\paragraph{Bregman divergence}
Let $\phi : \mathbbold{\Theta} \to \mathbb{R}$ be a convex and differentiable function on a convex set $\mathbbold{\Theta}$. 
The \emph{Bregman divergence} $d_\phi(\vector{\theta}: \vector{\theta}')$ generated by $\phi$ between two points $\vector{\theta}, \vector{\theta}' \in \mathbbold{\Theta}$ is defined as~\citep[\S 1.3, for example]{amari2016information-geometry}
\begin{align*}
    d_\phi(\vector{\theta}: \vector{\theta}') &= \phi(\vector{\theta}) - \phi(\vector{\theta}') -  \nabla \phi(\vector{\theta}')^\top( \vector{\theta} - \vector{\theta}' ).
\end{align*}
The Bregman divergence is nonnegative, $d_\phi(\vector{\theta}: \vector{\theta}') \geq 0$, for all $\vector{\theta}, \vector{\theta}' \in \mathbbold{\Theta}$, and if $\phi$ is strictly convex, $d_\phi(\vector{\theta}: \vector{\theta}') =0 $ if and only if $\vector{\theta} = \vector{\theta}'$.
%For example, if $\mathbbold{\Theta}$ is the probability simplex, then the strictly convex function defined by $\phi(\vector{\theta})= \vector{\theta}^\top \log\vector{\theta}$ generates the Kullback-Leibler (KL) divergence between discrete distributions.
%General Bregman divergence. KL divergence is tlogt or -logt
%Bregman divergences on function spaces may be defined with an appropriate notion of a derivative, but to avoid such technical definitions, we may instead view $\vector{r}:\mathbb{X} \to \mathbbold{\Theta}$ and $\vector{r}':\mathbb{X} \to \mathbbold{\Theta}$ as functions, and $$D_F(\vector{r}, \vector{r}') = \mathbb{E}_1 \big[ d_F\big(\vector{r}(\vector{\rv{x}}), \vector{r}'(\vector{\rv{x}})\big) \big]$$ as a notion of divergence.
%The allows one to use the generator $F_1(r) = r \log r$ to obtain the KL divergence for continuous probability distributions.

\paragraph{Statistical divergence}
A \emph{statistical divergence} $\widetilde{D}$ on a manifold $\mathbb{M}$ of probability distributions is a nonnegative twice continuously differentiable function $\widetilde{D}:\mathbb{M} \times\mathbb{M} \to [0,\infty)$ satisfying two additional axioms~\citep[\S 1.2, for example]{amari2016information-geometry}. 
Firstly, $\widetilde{D}(p,p') = 0$ if and only if $p=p'$.
Secondly, for infinitesimal displacements $dp$, the divergence $\widetilde{D}(p, p+dp)$ is a positive definite quadratic form.

\paragraph{$f$-divergence}
A class of statistical divergences is given by the \emph{$f$-divergences}.
Let $f$ be a convex function and let $p$ and $p'$ be two probability density functions with respect to base measure $\mu$. 
The $f$-divergence of $p$ from $p'$ is defined as~\citep[\S 3.10, for example]{nielsen2020elementary}
\begin{align*}
    D_f(p:p') &= \mathbb{E}_{p'}\Big[ f\Big( \frac{p(\vector{\rv{x}})}{p'(\vector{\rv{x}})}  \Big) \Big].%\int_{\mathbb{X}} f\big( \frac{p(\vector{x})}{p'(\vector{x})}  \big) p'(\vector{x}) \, \mu (d\vector{x}).
\end{align*}

\paragraph{Fisher information metric and $f$-divergences}
Every $f$-divergence induces the Fisher information as the positive definite form appearing in the statistical divergence between infinitesimally displaced elements~\citep[\S 3.5, for example]{amari2016information-geometry}.
The Fisher information describes a lower-bound on the variance of any estimator through the Cramer-Rao bound, and is also useful for describing the limits of estimation procedures.
For example, informally, under mild regularity conditions, the maximum likelihood estimate of a well-specified statistical model asymptotically approaches a Gaussian distribution with mean equal to the true parameter and variance equal to the inverse Fisher information divided by the number of samples.
More generally, maximum likelihood estimates of misspecified models can also be analysed using a quantity similar to the Fisher information~\citep{white1982maximum}.

\paragraph{Hessian metric}
By the mean-value form of Taylor's theorem, the Bregman divergence generated by $\phi$ can be expressed as the integral of the Hessian of $\phi$ along the line connecting the arguments of the Bregman divergence. 
Thus, the Hessian $\nabla^2 \phi:\mathbbold{\Theta} \to \mathbb{S}_+^{n}$ of the Bregman generator $\phi$ gives a metric on the parameter manifold that we call the \emph{Hessian metric}.

\paragraph{$f$-divergence inequalities}

Three examples of $f$-divergences, which we use in our analysis, are the Kullback-Leibler (KL) divergence, the squared Hellinger (SH) distance and the total variation (TV) distance.
%The KL divergence is obtained when $f(z) = z \log z$.
%The KL divergence is the only divergence that is both an $f$-divergence and a Bregman divergence.
The KL divergence is
\begin{align*}
    \kl[p:p'] = D_{(\cdot) \log(\cdot)} [p:p'] &= \mathbb{E}_p \log\frac{p(\vector{\rv{x}})}{p'(\vector{\rv{x}})}.  \numberthis \label{eq:KL}
\end{align*}
The \emph{squared L2 distance} between any two (potentially vector-valued) square integrable functions $\vector{a}$ and $\vector{b}$ is denoted
\begin{align*}
    \sqltwo(\vector{a}: \vector{b}) &= \frac{1}{2} \int_{\mathbb{X}} \big\Vert\vector{a}(\vector{x}) - \vector{b}(\vector{x})\big\Vert_2^2 \, \mu (d\vector{x}).
\end{align*}
This is a useful divergence in quantifying approximations of target functions using machine learning models, for example neural networks with random hidden layers.
For example, this divergence quantifies the approximation of random Fourier feature models~\citep{rahimi2008weighted}.
The SH distance on probability densities is related to the $\sqltwo$ distance on square root densities, and is
\begin{align*}
    \sqhel[p:p'] =  D_{1 - \sqrt{\cdot}} [p:p'] &=  \sqltwo \big(\sqrt{p(\cdot)}: \sqrt{q(\cdot)} \big). \numberthis \label{eq:sqHellinger}
\end{align*}
The TV distance is
\begin{align*}
    \tv[p:p'] = D_{\frac{1}{2}|\cdot - 1|}[p:p'] &= \frac{1}{2}\int_{\mathbb{X}} | p(\vector{x}) - p'(\vector{x})| \, \mu (d\vector{x}).
\end{align*}

\subsection{Exponential families}
The log normalising constant $\phi(\vector{\theta}) = \log \int_{\mathbb{X}} \exp\big( \vector{\theta}^\top \vector{\psi}(\vector{x}) \big)\, \mu(d\vector{x})$ is used as Bregman generator for a regular exponential family. 
In this case, the Bregman divergence $d_\phi$ is equal to the reverse KL divergence, which is a statistical divergence.
The Hessian metric in parameter space induced by $\phi$ is the Fisher information metric over the manifold of probability distributions. 
\section{Squared families}
\label{sec:squared_fam}
We introduce squared families and discuss some of their convenient properties in this section. 
These are shown in a ``forward'' fashion, starting with the definition of squared families and finding their properties.
Later in \S~\ref{sec:g_fam} the properties are used in a ``backwards'' fashion, starting with desiderata and showing characterisations leading to the squared family.
%We now return to the special case of \S~\ref{sec:roadmap}, and explicate~\ref{eq:d1} and~\ref{eq:d2}. We further investigate their statistical properties, including but not limited to~\ref{eq:d3_special}.
%\subsection{Squared family densities}
%Squared families are a special case of positively homogeneous and even-order monomial families, which in turn are special cases of $g$-families, as characterised in \S~\ref{sec:g_fam}. 
%For clarity, we first discuss the general case here.
%In analogy with exponential family models, we now define squared family densitys, by taking $\omega$ to be linear in a finite-dimensional statistic $\vector{\psi}$ of $\vector{\rv{x}}$.
\begin{definition}[Squared family]
    \label{def:squared_prob_model}
Let $(\mathbb{X}, \mathcal{F}, \mu)$ be a measure space, with $\mathbb{X} \subseteq \mathbb{R}^d$. 
Let $\vector{\psi}:\mathbb{X} \to \mathbb{R}^n$, and $\vector{\theta} \in \mathbbold{\Theta} \subseteq \mathbb{R}^n$.
%Choosing a model linear in $\vector{\psi}$ for $\omega$ in~\eqref{eq:squared_prob_set}, 
We say that 
\begin{align*}
    p(\vector{x}\mid \vector{\theta}) &= \frac{\big(  \vector{\theta}^\top \vector{\psi}(\vector{x}) \big)^2}{ z(\vector{\theta})}, \qquad \text{where} \qquad z(\vector{\theta}) =  \int_{\mathbb{X}} \big(  \vector{\theta}^\top \vector{\psi}(\vector{x}') \big)^2 \, \mu (d\vector{x}') \numberthis \label{eq:squared_prob_orig}
\end{align*}
belongs to a squared family. 
We call the collection $\{ p(\vector{x}\mid \vector{\theta}) \}_{\vector{\theta} \in \mathbbold{\Theta}}$ a squared family, where $\mathbbold{\Theta} \subseteq \{ \vector{\theta} \in \mathbb{R}^n \mid 0 < z(\vector{\theta}) < \infty \}$.
\end{definition}
%The nomenclature \textit{squared family} comes from the fact that $p(\vector{x}\mid \vector{\theta}) \propto \big( \vector{\theta}^\top \vector{\psi}(\vector{x}) \big)^2$, i.e. a squared family is an instance of~\eqref{eq:transformed_linear} with $g(\cdot) = (\cdot)^2$.

\subsection{Orthogonal singularity and Fisher information}
\label{sec:squared_singular}
A conspicuous property of $m$-squared families is their singular and non-identifiable nature.
For any $\vector{\theta} \in \mathbbold{\Theta}$ and $ s \neq 0$, densities with parameters $\vector{\theta}$ and $s \vector{\theta}$ are the same, that is, $p(\vector{x} \mid \vector{\theta}) = p(\vector{x} \mid s \vector{\theta})$. 
This conveniently manifests in that the score function is always orthogonal to the parameter,
\begin{align*}
    \vector{\theta}^\top \nabla_{\vector{\theta}} \log p(\vector{x} \mid \vector{\theta}) = 0.
\end{align*}
This orthogonal singularity implies that the Fisher information, which is the expected outer product of the score function, has an eigenvector of $\vector{\theta}$ with eigenvalue $0$.
Indeed, by direct calculation, or as a special case of our later results in \S~\ref{sec:g_fam}, the Fisher information is
\begin{align*}
    \matrix{G}_p(\vector{\theta}) = 4\matrix{K}_{\mu, \vector{\psi}}/z(\vector{\theta}) - 4 \matrix{K}_{\mu, \vector{\psi}} \vector{\theta} \vector{\theta}^\top \matrix{K}_{\mu, \vector{\psi}}/z(\vector{\theta})^2,
\end{align*}
where 
\begin{align*}
    \matrix{K}_{\mu, \vector{\psi}} = \int_{\mathbb{X}} \vector{\psi}(\vector{x})\vector{\psi}(\vector{x})^\top \, \mu(d\vector{x})
\end{align*}
is called the \emph{squared family kernel}, which is independent of parameter $\vector{\theta}$.
We assume that the squared family kernel is finite. 
This expression for the Fisher information  reveals that it is the difference between a positive semidefinite matrix and a rank-$1$ matrix proportional to $\matrix{K}_{\mu, \vector{\psi}} \vector{\theta} (\matrix{K}_{\mu, \vector{\psi}} \vector{\theta})^\top$, and hints at a means of correcting the singularity.
Recalling~\eqref{eq:additive_fisher} that the Fisher information of the joint distribution of two independent random elements is the sum of their individual Fisher information, and the Fisher information of a univariate Gaussian $\rv{a} \sim \mathcal{N}\big(\log z(\vector{\theta}), 1\big) = p_0(a\mid\theta)$ is $4 \matrix{K}_{\mu, \vector{\psi}} \vector{\theta} \vector{\theta}^\top \matrix{K}_{\mu, \vector{\psi}}/z(\vector{\theta})^2$, the Fisher information of $(\vector{\rv{x}}, \rv{a}) \sim p(\vector{x} \mid \vector{\theta}) p_0(a \mid \vector{\theta})$ is
\begin{align*}
    \matrix{G}_{p(\cdot) p_0(\cdot)}(\vector{\theta}) = 4\matrix{K}_{\mu, \vector{\psi}}/z(\vector{\theta}).
\end{align*}
We refer to this technique as \emph{dimension-augmentation}.
If the parameter-independent integral $\matrix{K}_{\mu, \vector{\psi}}$ is positive definite, this yields a regular model for which standard statistical estimation procedures can be analysed. 
Furthermore, given any sample $\{ \vector{x}_i \}_{i=1}^N$ from density $p(\vector{x} \mid \vector{\theta})$ belonging to a squared family, a sample $\{ (\vector{x}_i, a_i) \}_{i=1}^N$ belonging to a dimension-augmented squared family $p(\vector{x} \mid \vector{\theta}) p_0(a \mid \vector{\theta})$ may be formed by choosing $z(\vector{\theta})$ to be some value (say $1$) without imposing any constraint or assumption on the marginal $p(\vector{x} \mid \vector{\theta})$.

\begin{remark}
It is curious that the Fisher information of the dimension-augmented squared family is conformally equivalent to a Hessian metric generated by a function $\chi$ depending only on the normalising constant, with conformal transformation $c$,
\begin{align*}
    \matrix{G}(\vector{\theta}) = \nabla^2 \chi\big(z(\vector{\theta})\big) / c\big( z(\vector{\theta}) \big),
\end{align*}
where $\chi$ is the identity function and $c(z) = z/c$.
Another instance of such a conformal equivalence is in exponential families, where $\chi$ is the logarithmic function, and $c$ is constant $1$. 
Exponential families and dimension-augmented squared families are in some sense rare families which have Hessian generators $\chi$ depending on parameter $\vector{\theta}$ only through $z(\vector{\theta})$ which are conformally equivalent to the Fisher information $\matrix{G}(\vector{\theta})$, as we later make more precise in \S~\ref{sec:g_fam}.
\end{remark}

\subsection{Normalising constant}
By expanding the square of the inner product $\vector{\theta}^\top \vector{\psi}(\vector{x})$, the normalising constant $z(\vector{\theta})$ admits a \emph{\textcolor{DarkOrchid}{parameter}-\textcolor{OliveGreen}{integral} factorisation},
\begin{align*}
    z(\vector{\theta}) &= \Tr \Bigg( \textcolor{DarkOrchid}{\vector{\theta} \vector{\theta}^\top} \textcolor{OliveGreen}{\overbrace{\int_{\mathbb{X}} \vector{\psi}(\vector{x}) \vector{\psi}(\vector{x})^\top \, \mu (d\vector{x})}^{\matrix{K}_{\mu, \vector{\psi}}=}} \Bigg)
\end{align*}
The normalising constant is thus an inner product (in $\mathbb{R}^{n^2}$) of a function depending only on the parameter and an integral that depends on $\vector{\psi}$, $\mathbb{X}$ and $\vector{\mu}$ but not $\vector{\theta}$.
This means that the normalising constant $z(\vector{\theta})$ for any member in the family $\{ p(\vector{x} \mid \vector{\theta} ) \}_{\vector{\theta} \in \mathbbold{\Theta}}$ can be computed using the same computation for the squared family kernel $\matrix{K}_{\mu, \vector{\psi}}$. 
Even if the integral is not available in closed form (and it often is, see~\citet[Table 1]{tsuchida2023squared}), \emph{approximations of the squared family kernel $\matrix{K}_{\mu, \vector{\psi}}$ can be reused for the calculation of the normalising constant for every member of the family.}
Furthermore, approximations for the squared family kernel together with the parameter-integral factorisation can be used to easily approximate the Fisher information for every element in the family.

\subsection{Statistical divergence and Bregman divergence}
\label{sec:infogeo}
Having established the connection between the Fisher information metric and normalising constant, which can be approximated via the parameter-integral factorisation, we now turn our attention to statistical divergences which are also linked to normalising constants.
In order to do so, we introduce two parameter spaces %for both $m=1$ and $m=n$ cases, which do not restrict the family of probability measures.
%When $m=1$, 
which are a half space and boundary of an ellipsoid.
\begin{figure}
    \includegraphics{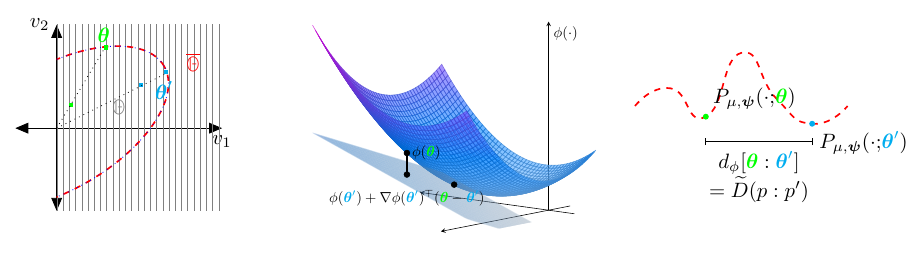}
    \caption{Useful parameter spaces for squared families. (Left) The space $\mathbbold{\Theta}=\{ \vector{\theta} \in \mathbb{R}^n \mid \theta_1 > 0\}$ removes ambiguity in the sign of the parameter $\vector{\theta}$. 
    When using appropriate dimension-augmentation (\S~\ref{sec:dim_aug}), it results in an identifiable model with a full-rank Fisher information (Lemma~\ref{lemma:gaussian_fim}). 
    The boundary of the half ellipsoid $\overline{\mathbbold{\Theta}}= \{ \vector{\theta} \in \mathbbold{\Theta} \mid \vector{\theta}^\top \matrix{K}_{\mu, \vector{\psi}} \vector{\theta} = 1\}$ additionally allows for a proper statistical divergence (Right) by removing ambiguity in the scale of the parameter $\vector{\theta}$ (Left). This statistical divergence is equal to a Bregman divergence (Middle) generated through the normalising constant $z(\vector{\theta}) = \vector{\theta}^\top \matrix{K}_{\mu, \vector{\psi}} \vector{\theta}$ restricted to $\overline{\mathbbold{\Theta}}$. }
\end{figure}
\begin{assumption}
\label{ass:simple_parameter_space}
Define the half-space
    $$\mathbbold{\Theta} = \{ \vector{\theta} \in \mathbb{R}^n \mid \theta_1 > 0\},$$
and the boundary of the half-ellipse
$$ \overline{\mathbbold{\Theta}} = \{ \vector{\theta} \in \mathbbold{\Theta} \mid \vector{\theta}^\top \matrix{K}_{\mu, \vector{\psi}} \vector{\theta} = 1\}.$$
\end{assumption}
The subset $\mathbbold{\Theta} \subset \mathbb{R}^n$ orients the parameter space, thus removing a redundancy in the sign of the parameters.
The subset $\overline{\mathbbold{\Theta}} \subset \mathbbold{\Theta}$ further normalises the parameter space, thus removing a redundancy in the value of the normalising constant.
These parameter spaces impose no restrictions on the family of probability measures. That is, $$\{ p(\cdot \mid \vector{\theta}) \}_{\vector{\theta} \in \mathbb{R}^n \setminus \{ \vector{0}\}} = \{ p(\cdot \mid \vector{\theta}) \}_{\vector{\theta} \in \mathbbold{\Theta}} = \{ p(\cdot \mid \vector{\theta}) \}_{\vector{\theta} \in \overline{\mathbbold{\Theta}}}.$$
We also require some regularity on the features and the moments of the features in order to ensure that we can build a proper statistical divergence.
\begin{assumption}
    \label{ass:strictly_pd}
    $\matrix{K}_{\mu, \vector{\psi}}$ is strictly positive definite.
\end{assumption}
\begin{assumption}
    \label{ass:rich_features}
    The span of $\{ \vector{\psi}(\vector{x}) \mid \vector{x} \in \mathbb{X}\}$ is all of $\mathbb{R}^n$.
\end{assumption}
Under this setup, we can describe the statistical divergence as built from a Bregman divergence, which is essentially a special squared Mahalanobis distance restricted to the ellipse.

\begin{restatable}{theorem}{divergencevec}
\label{thm:divergencevec}
    The Bregman divergence generated by $\phi(\vector{\theta}) = z(\vector{\theta}) = \vector{\theta}^\top \matrix{K}_{\mu, \vector{\psi}} \vector{\theta}$ is twice the squared $L^2$ distance between the functions whose squares are proportional to the probability densities,
    $$d_\phi[\vector{\theta}: \vector{\theta}'] = 2 \sqltwo\big( \vector{\theta}^\top \vector{\psi}(\cdot): \vector{\theta}'^\top \vector{\psi}(\cdot) \big)$$
    Suppose that Assumptions~\ref{ass:simple_parameter_space}, \ref{ass:strictly_pd} and~\ref{ass:rich_features} hold. Then ${\sqltwo\big( \vector{\theta}^\top \vector{\psi}(\cdot): \vector{\theta}'^\top \vector{\psi}(\cdot) \big)}$ is a statistical divergence on $\overline{\mathbbold{\Theta}} \times \overline{\mathbbold{\Theta}}$.
\end{restatable}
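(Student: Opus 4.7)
The plan is to split the argument into two pieces: first, establish the algebraic identity $d_\phi[\vector{\theta}:\vector{\theta}'] = 2\sqltwo(\vector{\theta}^\top\vector{\psi}(\cdot):\vector{\theta}'^\top\vector{\psi}(\cdot))$; second, verify the three statistical-divergence axioms (twice continuous differentiability with non-negativity, vanishing iff the two distributions agree, and a positive definite quadratic form for infinitesimal displacements). For the identity, I would compute $\nabla\phi(\vector{\theta}') = 2\matrix{K}_{\mu,\vector{\psi}}\vector{\theta}'$ using symmetry of $\matrix{K}_{\mu,\vector{\psi}}$, substitute into the Bregman definition, and cancel to obtain
\[ d_\phi[\vector{\theta}:\vector{\theta}'] = \vector{\theta}^\top\matrix{K}_{\mu,\vector{\psi}}\vector{\theta} + \vector{\theta}'^\top\matrix{K}_{\mu,\vector{\psi}}\vector{\theta}' - 2\vector{\theta}'^\top\matrix{K}_{\mu,\vector{\psi}}\vector{\theta} = (\vector{\theta}-\vector{\theta}')^\top\matrix{K}_{\mu,\vector{\psi}}(\vector{\theta}-\vector{\theta}'). \]
On the $\sqltwo$ side, writing the integrand as $((\vector{\theta}-\vector{\theta}')^\top\vector{\psi}(\vector{x}))^2$ and integrating against $\mu$ using the definition of $\matrix{K}_{\mu,\vector{\psi}}$ yields $\tfrac{1}{2}(\vector{\theta}-\vector{\theta}')^\top\matrix{K}_{\mu,\vector{\psi}}(\vector{\theta}-\vector{\theta}')$; the prefactor $1/2$ in $\sqltwo$ accounts for the factor of two.

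For the divergence axioms, non-negativity and smoothness are immediate from the quadratic-form expression. Strict positive definiteness of $\matrix{K}_{\mu,\vector{\psi}}$ (Assumption~\ref{ass:strictly_pd}) gives both $\sqltwo = 0 \iff \vector{\theta}=\vector{\theta}'$ and the infinitesimal positive definiteness axiom, since the Hessian of $\sqltwo$ at any basepoint is $\matrix{K}_{\mu,\vector{\psi}}$ itself (restricted to the tangent space of $\overline{\mathbbold{\Theta}}$, which remains full rank on this codimension-one subspace). What remains is identifiability: the map $\vector{\theta}\mapsto p(\cdot|\vector{\theta})$ must be injective on $\overline{\mathbbold{\Theta}}$ so that equality of distributions coincides with equality of parameters. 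Since $z(\vector{\theta})=z(\vector{\theta}')=1$ on $\overline{\mathbbold{\Theta}}$, equal densities force $(\vector{\theta}^\top\vector{\psi})^2 = (\vector{\theta}'^\top\vector{\psi})^2$ $\mu$-a.e., that is,
\[ ((\vector{\theta}-\vector{\theta}')^\top\vector{\psi}(\vector{x}))((\vector{\theta}+\vector{\theta}')^\top\vector{\psi}(\vector{x})) = 0 \quad \mu\text{-a.e.,} \]
and Assumption~\ref{ass:rich_features} combined with the orientation $\theta_1>0$ inherited from $\mathbbold{\Theta}$ is what I would use to force one linear functional to vanish identically while ruling out the $\vector{\theta}=-\vector{\theta}'$ branch, leaving $\vector{\theta}=\vector{\theta}'$.

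The main obstacle is this last identifiability step: passing from pointwise vanishing of the product of two linear functionals evaluated on $\vector{\psi}(\vector{x})$ to the conclusion that one factor vanishes on a set whose image under $\vector{\psi}$ still spans $\mathbb{R}^n$. The cleanest route would partition $\mathbb{X}$ into the two zero-sets and apply Assumption~\ref{ass:rich_features} to at least one piece, ensuring a consistent choice of sign via the half-space constraint in $\mathbbold{\Theta}$; alternatively, if $\vector{\psi}$ is continuous and $\mathbb{X}$ connected, the zero-sets of the two linear forms are closed and connectedness forces one to fill $\mathbb{X}$. Everything else in the proof is routine manipulation of the quadratic form $\vector{\theta}^\top\matrix{K}_{\mu,\vector{\psi}}\vector{\theta}$.
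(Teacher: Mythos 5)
Your computation of the Bregman identity is correct and is exactly the paper's: $\nabla\phi(\vector{\theta}') = 2\matrix{K}_{\mu,\vector{\psi}}\vector{\theta}'$ collapses the divergence to the quadratic form $(\vector{\theta}-\vector{\theta}')^\top\matrix{K}_{\mu,\vector{\psi}}(\vector{\theta}-\vector{\theta}')$, which equals $2\sqltwo\big(\vector{\theta}^\top\vector{\psi}(\cdot):\vector{\theta}'^\top\vector{\psi}(\cdot)\big)$ by pulling the integral inside. Nonnegativity, smoothness, the infinitesimal positive-definiteness, and the implication $d_\phi=0\Rightarrow\vector{\theta}=\vector{\theta}'\Rightarrow p=p'$ are likewise handled in the paper exactly as you do, via strict positive definiteness of $\matrix{K}_{\mu,\vector{\psi}}$.

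The gap is the step you yourself flag: you never actually prove that $p(\cdot\mid\vector{\theta})=p(\cdot\mid\vector{\theta}')$ on $\overline{\mathbbold{\Theta}}$ forces $\vector{\theta}=\vector{\theta}'$. You list two candidate strategies (partitioning $\mathbb{X}$ by the zero sets of the two linear forms, or a connectedness argument) but carry out neither, and the second would require continuity of $\vector{\psi}$ and connectedness of $\mathbb{X}$, neither of which is assumed. The paper closes this step by a different route: it rewrites equality of densities as $\vector{\psi}(\vector{x})^\top(\vector{\theta}\vector{\theta}^\top-\vector{\theta}'\vector{\theta}'^\top)\vector{\psi}(\vector{x})=0$, invokes Assumption~\ref{ass:rich_features} to conclude that the symmetric matrix $\vector{\theta}\vector{\theta}^\top-\vector{\theta}'\vector{\theta}'^\top$ vanishes, and then uses uniqueness of the oriented rank-one factorisation to obtain $\vector{\theta}=\vector{\theta}'$; the half-space constraint $\theta_1>0$ enters only at this last stage to remove the sign ambiguity, exactly as you anticipated. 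Note, however, that your factorised form $\big((\vector{\theta}-\vector{\theta}')^\top\vector{\psi}\big)\big((\vector{\theta}+\vector{\theta}')^\top\vector{\psi}\big)=0$ exposes a genuine difficulty that the paper's phrasing passes over quickly: a symmetric matrix whose quadratic form vanishes on a spanning set need not vanish (for instance, $\vector{\psi}$ taking only the values $\vector{e}_1$ and $\vector{e}_2$ spans $\mathbb{R}^2$, yet $\vector{e}_1\vector{e}_2^\top+\vector{e}_2\vector{e}_1^\top$ annihilates both), so the span assumption alone does not finish the argument; one really needs that $\vector{\psi}(\mathbb{X})$ is not contained in the union of the two hyperplanes determined by $\vector{\theta}-\vector{\theta}'$ and $\vector{\theta}+\vector{\theta}'$. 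Your instinct that this is the crux is correct, but the proposal leaves it open.
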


It is helpful to recall the analogous results for any exponential family, where $z(\vector{\theta}) = \int_{\mathbb{X}} \exp\big(\vector{\theta}^\top \psi(\vector{x}) \big) \, \mu(d\vector{x})$.
The log-partition function $\chi(z) = \log z$ is used as a convex generator $\phi(\vector{\theta}) = \log z(\vector{\theta})$ of a Bregman divergence.
This Bregman divergence is equal to the KL divergence $\kl$ when evaluated over the exponential family, which is a statistical divergence. 
The dual parameter which is the derivative $\nabla \phi(\vector{\theta})$ of the generator, is the expected value of the sufficient statistic $\mathbb{E}_{p(\cdot \mid \vector{\theta})}[\vector{\psi}(\vector{\rv{x}})]$.
The second derivative of the generator is the covariance of the sufficient statistic, which is equal to the Fisher information.

\subsection{Other considerations and related work}
\paragraph{Matrix-valued parameters}
We may generalise squared families so that they are proportional to squared norms of vector-valued linearly parameterised functions. 
In other words, with matrix-valued parameter $\matrix{\Theta} \in \mathbb{R}^{m \times n}$ and flattened parameter $\vector{\theta} = \text{vec}(\matrix{\Theta})$, we may consider the model $p(\vector{x}\mid \vector{\theta}) \propto \Vert \matrix{\Theta} \vector{\psi}(\vector{x}) \Vert_2^2 =  \Tr \big(\matrix{\Theta} \vector{\psi}(\vector{x}) \vector{\psi}(\vector{x})^\top \matrix{\Theta}^\top \big)$.
%Using the cyclic property of the trace and the Kronecker product identity $\text{vec}(\matrix{A} \matrix{B} \matrix{C}) = (\matrix{C}^\top \otimes \matrix{A}) \text{vec}(\matrix{B})$, we obtain $p(\vector{x}\mid \vector{\theta}) \propto \vector{\theta}^\top  \matrix{K}_{\vector{\psi}}^{(m)}(\vector{x}) \vector{\theta}$.
This generalisation is not always as analytically convenient as Definition~\ref{def:squared_prob_model}, but still retains some of its easy to analyse properties.
\begin{definition}[$m$-squared family]
\label{def:m-squared_prob_model}
Let $(\mathbb{X}, \mathcal{F}, \mu)$ be a measure space, with $\mathbb{X} \subseteq \mathbb{R}^d$. 
Let $\vector{\psi}:\mathbb{X} \to \mathbb{R}^n$, and $\vector{\theta} \in \mathbbold{\Theta} \subseteq \mathbb{R}^{mn}$.
%Choosing a model linear in $\vector{\psi}$ for $\omega$ in~\eqref{eq:squared_prob_set}, 
We say that 
\begin{align*}
    p^{(m)}(\vector{x}\mid \vector{\theta}) &= \frac{ \Vert \matrix{\Theta} \vector{\psi}(\vector{x}) \Vert_2^2 }{ z(\vector{\theta}) }, \qquad \text{where} \qquad z(\vector{\theta}) = \int_{\mathbb{X}} \Vert \matrix{\Theta} \vector{\psi}(\vector{x}') \Vert_2^2  \, \mu(d\vector{x}') \numberthis \label{eq:squared_prob_orig}
\end{align*}
%where $\matrix{K}_{\vector{\psi}}^{(m)}(\vector{x}) = \vector{\psi}(\vector{x}) \vector{\psi}(\vector{x})^\top   \otimes \matrix{I}_{m \times m}$, 
belongs to an $m$-squared family. 
We call the collection $\{ p^{(m)}(\vector{x}\mid \vector{\theta}) \}_{\vector{\theta} \in \mathbbold{\Theta}}$ an $m$-squared family,  where $\mathbbold{\Theta} \subseteq \{ \vector{\theta} \in \mathbb{R}^{mn} \mid 0 < z(\vector{\theta}) < \infty \}$.
\end{definition}
Thus a squared family density is proportional to a squared norm of $n$-dimensional vector according to the rank-$1$ quadratic form $\matrix{K}_{\vector{\psi}}(\vector{x}) = \vector{\psi}(\vector{x}) \vector{\psi}(\vector{x})^\top$, whereas an $m$-squared family density is proportional to a squared norm of an $nm$-dimensional vector according to the rank-$m$ quadratic form $\matrix{K}_{\vector{\psi}}^{(m)}(\vector{x}) = \vector{\psi}(\vector{x}) \vector{\psi}(\vector{x})^\top \otimes \matrix{I}_{m \times m}$.
The rank-$1$ model can in some instances simplify calculations, but certain simplifications can be made even with general $m > 1$.
For example, the normalising constant $z(\vector{\theta})$ is given by
\begin{align*}
    z(\vector{\theta}) &= \Tr \Bigg(\vector{\theta} \vector{\theta}^\top \underbrace{\Big( \overbrace{\int_{\mathbb{X}} \vector{\psi}(\vector{x}) \vector{\psi}(\vector{x})^\top  \, \mu (d\vector{x})}^{\matrix{K}_{\mu, \vector{\psi}}=}  \otimes \matrix{I}_{m \times m} \Big)}_{=: \matrix{K}_{\mu, \vector{\psi}}^{(m)}} \Bigg),
\end{align*}
and therefore still satisfies a parameter-integral factorisation.
When $m=n$, the appropriate generalisation of the restriction of the parameter space is in terms of the Cholesky decomposition or similar factorisation. 
\begin{assumption}
\label{ass:simple_parameter_space_m}
Let $m = n$. 
Denote the space of $n \times n$ lower triangular matrices with strictly positive diagonals (i.e. the space of Cholesky factors) by $\mathbb{L}$.
Define the flattened space of Cholesky factors
    $$\mathbbold{\Theta} = \{ \vector{\theta} = \text{vec}(\matrix{L}) \mid \matrix{L} \in \mathbb{L} \},$$
and the boundary of the half-ellipse
$$ \overline{\mathbbold{\Theta}} = \{ \vector{\theta} \in \mathbbold{\Theta} \mid \vector{\theta}^\top \matrix{K}_{\mu, \vector{\psi}}^{(m)} \vector{\theta} = 1\}.$$
\end{assumption}
Under this restricted parameter space, we also have a statistical divergence.
\begin{restatable}{theorem}{divergencemat}
\label{thm:divergencemat}
    The Bregman divergence generated by $\phi(\vector{\theta}) = z(\vector{\theta}) = \vector{\theta}^\top \matrix{K}_{\mu, \vector{\psi}}^{(m)} \vector{\theta}$ is twice the squared $L^2$ distance between the functions whose norms are proportional to the probability densities,
    $$d_\phi[\vector{\theta}: \vector{\theta}'] = 2 \sqltwo\big( \matrix{\Theta} \vector{\psi}(\cdot): \matrix{\Theta}' \vector{\psi}(\cdot) \big)$$
    Suppose that Assumption~\ref{ass:simple_parameter_space} or Assumption~\ref{ass:simple_parameter_space_m} holds, and Assumption~\ref{ass:strictly_pd} and Assumption~\ref{ass:rich_features} holds. Then $\sqltwo\big( \matrix{\Theta} \vector{\psi}(\cdot): \matrix{\Theta}' \vector{\psi}(\cdot) \big)$ is a statistical divergence on $\overline{\mathbbold{\Theta}} \times \overline{\mathbbold{\Theta}}$.
\end{restatable}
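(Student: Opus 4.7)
The plan is to lift the vector result of Theorem~\ref{thm:divergencevec} to the matrix-valued setting by exploiting the Kronecker structure of $\matrix{K}^{(m)}_{\mu,\vector{\psi}}$. First, since $\phi(\vector{\theta}) = \vector{\theta}^\top \matrix{K}^{(m)}_{\mu,\vector{\psi}}\vector{\theta}$ is a quadratic form with $\nabla\phi(\vector{\theta}) = 2\matrix{K}^{(m)}_{\mu,\vector{\psi}}\vector{\theta}$, a direct expansion of the Bregman formula collapses it to
\begin{align*}
d_\phi(\vector{\theta}:\vector{\theta}') = (\vector{\theta}-\vector{\theta}')^\top \matrix{K}^{(m)}_{\mu,\vector{\psi}}(\vector{\theta}-\vector{\theta}').
\end{align*}
Using $\matrix{K}^{(m)}_{\mu,\vector{\psi}} = \matrix{K}_{\mu,\vector{\psi}} \otimes \matrix{I}_m$ together with the standard identity $\mathrm{vec}(\matrix{A})^\top(\matrix{B}\otimes\matrix{I}_m)\mathrm{vec}(\matrix{A}) = \Tr(\matrix{A}\matrix{B}\matrix{A}^\top)$ applied with $\matrix{A} = \matrix{\Theta}-\matrix{\Theta}'$ and $\matrix{B} = \matrix{K}_{\mu,\vector{\psi}}$, this rewrites as $\Tr\bigl((\matrix{\Theta}-\matrix{\Theta}')\matrix{K}_{\mu,\vector{\psi}}(\matrix{\Theta}-\matrix{\Theta}')^\top\bigr)$. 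Pulling the integral defining $\matrix{K}_{\mu,\vector{\psi}}$ outside the trace yields $\int_{\mathbb{X}}\Vert (\matrix{\Theta}-\matrix{\Theta}')\vector{\psi}(\vector{x})\Vert_2^2\,\mu(d\vector{x}) = 2\,\sqltwo\bigl(\matrix{\Theta}\vector{\psi}(\cdot):\matrix{\Theta}'\vector{\psi}(\cdot)\bigr)$, establishing the Bregman--$\sqltwo$ identity.

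To verify the statistical divergence axioms, nonnegativity and twice continuous differentiability are immediate from the $L^2$ form. The infinitesimal positive-definite-quadratic-form axiom follows because the Hessian $\nabla^2\phi = 2\matrix{K}^{(m)}_{\mu,\vector{\psi}}$ inherits strict positive definiteness from $\matrix{K}_{\mu,\vector{\psi}}$ (Assumption~\ref{ass:strictly_pd}) via the Kronecker product. For the vanishing-iff-equal axiom, if $\sqltwo(\matrix{\Theta}\vector{\psi}:\matrix{\Theta}'\vector{\psi}) = 0$, the $L^2$ integrand vanishes $\mu$-a.e., so $(\matrix{\Theta}-\matrix{\Theta}')\vector{\psi}(\vector{x}) = 0$ $\mu$-a.e., and Assumption~\ref{ass:rich_features} then forces $\matrix{\Theta} = \matrix{\Theta}'$. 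The converse requires the parametrisation to be injective on $\overline{\mathbbold{\Theta}}$: in the $m=1$ case (Assumption~\ref{ass:simple_parameter_space}) this is exactly what Theorem~\ref{thm:divergencevec} establishes via the sign restriction $\theta_1 > 0$; in the $m=n$ case (Assumption~\ref{ass:simple_parameter_space_m}), equality of normalised densities gives $\vector{\psi}(\vector{x})^\top\matrix{\Theta}^\top\matrix{\Theta}\vector{\psi}(\vector{x}) = \vector{\psi}(\vector{x})^\top\matrix{\Theta}'^\top\matrix{\Theta}'\vector{\psi}(\vector{x})$ $\mu$-a.e., and once this is upgraded to equality of the symmetric PSD matrices $\matrix{\Theta}^\top\matrix{\Theta} = \matrix{\Theta}'^\top\matrix{\Theta}'$, uniqueness of the Cholesky decomposition on $\mathbb{L}$ yields $\matrix{\Theta} = \matrix{\Theta}'$.

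The main obstacle is this final identifiability step in the $m=n$ case: passing from pointwise a.e.\ equality of the quadratic forms $\vector{\psi}^\top \matrix{\Theta}^\top\matrix{\Theta}\vector{\psi}$ to equality of the underlying PSD matrices. Assumption~\ref{ass:rich_features} controls only the first-order image of $\vector{\psi}$, whereas this step genuinely needs the symmetric outer products $\vector{\psi}(\vector{x})\vector{\psi}(\vector{x})^\top$ to span the space of symmetric $n\times n$ matrices (or at least to separate $\matrix{\Theta}^\top\matrix{\Theta}$ from $\matrix{\Theta}'^\top\matrix{\Theta}'$). The expected resolution is either a mild additional regularity (continuity of $\vector{\psi}$ together with full support of $\mu$, so that a.e.\ equality upgrades to everywhere equality and a polarisation argument based on $\vector{\psi}(\vector{x})^\top\matrix{B}\vector{\psi}(\vector{y})$-style evaluations succeeds) or a slight strengthening of Assumption~\ref{ass:rich_features} to ensure richness of the symmetric outer products. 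Once this identifiability is in hand, the remainder of the proof is a straightforward matrix generalisation of Theorem~\ref{thm:divergencevec}.
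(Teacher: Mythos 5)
Your derivation of the identity $d_\phi[\vector{\theta}:\vector{\theta}'] = 2\sqltwo(\matrix{\Theta}\vector{\psi}(\cdot):\matrix{\Theta}'\vector{\psi}(\cdot))$ and your axiom-checking structure are essentially identical to the paper's proof: the same quadratic collapse of the Bregman divergence, the same Kronecker/trace rewriting, and the same use of strict positive definiteness of $\matrix{K}_{\mu,\vector{\psi}}^{(m)}$ for one direction of the vanishing-iff-equal axiom.

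The one step you flag as an obstacle is handled in the paper by asserting that, because $\{\vector{\psi}(\vector{x})\mid\vector{x}\in\mathbb{X}\}$ spans $\mathbb{R}^n$ (Assumption~\ref{ass:rich_features}), the pointwise identity $\vector{\psi}(\vector{x})^\top(\matrix{\Theta}_1^\top\matrix{\Theta}_1-\matrix{\Theta}_2^\top\matrix{\Theta}_2)\vector{\psi}(\vector{x})=0$ forces $\vector{z}^\top(\matrix{\Theta}_1^\top\matrix{\Theta}_1-\matrix{\Theta}_2^\top\matrix{\Theta}_2)\vector{z}=0$ for all $\vector{z}\in\mathbb{R}^n$. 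Your suspicion about this inference is well founded: a difference of two PSD matrices is in general indefinite, and an indefinite symmetric matrix can vanish as a quadratic form on a spanning set without being zero. For instance, with $n=2$, $\vector{\psi}$ taking the values $(1,1)$ and $(1,-1)$ (which span $\mathbb{R}^2$ and give a strictly positive definite $\matrix{K}_{\mu,\vector{\psi}}$ under counting measure), the matrix $\mathrm{diag}(1,-1)=\mathrm{diag}(2,1)-\mathrm{diag}(1,2)$ annihilates both values as a quadratic form, so two distinct Cholesky factors yield the same density while the claimed divergence between them is strictly positive. Polarisation does not rescue the argument because sums of elements of the image of $\vector{\psi}$ need not lie in that image. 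So the gap you identify is real, it is present in the paper's own proof of this step (and, by the analogous argument, in the $m=1$ case of Theorem~\ref{thm:divergencevec} as well), and the resolution you propose --- requiring the symmetric outer products $\vector{\psi}(\vector{x})\vector{\psi}(\vector{x})^\top$ to span the symmetric matrices, or at least to separate the relevant PSD parameters, possibly via continuity of $\vector{\psi}$ and full support of $\mu$ --- is the correct strengthening of Assumption~\ref{ass:rich_features} needed to make the identifiability direction go through. Apart from this point, on which you are more careful than the source, your argument matches the paper's.
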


\paragraph{Alternate parameterisations and unidentifiability}
Noting the cyclic property of the trace and the linearity of the integral, we may also write the density in an inner product form,
\begin{align*}
    p^{(m)}(\vector{x}\mid \vector{\theta}) &= \frac{  \Tr \Big( \matrix{\Theta}^\top \matrix{\Theta} \vector{\psi}(\vector{x}) \vector{\psi}(\vector{x})^\top \Big)}{  \Tr \Big( \matrix{\Theta}^\top \matrix{\Theta} \matrix{K}_{\mu, \vector{\psi}} \Big)}, \numberthis \label{eq:squared_prob}
\end{align*}
where $\matrix{K}_{\mu, \vector{\psi}} = \int_{\mathbb{X}} \vector{\psi}(\vector{x}) \vector{\psi}(\vector{x})^\top \, \mu (d\vector{x})$.
Since~\eqref{eq:squared_prob} depends on $\matrix{\Theta}^\top \matrix{\Theta}$, we may alternatively parameterise the squared family in terms of a PSD matrix $\matrix{M} = \matrix{\Theta}^\top \matrix{\Theta}$.
The $\matrix{M}$-parameterisation also highlights the singular nature of $m$-squared families.
The $\matrix{M}$-parameterisation uniquely defines $p^{(m)}(\vector{x}\mid \vector{\theta})$, but does not uniquely define $\vector{\theta}$.
There are multiple $\vector{\theta}$ which result in the same $p^{(m)}(\vector{x}\mid \vector{\theta})$.
There are two \emph{distinct} factors contributing to unidentifiability: (1) the decomposition of a PSD matrix $\matrix{M} = \matrix{\Theta}^\top \matrix{\Theta}$ is not unique, and (2) the matrix $\matrix{M}$ appears in both a numerator and denominator, and therefore $p^{(m)}(\vector{x}\mid \vector{\theta})$ is invariant to scales in its parameter $\vector{\theta}$.

\paragraph{Mixture models}
Inspecting~\eqref{eq:squared_prob} and restricting $\matrix{\Theta}^\top \matrix{\Theta}$ to be diagonal, we see that $m$-squared families generalise mixture models with fixed component densities $\vector{\psi}(\vector{x}) \odot \vector{\psi}(\vector{x}) $.
Such models can be reparameterised by projecting parameters onto the simplex and properly normalising the component densities.
Under such reparameterisation, the information geometry of mixture models is known~\citep[\S~2.3, for example]{amari2016information-geometry}. 
The negative entropy is the dual of a Bregman generator. 
This dual Bregman divergence is the KL divergence. 
Note that such mixture families consider \emph{fixed} component densities, unlike some other settings such as Gaussian mixture models with parameters including the mean and covariance of the component densities~\citep[for example]{mclachlan2019finite}.

\paragraph{Kernel methods}
Instances of similar models include non-parametric reproducing kernel Hilbert space (RKHS) models for nonnegative functions, where the Euclidean inner product $\Tr\big(\matrix{M} \vector{\psi}(\vector{x}) \vector{\psi}(\vector{x})^\top \big) = \langle \matrix{M} \vector{\psi}(\vector{x})  , \vector{\psi}(\vector{x}) \rangle$ is replaced by an inner product in an RKHS $\mathbb{H}$, and $\vector{\psi}$ is taken to be the feature mapping of an RKHS~\citep{marteau2020non,rudi_ciliberto,marteau2022sampling}. 
The numerator $\langle \matrix{M}\vector{\psi}(\vector{x}), \vector{\psi}(\vector{x}) \rangle_{\mathbb{H}}$ of such models may be viewed as the squared norm of an element of RKHS, and is linear in $\matrix{M}$ but quadratic in $\matrix{\Theta}$.
When the kernel is universal, such models are universal approximators~\citep[Theorem 3 and Theorem 4]{marteau2020non}, and when elements of such a class minimise a regularised functional, they admit a representation in terms of linear combinations of finitely many evaluations of the canonical feature mapping~\citep[Theorem 3 and Theorem 4, Theorem 1]{marteau2020non}, mirroring standard properties of linear kernel methods~\citep{micchelli2006universal}~\citep[Theorem 1]{scholkopf2001generalized}.
Normalising constants $\Tr\big( \matrix{M} \matrix{K}_{\mu, \vector{\psi}} \big)$ are explicitly computed in closed-form in the case where $\vector{\psi}$ is a Gaussian function.
~\citet{rudi_ciliberto} study such models, also with Gaussian features, and show that they are sample-efficient in the sense that they obtain the optimal sample complexity for density estimation of $\beta$-times differentiable densities, by minimising a regularised empirical estimate of the squared L2 distance between the model and observed data.
Finally, also in the case of Gaussian features,~\citet{marteau2022sampling} derive a method to sample from such probability models.

\paragraph{Squared neural families} When $\vector{\psi}(\vector{x}) = \sigma\big(\matrix{W} \vector{t}(\vector{x}) + \vector{b}\big)$ is the evaluation of the hidden layer of a neural network with activation function $\sigma$, weight matrix $\matrix{W} \in \mathbb{R}^{n \times D}$, bias vector $\vector{b} \in \mathbb{R}^n$, and sufficient statistic\footnote{This sufficient statistic $\vector{t}$ is different to the sufficient statistic $\vector{\psi}$; both are accurately called sufficient statistics, although $\vector{t}$ is also a sufficient statistic for $\matrix{W}, \vector{b}$.} $\vector{t}:\mathbb{X} \to \mathbb{R}^D$, then $p$ is said to belong to a \emph{squared neural family} (SNEFY), indexed by parameters $\matrix{\Theta}, \matrix{W}$ and $\vector{b}$~\citep{tsuchida2023squared}. 
Again, this terminology arises because such a density model amounts to normalising the evaluation $\Vert \matrix{\Theta} \sigma\big(\matrix{W} \vector{t}(\matrix{x}) + \vector{b}\big) \Vert_2^2$ of the squared norm of a neural network. 
Remarkably, such density models admit closed-form expressions for $\matrix{K}_{\mu, \vector{\psi}}$ (and therefore, the normalising constant) for a wide range of activation functions $\sigma$, base measures $\mu$ and supports $\mathbb{X}$~\citep[Table 1]{tsuchida2023squared}, beyond the previously explicated Gaussian features $\vector{\psi}$.
This is due to the vast number of similar integrals that have been computed in closed form in the context of infinite width neural networks and Gaussian processes~\citep{neal1995bayesian}.
Such families are also closed under conditioning, and admit closed-form marginal distributions.
SNEFY models allow for adaptive features $\vector{\psi}$ through learnable parameters $\matrix{W}$ and $\vector{b}$.
Beyond density modelling, such models have also been applied to the related problem of Poisson point process intensity modelling~\citep{tsuchida2024snefy-ppp}.
A single draw (consisting of $e$ events) from a Poisson point process may be thought of as $e$ i.i.d. samples from the density obtained by normalising the intensity function, where $e$ is itself a random variable following a Poisson distribution with a rate parameter given by an integral of the intensity function~\citep[\S~8.5]{Cressie1993}~\citep[\S~4.3]{Baddeley2007}. 
One notable result is that under squared family densities, intensity modelling results in a negative log likelihood (NLL) which is convex in parameter $\matrix{M}$, whereas density modelling results in an NLL which is nonconvex in parameter $\matrix{M}$.
%\textcolor{red}{We will find parameterisations of squared family densitys that result in convex NLLs, \ldots}

\paragraph{Squared Gaussian processes}
Also in the context of intensity estimation of Poisson point processes, several works use squared norms of Gaussian processes to model intensity functions~\citep{mccullagh2006permanental,lloyd2015variational,walder2017fast,kim2022fast,sellier2023sparse}.
Modelling intensity functions with squared Gaussian processes replaces the computational or analytical challenge of computing the normalising constant with the computatonal or analytical challenge of computing the integrated intensity function.
Their frequentist counterpart of using a squared element of an RKHS~\citep{flaxman2017poisson} resembles the squared kernel methods for density modelling, where $\matrix{M}$ is restricted to be rank $1$; integrals can be approximated by utilising the notion of an equivalent kernel~\citep[\S~7.1]{williams2006gaussian}.

\paragraph{Probabilistic circuits}
Similar squared probability models appear in the literature of probabilistic circuits~\citep{choiprobabilistic}, where it has also been noted that squaring functions allows for tractable normalisation and composition within probabilistic circuits, where they are known as squared circuits, for which various representation results exist in terms of (compositions of) other probabilistic circuits~\citep{loconte2024sum,loconte2023negative,loconte2024subtractive,wang2024relationship}, and have been applied in turning knowledge graph embedding models into generative models~\citep{loconte2023how}.

\paragraph{Marginal and conditional distributions}
We can compute the marginal and conditional distributions of squared family densities, generalising a property of SNEFY models~\citep[Theorem 1 and 2]{tsuchida2023squared}.

\begin{restatable}{theorem}{marginalsjoint}
\label{thm:marginals_joints}
    Let $\vector{\rv{x}} = (\vector{\rv{x}}_1, \vector{\rv{x}}_2) \sim p^{(m)}(\cdot\mid \vector{\theta})$ jointly follow an $m$-squared family density. 
    Assume that the base measure $\mu(d\vector{x})$ factors as $\mu(d\vector{x}) = \mu_1(d\vector{x}_1 \mid \vector{x}_2) \mu_2(d \vector{x}_2)$.
    Define $\vector{\psi}_{1\mid 2} (\vector{x}_1) = \vector{\psi}\big( (\vector{x}_1, \vector{x}_2) \big)$. 
    Then 
    \begin{itemize}
    \item the conditional distribution of $\vector{\rv{x}}_1$ given $\vector{\rv{x}}_2 = \vector{x}_2$ is a squared family density, $\vector{\rv{x}}_1 \sim P_{\mu_1(\cdot \mid \vector{x}_2), \vector{\psi}_{1\mid 2}}^{(m)}(d \vector{x}_1\mid \vector{\theta}) $, and
    \item the marginal distribution of $\vector{\rv{x}}_2 \sim P_2(\cdot \mid \vector{\theta})$ (which is not necessarily a squared family density), is
    $$
    P_2(d\vector{x}_2\mid\vector{\theta}) = \frac{\Tr\big( \matrix{M} \matrix{K}_{\mu_1(\cdot \mid \vector{x}_2), \vector{\psi}_{1 \mid 2}}\big)}{\Tr\big( \matrix{M} \matrix{K}_{\mu, \vector{\psi}} \big)} \mu_2(d\vector{x}_2).$$
    \end{itemize}
\end{restatable}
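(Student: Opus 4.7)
The plan is to compute the marginal directly from the joint density by integrating out $\vector{x}_1$, and then obtain the conditional as the ratio of the joint to the marginal. Both steps reduce to manipulations of the trace/inner-product representation $p^{(m)}(\vector{x}\mid\vector{\theta}) = \Tr\bigl(\matrix{M}\,\vector{\psi}(\vector{x})\vector{\psi}(\vector{x})^\top\bigr)/\Tr\bigl(\matrix{M}\matrix{K}_{\mu,\vector{\psi}}\bigr)$ from~\eqref{eq:squared_prob}, combined with the factorisation of the base measure.

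First, using the factorisation $\mu(d\vector{x}) = \mu_1(d\vector{x}_1\mid\vector{x}_2)\mu_2(d\vector{x}_2)$, I would apply Tonelli's theorem (everything is nonnegative) to write
\begin{align*}
P_2(d\vector{x}_2\mid\vector{\theta}) \;=\; \mu_2(d\vector{x}_2)\int_{\mathbb{X}_1} \frac{\Tr\bigl(\matrix{M}\,\vector{\psi}_{1\mid 2}(\vector{x}_1)\vector{\psi}_{1\mid 2}(\vector{x}_1)^\top\bigr)}{\Tr\bigl(\matrix{M}\matrix{K}_{\mu,\vector{\psi}}\bigr)}\,\mu_1(d\vector{x}_1\mid\vector{x}_2).
\end{align*}
Since the denominator does not depend on $\vector{x}_1$, and by linearity of the trace and the integral, the numerator integral may be pulled inside the trace to give $\Tr\bigl(\matrix{M}\int_{\mathbb{X}_1}\vector{\psi}_{1\mid 2}(\vector{x}_1)\vector{\psi}_{1\mid 2}(\vector{x}_1)^\top\mu_1(d\vector{x}_1\mid\vector{x}_2)\bigr) = \Tr\bigl(\matrix{M}\matrix{K}_{\mu_1(\cdot\mid\vector{x}_2),\vector{\psi}_{1\mid 2}}\bigr)$, which is exactly the claimed form for the marginal. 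A small integrability check is needed to justify swapping the integral and trace, but since $\matrix{M}$ is of finite size (or trace class in the generalised setting) and the overall normalising constant is finite by assumption, this is immediate.

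For the conditional, I would then take the ratio $p^{(m)}(\vector{x}\mid\vector{\theta})\mu(d\vector{x})/P_2(d\vector{x}_2\mid\vector{\theta})$. The global normalising constant $\Tr(\matrix{M}\matrix{K}_{\mu,\vector{\psi}})$ cancels between numerator and denominator, as does $\mu_2(d\vector{x}_2)$, leaving
\begin{align*}
\frac{\Tr\bigl(\matrix{M}\,\vector{\psi}_{1\mid 2}(\vector{x}_1)\vector{\psi}_{1\mid 2}(\vector{x}_1)^\top\bigr)}{\Tr\bigl(\matrix{M}\matrix{K}_{\mu_1(\cdot\mid\vector{x}_2),\vector{\psi}_{1\mid 2}}\bigr)}\,\mu_1(d\vector{x}_1\mid\vector{x}_2),
\end{align*}
which matches Definition~\ref{def:m-squared_prob_model} applied with base measure $\mu_1(\cdot\mid\vector{x}_2)$, features $\vector{\psi}_{1\mid 2}$, and the same PSD parameter $\matrix{M}=\matrix{\Theta}^\top\matrix{\Theta}$. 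Thus the conditional belongs to an $m$-squared family with the stated ingredients.

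The only real obstacle is a bookkeeping one: verifying that $\matrix{K}_{\mu_1(\cdot\mid\vector{x}_2),\vector{\psi}_{1\mid 2}}$ is a well-defined, finite PSD operator for $P_2$-almost every $\vector{x}_2$, so that the conditional squared family is genuinely defined (i.e.\ its normaliser is positive and finite). This follows because $\Tr(\matrix{M}\matrix{K}_{\mu,\vector{\psi}}) = \int \Tr(\matrix{M}\matrix{K}_{\mu_1(\cdot\mid\vector{x}_2),\vector{\psi}_{1\mid 2}})\mu_2(d\vector{x}_2)$ by the same Tonelli argument, so the integrand is finite almost everywhere, and positivity holds wherever $P_2$ assigns mass. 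Everything else in the proof is linear algebra and interchange-of-integration.
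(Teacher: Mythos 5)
Your proposal is correct and follows essentially the same route as the paper: both rely on the factorisation of the base measure together with linearity of the trace and the integral to identify the conditional as an $m$-squared family with kernel $\matrix{K}_{\mu_1(\cdot\mid\vector{x}_2),\vector{\psi}_{1\mid 2}}$ and to read off the marginal. The only difference is cosmetic (you compute the marginal first and take a ratio, whereas the paper reads the conditional off by proportionality and then marginalises), and your added integrability remarks are a harmless strengthening.
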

The normalising constant of the conditional distribution, $\Tr( \matrix{M} \matrix{K}_{\mu_1(\cdot \mid \vector{x}_2), \vector{\psi}_{1 \mid 2}})$ can often be computed in closed form whenever the normalising constant of the joint distribution can be computed in closed form. 
For example, if $\mu$ is a Gaussian measure, so is $\mu_1(\cdot \mid \vector{x}_2)$, and if $\vector{\psi}$ is the hidden layer of a neural network, so is $\vector{\psi}_{1 \mid 2}$, with the same activation function but with a modified bias parameter. 
As long as the kernel admits a closed form for fixed activation function, all Gaussian measures and all bias parameters, the kernel for the conditional distribution can be computed in closed form.
See~\citet{tsuchida2023squared} for examples.
As one might expect, the expression for the marginal distribution involves the same kernel matrix $\matrix{K}_{\mu_1(\cdot \mid \vector{x}_2), \vector{\psi}_{1 \mid 2}}$ as the conditional distribution.

\iffalse
\paragraph{Estimating expectations and importance sampling}
Let $\vector{\rv{x}} \sim p^{(m)}(\cdot\mid \vector{\theta})$ and let $\vector{h}$ be some suitably well-behaved measurable test function.
Suppose $\mu(\mathbb{X})$ is finite.
Then expected values of $\vector{h}$ under the squared family density may be understood in terms of expected values of a different function under the distribution obtained by normalising the base measure,
\begin{align*}
    \mathbb{E}_{\vector{\rv{x}} \sim p}[\vector{h}(\vector{\rv{x}})] &= \int_{\mathbb{X}}  \frac{\langle \matrix{M}\vector{\psi}(\vector{x}), \vector{\psi}(\vector{x}) \rangle}{\Tr\big( \matrix{M} \matrix{K}_{\mu, \vector{\psi}} \big)} \mu(\mathbb{X}) h(\vector{x}) \frac{\mu(d\vector{x})}{\mu(\mathbb{X})} \\
    &= \frac{\mu(\mathbb{X})}{\Tr\big( \matrix{M} \matrix{K}_{\mu, \vector{\psi}} \big)} \mathbb{E}_{\vector{\rv{z}} \sim \mu/\mu(\mathbb{X})}\Bigg[\langle \matrix{M}\vector{\psi}(\vector{\rv{x}}), \vector{\psi}(\vector{\rv{x}}) \rangle  \vector{h}(\vector{\rv{x}})\Bigg].
\end{align*}
\fi

\section{$g$-families}
\label{sec:g_fam}
We consider a general class of model families that include the squared family as a special case. 
The only purpose of this general class is to establish squared families and even-order monomial families as uniquely satisfying certain properties within this broader class.
In the context of these more general families, we generalise the convenient properties of squared families and define them as two desiderata.
We show that the first desideratum is satisfied \emph{only} by orthogonally singular $g$-families and exponential families.
We also characterise this more general family by observing that it is equivalent to positively homogeneous families, with even order families as a special case.
We demonstrate that the second desideratum is satisfied by even-order monomial families, but not for other positively homogeneous families or for exponential families.
Therefore in some sense squared families are an excellent representative of a class of families with convenient properties.

Let $\mathbbold{\Theta}$ be a convex parameter set and let $\mathbb{X} \subseteq \mathbb{R}^d$ be a domain.
Define a statistic $\vector{\psi}:\mathbb{X} \to \mathbb{R}^n$.
We consider probability density models $p(\cdot \mid \vector{\theta})$ which are proportional to evaluations of nonnegative transformations $g:\mathbb R\to [0,\infty)$ of a linear model $\vector{\theta}^\top \vector{\psi}(\cdot)$,
\begin{align*}
    p(\vector{x} \mid \vector{\theta}) &= \frac{g\big(\vector{\theta}^\top \vector{\psi}(\vector{x}) \big)}{z(\vector{\theta})}, \qquad z(\vector{\theta}) = \int_{\mathbb{X}} g\big(\vector{\theta}^\top  \vector{\psi}(\vector{x}) \big)\, \mu (d\vector{x}), \numberthis \label{eq:transformed_linear}
\end{align*}
parameterised by $\vector{\theta} \in \mathbbold{\Theta}$ and supported on a subset of $\mathbb{X}$, where $\mu$ is some appropriately defined base measure.
We call the set $\{ p(\cdot \mid \vector{\theta}) \}_{\vector{\theta} \in \mathbbold{\Theta}}$ a \emph{$g$-family}. 
Here the parameter space is a subset of all parameters which lead to a finite and nonzero normalising constant, $\mathbbold{\Theta} \subseteq \{ \vector{\theta} \in \mathbb{R}^n \mid 0 < z(\vector{\theta}) < \infty\}$.

\subsection{Two desiderata for probability model families}
\label{sec:desiderata}
\paragraph{Linked normalising constant, Hessian metric, Fisher information and divergence}
It is convenient if the normalising constant and its gradient and Hessian can also be directly applied to understand the geometry of the manifold of probability distributions in closed-form, without the requirement of computing any additional integrals. 
This means that only one unifying integral needs to be computed for both proper normalisation of the family of distributions and for understanding the limits of the estimation procedure.
Here we give a criteria for such a property to hold.
A Hessian metric is constructed as the Hessian of the convex Bregman generator $\phi$ of the parameter $\vector{\theta}$, and gives a Riemannian metric on the parameter space $\mathbbold{\Theta}$.
On the other hand, the Fisher information is the natural Riemannian metric on a manifold of probability distributions.
We desire that the convex generator depends on the parameter $\vector{\theta}$ only through the normalising constant $z(\vector{\theta})$, and the resulting metric is equal to a \emph{conformal transformation} of the Fisher information $\matrix{G}(\vector{\theta})$.
In other words, we require that
\begin{equation}
\label{eq:d1_}
\begin{alignedat}{3}
    &\exists \chi:(0, \infty) \to \mathbb{R}  \quad &&\text{ such that} \quad \phantom{\nabla^2}\phi(\vector{\theta}) &&= \chi\big(z(\vector{\theta})\big), \text{ and} \\
    &\exists c:(0, \infty) \to (0, \infty) \quad &&\text{ such that} \quad \nabla^2 \phi(\vector{\theta}) &&= c\big( z(\vector{\theta}) \big) \matrix{G}(\vector{\theta}). 
\end{alignedat}
\tag{Desideratum 1}
\end{equation}
Examples of families satisfying~\ref{eq:d1_} include exponential families, where $\chi=\log$, $g=\exp$ and $c=\text{Id}$, and $q$-exponential families~\citep{amari2011geometry}, which are not constructed as transformations $g$ of a linear model, but nevertheless admit a Fisher information which is conformally equivalent to a Hessian metric. 
Other examples are squared families, which satisfy~\ref{eq:d1_} as demonstrated in \S~\ref{sec:squared_fam}, with $\chi=\text{Id}$, $g=(\cdot)^2$ and $c=(\cdot)^{-1}$.

An additional property, related to~\ref{eq:d1_}, is that the Bregman divergence induced by the generator is equal to a statistical divergence over the family. This is related because the Hessian metric induced by the Bregman generator can be seen as the infinitesimal of the Bregman divergence, and if the statistical divergence were an $f$-divergence, the statistical divergence would be seen as the infinitesimal quadratic form of the statistical divergence.
Let $\overline{\mathbbold{\Theta}} \subseteq \mathbbold{\Theta}$ such that $\{p(\vector{x} \mid \vector{\theta}) \}_{\vector{\theta} \in \overline{\mathbbold{\Theta}}} = \{p(\vector{x} \mid \vector{\theta}) \}_{\vector{\theta} \in \mathbbold{\Theta}}$. 
That is, $\overline{\mathbbold{\Theta}}$ is a subset of the parameter space that does not restrict the family of probability distributions.
We also require that the Bregman divergence generated by $\phi$, when evaluated on $\overline{\mathbbold{\Theta}} \times \overline{\mathbbold{\Theta}}$ is a statistical divergence,
\begin{equation}
\label{eq:d2b}
    d_\phi( \vector{\theta}: \vector{\theta}') = \widetilde{D}(p: p'), \quad \forall  \vector{\theta}, \vector{\theta}' \in \overline{\mathbbold{\Theta}},
\end{equation}
for some statistical divergence $\widetilde{D}$.

\paragraph{Integral-parameter factorisation}
For some $M$ which is polynomial in $n$ and $d$, there exists a function $\vector{f}_1:\mathbbold{\Theta} \to \mathbb{R}^M$ independent of $\vector{\psi}$ and $\mu$, and a function $\vector{f}_2:\mathbb{R}^n \to \mathbb{R}^M$ independent of $\vector{\theta}$, such that the normalising constant can be represented as an evaluation of an inner product in $\mathbb{R}^M$,
\begin{align*}
    z(\vector{\theta}) = \Bigg\langle \vector{f}_1(\vector{\theta}), \int_{\mathbb{X}} \vector{f}_2\big( \vector{\psi}(\vector{x}) \big)\, \mu(d\vector{x}) \Bigg\rangle_{\mathbb{R}^M}. \tag{Desideratum 2} \label{eq:d2}
\end{align*}
One optional additional property to~\ref{eq:d2} is that $\int_{\mathbb{X}} \vector{f}_2\big( \vector{\psi}(\vector{x}) \big)\, \mu(d\vector{x})$ is available in closed-form.
Even without this additional property,~\ref{eq:d2} is attractive because one integral factor may be used to recompute the normalising constant of the entire family, since the normalising constant of each density belonging to the family only changes through the parameter factor. 
This is useful in designing computationally effective practical algorithms.
For example, if point estimates for parameters are sought using gradient-based optimisers, the integral factor need only be computed once, and parameter updates over the course of the gradient-based optimisation routine do not require recomputing a parameter-independent integral. 
As shown in \S~\ref{sec:squared_fam}, squared families satisfy~\ref{eq:d2}, with $\vector{f}_1$ and $\vector{f}_2$ both being the standard outer product in $\mathbb{R}^n$. 
To the best of our knowledge, no other previously studied family satisfies this integral-parameter factorisation.

\subsection{$g$-families via nonnegative transformations of linear models}

Given the two desiderata presented in Section~\ref{sec:desiderata}, and the fact that we have shown in \S~\ref{sec:squared_fam} that squared families satisfy both of them, a natural question is: which other families also satisfy both desiderata? It turns out that even-order monomial families (of which squared families are a prominent special case) satisfy both desiderata.
Exponential families do not obviously satisfy~\ref{eq:d2}.
More generally, only positively homogeneous families and exponential families satisfy~\ref{eq:d1_}.

\subsubsection{Regularity conditions}
In this section, we will utilise some mild regularity conditions on $g$, $\vector{\psi}$ and $\vector{\mu}$, as detailed in Assumptions~\ref{ass:reg_g} and~\ref{ass:reg_psi}.
It is likely that some of these regularity conditions can be relaxed in future work.
We first state our regularity requirements on $g$.
\begin{assumption}
\label{ass:reg_g}
\mbox{}
\begin{itemize}
    \item 
    $g:\mathbb{R} \to [0,\infty)$, and takes values of $0$ at a countable number of points.
    \item The function $g$ is measurable (with respect to $\mu$).
    \item The function $g$ is twice continuously differentiable. 
    \item $g(1)=1$.
\end{itemize}
\end{assumption}
The first condition is conservatively strong and avoids having to define some complicated support in terms of the range of $\vector{\psi}$ and the parameter set $\mathbbold{\Theta}$. 
The second condition is a strict requirement for applying standard tools from probability theory.
The third condition allows for various classical tools surrounding the Fisher information and asymptotic normality of estimators to be applied, although it might be possible to relax with the more advanced notion of local asymptotic normality~\citep[for example]{LeCam2000}.
The last condition is without loss of generality and fixes the scale of $g$, since multiplying any $g$ by a constant results in the same density.
We now state our regularity requirements on $\vector{\psi}$.
\begin{assumption}
\label{ass:reg_psi}
\mbox{}
\label{ass:feature_regularity}
\begin{itemize}
    \item  The statistic $\vector{\psi}$ is not $\mu$-almost everywhere proportional to a constant, that is, there does not exist a $\psi:\mathbb{X} \to \mathbb{R}$ and $\vector{a}\in \mathbb{R}^n$ such that $\vector{\psi}(\vector{x}) = \psi(\vector{x}) \vector{a}$ for some set with nonzero measure (with respect to $\mu$).
    %\item For any two $\vector{\theta}, \vector{\theta}' \in \mathbbold{\Theta}$, the intersection $\{ \vector{x} \mid \vector{\theta}'^\top \vector{\psi}(\vector{x}) \neq 0\} \cap \{ \vector{x} \mid \vector{\theta}^\top \vector{\psi}(\vector{x}) \neq 0\}$ has nonzero measure with respect to $\mu$.
    \item For any $\vector{\theta}$ the set $\mathbb{A}_{\vector{\theta}} = \{\vector{\theta}^\top \vector{\psi}(\vector{x}) \mid \vector{x} \in \mathbb{X}\} \subseteq \mathbb{R}$ contains an open subset containing $0$.
    \item The last dimension $\psi_n(\vector{x})$ is constant (say $1$), and the last coordinate of the parameter space is $\mathbb{R}$.
\end{itemize}
\end{assumption}
The first condition rules out features $\vector{\psi}$ which always point in the same direction, i.e. features whose useful information amounts to their norm. 
The second condition essentially requires that any predictor $\vector{\theta}^\top \vector{\psi}(\vector{x})$ can at some point in $\mathbb{X}$ become negative.
The final condition is similar to having a bias in a neural network. 
This bias is helpful in ensuring the model is sufficiently rich to capture all functions of interest.

\subsection{Orthogonal singularity and Fisher information}
\label{sec:g_singular}
For a $g$-family of the form~\eqref{eq:transformed_linear}, as shown in Appendix~\ref{app:fisher_gfam}, the Fisher information~\eqref{eq:FIM} always admits a decomposition as a difference of a PSD matrix and a rank-$1$ PSD matrix,
\begin{align*}
    \matrix{G}_{p}(\vector{\theta}) &= \mathbb{E}_{p(\cdot \mid \vector{\theta})} \Big[ \frac{\vector{\psi}(\vector{\rv{x}}) \vector{\psi}(\vector{\rv{x}})^\top g'\big( \vector{\theta}^\top \vector{\psi}(\vector{\rv{x}}) \big)^2}{g\big( \vector{\theta}^\top \vector{\psi}(\vector{\rv{x}}) \big)^2 } \Big]  -  \frac{1}{ z(\vector{\theta})^2} \nabla z(\vector{\theta}) \nabla z(\vector{\theta})^\top. \numberthis \label{eq:FIM2}
\end{align*}
Even if the first term is strictly positive definite, the second term is at times large enough to make the resulting $\matrix{G}(\vector{\theta})$ have a zero eigenvalue (and more generally, the first term need not be strictly positive definite).

We call a $g$-family $\{ p(\cdot \mid \vector{\theta}) \}_{\vector{\theta} \in \mathbbold{\Theta}}$ \emph{orthogonally singular} if for all $\theta \in \mathbbold{\Theta}$ and $\vector{x} \in \mathbb{X}$,
\begin{align*}
\vector{\theta}^\top \nabla_{\vector{\theta}} \log p(\vector{x} \mid \vector{\theta}) = 0. \numberthis \label{eq:param_score_orth}
\end{align*}
Since the Fisher information is the expected outer product of scores,~\eqref{eq:param_score_orth} implies that $\vector{\theta}$ is an eigenvector of the Fisher information associated with an eigenvalue $0$.

\subsubsection{Orthogonally singular $g$-families as positively homogeneous families}
We provide an alternate characterisation of orthogonally singular $g$-families as positively homogeneous families.

\begin{restatable}{lemma}{poshomo}
\label{lemma:even_order}
    Suppose Assumption~\ref{ass:reg_g} holds. Then 
    $\{ p(\cdot \mid \vector{\theta}) \}_{\vector{\theta} \in \mathbbold{\Theta}} $ is an orthogonally singular $g$-family if and only if $g$ positively homogeneous.
    That is, there exists some $c>0$ and $k \geq 2$ such that 
    \begin{align*}
    g(a) = \begin{cases} \phantom{c}|a|^k, \quad a > 0 \\
    c |a|^k, \quad a \leq 0. \end{cases} \numberthis \label{eq:pos_hom}
\end{align*}
\end{restatable}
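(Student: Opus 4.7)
The plan is to prove both directions via a direct computation of the score and reduction to a first-order ODE on $g$.

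For the easier reverse direction, I would observe that if $g$ has the stated form, then $g(sa) = s^k g(a)$ for every $s > 0$ and $a \in \mathbb{R}$ (positive homogeneity of degree $k$, with the constant $c$ absorbed on the negative half-line because $s > 0$ preserves sign). This gives $z(s\vector{\theta}) = s^k z(\vector{\theta})$, so $p(\vector{x} \mid s\vector{\theta}) = p(\vector{x} \mid \vector{\theta})$ for all $s > 0$. Differentiating $\log p(\vector{x} \mid s\vector{\theta})$ with respect to $s$ at $s = 1$ via the chain rule yields $\vector{\theta}^\top \nabla_{\vector{\theta}} \log p(\vector{x} \mid \vector{\theta}) = 0$, which is the orthogonal singularity.

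For the forward direction, the plan is to exploit the functional form of the score. A direct computation gives
\begin{align*}
\vector{\theta}^\top \nabla_{\vector{\theta}} \log p(\vector{x} \mid \vector{\theta})
= \frac{g'\!\big(\vector{\theta}^\top \vector{\psi}(\vector{x})\big)}{g\!\big(\vector{\theta}^\top \vector{\psi}(\vector{x})\big)} \vector{\theta}^\top \vector{\psi}(\vector{x})
- \frac{\vector{\theta}^\top \nabla z(\vector{\theta})}{z(\vector{\theta})}.
\end{align*}
Setting this to zero for all $\vector{x} \in \mathbb{X}$ forces the function $a \mapsto a g'(a)/g(a)$ to be constant (in $a$) on the set $\mathbb{A}_{\vector{\theta}} = \{\vector{\theta}^\top \vector{\psi}(\vector{x}) : \vector{x} \in \mathbb{X}\}$, equal to some value $k(\vector{\theta})$. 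Since Assumption~\ref{ass:reg_psi} guarantees that each $\mathbb{A}_{\vector{\theta}}$ contains an open neighbourhood of $0$, any two sets $\mathbb{A}_{\vector{\theta}_1}$ and $\mathbb{A}_{\vector{\theta}_2}$ overlap, so $k(\vector{\theta}) \equiv k$ is independent of $\vector{\theta}$. Furthermore, $\mathbb{A}_{s\vector{\theta}} = s\mathbb{A}_{\vector{\theta}}$, so letting $s$ range over $(0,\infty)$ sweeps out all of $\mathbb{R} \setminus \{0\}$, yielding the ODE $a g'(a) = k g(a)$ globally (away from the countable zero set of $g$).

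Solving this separable ODE on each half-line gives $g(a) = A_+ |a|^k$ for $a > 0$ and $g(a) = A_- |a|^k$ for $a < 0$; the normalisation $g(1) = 1$ fixes $A_+ = 1$, and setting $c := A_-$ recovers the stated form. Finally, to conclude $k \geq 2$, I would invoke Assumption~\ref{ass:reg_g} that $g$ is $C^2$ on all of $\mathbb{R}$: the function $|a|^k$ (or a positive multiple thereof) is twice continuously differentiable at $a = 0$ precisely when $k \geq 2$, ruling out smaller exponents.

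\textbf{Main obstacle.} The subtle step is the argument that $k(\vector{\theta})$ is globally constant and that the ODE extends to all of $\mathbb{R}$. A priori, if the sets $\mathbb{A}_{\vector{\theta}}$ for different $\vector{\theta}$ were disjoint, one could define a $g$ whose ratio $ag'/g$ takes different constant values on different regions. The two clauses of Assumption~\ref{ass:reg_psi} on $\vector{\psi}$ (richness of the range of $\vector{\theta}^\top \vector{\psi}$ near $0$, together with the presence of the constant coordinate) precisely rule this out, and I would use them to chain together local ODEs into a single global one. The subsequent reconciliation of the two independent constants of integration $A_\pm$ at $a=0$ via the $C^2$ hypothesis is what pins down $k \geq 2$.
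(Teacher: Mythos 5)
Your proposal is correct and follows essentially the same route as the paper's proof: compute $\vector{\theta}^\top\nabla_{\vector{\theta}}\log p$, observe that orthogonal singularity forces $a\,g'(a)=k(\vector{\theta})\,g(a)$ on the set $\mathbb{A}_{\vector{\theta}}$, use the open-neighbourhood-of-zero clause of Assumption~\ref{ass:reg_psi} to make $k$ and the half-line constants independent of $\vector{\theta}$, solve the Euler ODE on each half-line, and invoke twice continuous differentiability at the origin to force $k\geq 2$. Your two small variations --- proving the reverse direction by differentiating $p(\vector{x}\mid s\vector{\theta})=p(\vector{x}\mid\vector{\theta})$ in $s$, and extending the ODE globally via $\mathbb{A}_{s\vector{\theta}}=s\,\mathbb{A}_{\vector{\theta}}$ rather than leaving the extension implicit --- are cosmetic and do not change the argument.
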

The case $k=0$ is excluded because this implies that the family is a singleton with a parameter that does not have any effect, and the Fisher information is exactly $\matrix{0}$.

\begin{remark}[Bias reparameterisation]
    For any $c_2 \in \mathbb{R}$, since the last dimension of $\vector{\psi}$ is $1$ and the last coordinate of the parameter space is allowed to be any element of $\mathbb{R}$, we may reparameterise $g$-families to explicitly include a constant bias term $c_2 \in \mathbb{R}$. 
    For positively homogeneous families, this reparameterisation amounts to
    \begin{align*}
    g(a) = \begin{cases} \phantom{c_1}|a + c_2|^k, \quad a + c_2 > 0 \\
    c_1 |a + c_2|^k, \quad a + c_2 \leq 0. \end{cases}
\end{align*}
The orthogonal singularity~\eqref{eq:param_score_orth} reparameterises as $\big(\vector{\theta}+(0;,\ldots;0;c_2)  \big)^\top \nabla_{\vector{\theta}} \log p(\vector{x} \mid \vector{\theta}) = 0$, which may be verified directly or by noting that the Jacobian of the transformation associated with $\vector{\theta} \mapsto \big(\vector{\theta}+(0;,\ldots;0;c_2)  \big)$ is the identity matrix.
\end{remark}

\begin{remark}
    The only infinitely differentiable positively homogeneous families are even-order monomial families, $g(a) = a^{k}$ where $k \in \{2, 4, \ldots\}$.
\end{remark}

\subsubsection{Removing orthogonal singularity with dimension-augmentation}
\label{sec:dim_aug}

As a consequence, combining~\eqref{eq:FIM2} with~\eqref{eq:additive_fisher}, we can attempt to cancel out the rank-$1$ negative contribution in~\eqref{eq:FIM2}.
We call $p(\vector{x},a  \mid \vector{\theta}) = p(\vector{x} \mid \vector{\theta}) p(a \mid \vector{\theta})$ where $p(\vector{x} \mid \vector{\theta})$  belongs to a $g$-family and $ p(a \mid \vector{\theta}) = \mathcal{N}\big(a \mid \log z(\vector{\theta}), \sigma^2\big)$ a \emph{dimension-augmented $g$-family}.
\begin{restatable}{lemma}{gaussian_fim}
    \label{lemma:gaussian_fim}
    Suppose Assumption~\ref{ass:reg_g} holds. 
    The Fisher information of a univariate Gaussian with mean $\log z(\vector{\theta})$ and variance $\sigma^2>0$ is given by the rank-$1$ matrix $\frac{1}{\sigma^2 z(\vector{\theta})^2} \nabla z(\vector{\theta}) \nabla z(\vector{\theta})^\top$.
    Let $p_1(\vector{x} \mid \vector{\theta})$ belong to a $g$-family and $ p_2(a \mid \vector{\theta}) = \mathcal{N}\big(a \mid \log z(\vector{\theta}), \sigma^2\big)$. 
    Consequently, the Fisher information of the dimension-augmented density $p_1(\vector{x} \mid \vector{\theta}) p_2(a \mid \vector{\theta})$ is
    $$ \matrix{G}_{p_1(\cdot) p_2(\cdot)}(\vector{\theta}) = \mathbb{E}_{p(\cdot \mid \vector{\theta})} \Big[ \frac{\vector{\psi}(\vector{\rv{x}}) \vector{\psi}(\vector{\rv{x}})^\top g'\big( \vector{\theta}^\top \vector{\psi}(\vector{\rv{x}}) \big)^2}{g\big( \vector{\theta}^\top \vector{\psi}(\vector{\rv{x}}) \big)^2 } \Big]  +  \frac{\sigma^{-2} - 1}{ z(\vector{\theta})^2} \nabla z(\vector{\theta}) \nabla z(\vector{\theta})^\top. $$
\end{restatable}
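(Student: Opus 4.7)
The plan is to treat the two claims in sequence. First I would compute the Fisher information of the Gaussian factor $p_2(a\mid\vector{\theta}) = \mathcal{N}(a\mid \log z(\vector{\theta}), \sigma^2)$ by direct differentiation: since only the mean depends on $\vector{\theta}$, the score is
$$\nabla_{\vector{\theta}} \log p_2(a\mid\vector{\theta}) = \frac{a - \log z(\vector{\theta})}{\sigma^2} \cdot \frac{\nabla z(\vector{\theta})}{z(\vector{\theta})},$$
using the chain rule on $\log z$. Taking the outer product and the expectation under $p_2$, and using $\mathbb{E}_{p_2}[(a-\log z(\vector{\theta}))^2] = \sigma^2$, yields
$$\matrix{G}_{p_2}(\vector{\theta}) = \frac{1}{\sigma^4}\cdot\sigma^2\cdot\frac{\nabla z(\vector{\theta})\nabla z(\vector{\theta})^\top}{z(\vector{\theta})^2} = \frac{1}{\sigma^2 z(\vector{\theta})^2}\nabla z(\vector{\theta})\nabla z(\vector{\theta})^\top,$$
which is the first claim.

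For the second claim I would invoke the additivity property~\eqref{eq:additive_fisher} of Fisher information under product distributions of independent random elements, so that
$$\matrix{G}_{p_1(\cdot)p_2(\cdot)}(\vector{\theta}) = \matrix{G}_{p_1}(\vector{\theta}) + \matrix{G}_{p_2}(\vector{\theta}).$$
The term $\matrix{G}_{p_1}(\vector{\theta})$ is already given in closed form by~\eqref{eq:FIM2}, namely as the expectation term minus $z(\vector{\theta})^{-2}\nabla z(\vector{\theta})\nabla z(\vector{\theta})^\top$. Adding $\matrix{G}_{p_2}(\vector{\theta})$ from the previous step, the two rank-one contributions combine with coefficient $\sigma^{-2} - 1$, producing exactly the stated expression.

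There is no real obstacle here; the only thing to be careful about is the interchange of differentiation and integration implicit in writing down the score of $p_1$ via~\eqref{eq:FIM2}, but this is already assumed to be valid in the derivation of~\eqref{eq:FIM2} under Assumption~\ref{ass:reg_g} (twice continuous differentiability of $g$). The rest is a routine moment computation for a Gaussian and an application of a property already stated earlier in the paper.
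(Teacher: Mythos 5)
Your proposal is correct and follows essentially the same route as the paper: the paper computes the Gaussian score $\frac{1}{\sigma^2}(a-\log z(\vector{\theta}))\frac{\nabla z(\vector{\theta})}{z(\vector{\theta})}$ directly, takes the expected outer product to obtain $\frac{1}{\sigma^2 z(\vector{\theta})^2}\nabla z(\vector{\theta})\nabla z(\vector{\theta})^\top$, and then combines~\eqref{eq:FIM2} with the additivity property~\eqref{eq:additive_fisher} exactly as you do. No gaps.
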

Note that taking $\sigma \to \infty$ recovers the Fisher information of the general $g$-family (with no dimension-augmentation).
The marginal distribution $p(\vector{x} \mid \vector{\theta})$ of any dimension-augmented $g$-family is a $g$-family. 
In this sense, dimension-augmented $g$-families generalise $g$-families.
\begin{restatable}{lemma}{dimaugregular}
    \label{lemma:dim_aug_regular}
        Suppose Assumptions~\ref{ass:reg_g} and~\ref{ass:reg_psi} hold and $\sigma^{-2} \geq 1$.
    A dimension-augmented orthogonally singular $g$-family is regular. That is, a dimension-augmented positively homogeneous family is regular.
\end{restatable}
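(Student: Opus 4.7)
The plan is to simplify the Fisher information of Lemma~\ref{lemma:gaussian_fim} using the positively homogeneous structure granted by Lemma~\ref{lemma:even_order}, decompose it as a sum of two positive semidefinite pieces, and then show it has trivial nullspace.

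First I would invoke Lemma~\ref{lemma:even_order} (using Assumption~\ref{ass:reg_g}) to conclude that $g$ is positively homogeneous of some order $k \geq 2$, so that $g'(a)/g(a) = k/a$ for $a \neq 0$ (the scaling constant $c$ on the negative branch cancels in the ratio). Substituting into Lemma~\ref{lemma:gaussian_fim} yields
\begin{align*}
    \matrix{G}_{p_1(\cdot) p_2(\cdot)}(\vector{\theta}) = k^2\, \mathbb{E}_{p(\cdot\mid\vector{\theta})}\!\left[\frac{\vector{\psi}(\vector{\rv{x}})\vector{\psi}(\vector{\rv{x}})^\top}{(\vector{\theta}^\top \vector{\psi}(\vector{\rv{x}}))^2}\right] + \frac{\sigma^{-2}-1}{z(\vector{\theta})^2}\,\nabla z(\vector{\theta})\nabla z(\vector{\theta})^\top,
\end{align*}
and the hypothesis $\sigma^{-2} \geq 1$ makes both summands positive semidefinite.

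Next I would analyse the nullspace. Suppose $\vector{v}^\top \matrix{G}_{p_1(\cdot) p_2(\cdot)}(\vector{\theta}) \vector{v} = 0$. Since each summand is a PSD quadratic form, each must vanish separately; the first summand vanishing gives $\mathbb{E}_p[(\vector{v}^\top \vector{\psi}(\vector{\rv{x}}))^2 / (\vector{\theta}^\top \vector{\psi}(\vector{\rv{x}}))^2] = 0$, so $\vector{v}^\top \vector{\psi}(\vector{x}) = 0$ for $p(\cdot \mid \vector{\theta})$-almost every $\vector{x}$. Because $g$ vanishes only on a countable set by Assumption~\ref{ass:reg_g}, the density $p(\cdot \mid \vector{\theta})$ and the base measure $\mu$ are mutually absolutely continuous outside this null set, so the vanishing holds $\mu$-almost everywhere.

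Finally, to deduce $\vector{v} = \vector{0}$, I would appeal to Assumption~\ref{ass:reg_psi}. Writing $\vector{v} = (\vector{v}_{1:n-1}, v_n)$ and using bullet three ($\psi_n \equiv 1$), the identity rearranges to $\vector{v}_{1:n-1}^\top \vector{\psi}_{1:n-1}(\vector{x}) = -v_n$ $\mu$-almost everywhere. If $\vector{v}_{1:n-1} = \vector{0}$, then $v_n = 0$ follows immediately and $\vector{v} = \vector{0}$. Otherwise, $\vector{\psi}_{1:n-1}$ would be confined to a codimension-one affine subspace on a $\mu$-positive set; combined with bullet two applied at $\vector{\theta} = \vector{v}$ (the range of $\vector{v}^\top \vector{\psi}$ contains an open neighbourhood of $0$, forcing nonzero values) and bullet one (ruling out collapse onto a one-dimensional direction on positive-measure sets), one obtains a contradiction. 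The main obstacle is precisely this last step: bridging the ``open range'' clause of Assumption~\ref{ass:reg_psi} to a statement about $\mu$-positive measure, since the image being an open set around $0$ does not a priori preclude all nonzero values being attained on a $\mu$-null set. I anticipate that careful joint use of the three bullets together with the bias coordinate structure — possibly iterating the non-proportionality clause across coordinates — will be required to close this gap rigorously.
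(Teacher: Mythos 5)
Your overall strategy is the same as the paper's: decompose the dimension-augmented Fisher information into two positive semidefinite pieces (using $\sigma^{-2}\geq 1$ for the rank-one piece) and show the quadratic form cannot vanish on a nonzero vector. The paper organises this slightly differently, splitting into two cases: for $\vector{\theta}'$ proportional to $\vector{\theta}$ it uses the additive decomposition $\matrix{G}_{p_1}+\matrix{G}_{p_2}$ and gets strict positivity from the Gaussian component via Euler's identity $\vector{\theta}^\top\nabla z(\vector{\theta}) = k\,z(\vector{\theta})$; for $\vector{\theta}'$ not proportional to $\vector{\theta}$ it reduces the first summand to $\frac{k^2}{z(\vector{\theta})}\int|\vector{\theta}'^\top\vector{\psi}(\vector{x})|^2|\vector{\theta}^\top\vector{\psi}(\vector{x})|^{k-2}\,\mu(d\vector{x})$, exactly as you do. Your uniform treatment (the case $\vector{v}\propto\vector{\theta}$ falls out automatically because $\mathbb{E}_p[(\vector{\theta}^\top\vector{\psi})^2/(\vector{\theta}^\top\vector{\psi})^2]=1>0$) is, if anything, cleaner. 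One small imprecision: the countability of the zero set of $g$ lives in the codomain of $\vector{\theta}^\top\vector{\psi}$, so $\{\vector{x}:\vector{\theta}^\top\vector{\psi}(\vector{x})=0\}$ need not be $\mu$-null; the correct conclusion from the vanishing of the first summand is that $\vector{v}^\top\vector{\psi}=0$ $\mu$-a.e.\ on the positive-measure set $\{\vector{\theta}^\top\vector{\psi}\neq 0\}$, not $\mu$-a.e.\ on all of $\mathbb{X}$.

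Regarding the step you flag as the main obstacle: the paper does not do any of the careful bridging you anticipate. It closes the argument in one line by asserting that, by Assumption~\ref{ass:reg_psi}, $\vector{\theta}'^\top\vector{\psi}$ and $\vector{\theta}^\top\vector{\psi}$ must simultaneously take nonzero values on a set of nonzero measure --- i.e.\ it treats the non-degeneracy of the features as precisely the content of that assumption. So the step you could not rigorously complete is the step the paper completes by fiat. Your underlying concern is legitimate: the literal first bullet (non-proportionality to a single direction on positive-measure sets) rules out $\vector{\psi}$ collapsing onto a one-dimensional subspace but does not, for $n\geq 3$, rule out $\vector{\psi}$ lying $\mu$-a.e.\ in the hyperplane $\vector{v}^\perp$ on the support, and the ``open range'' bullet is a statement about images rather than about measure. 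This gap is present in the paper's proof as well; you should not regard your inability to derive the claim from the bullets as a defect specific to your argument, but you should state the needed non-degeneracy (for every $\vector{v}\neq\vector{0}$, $\vector{v}^\top\vector{\psi}$ is nonzero on a $\mu$-positive subset of $\{\vector{\theta}^\top\vector{\psi}\neq 0\}$) explicitly and cite it as the intended reading of Assumption~\ref{ass:reg_psi}, rather than leaving the step open.
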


%It is much easier to ensure dimension-augmented $g$-families have a non-singular Fisher information compared with their general $g$-family counterparts. Unfortunately
In practice, constructing a sample from $p(\vector{x},a  \mid \vector{\theta})$ belonging to a dimension-augmented $g$-family given a sample from $p(\vector{x}  \mid \vector{\theta})$ belonging to a $g$-family requires knowing or otherwise constraining the normalising constant $z(\vector{\theta})$.
This is not a problem for positively homogeneous families, because $p(\vector{x} \mid \vector{\theta}) = p(\vector{x} \mid s \vector{\theta})$ and $z(s \vector{\theta}) = |s|^k z(\vector{\theta})$ for any $s > 0$, so $z$ may be chosen or constrained arbitrarily without imposing a condition on the marginal $g$-family  $p(\vector{x}  \mid \vector{\theta})$.

%\subsection{Computational and computational-geometric properties}
Finally, we find that of all dimension-augmented families, only exponential families and $g$ families have a linked normalising constant and Fisher information via~\ref{eq:d1_}. 
\begin{restatable}{theorem}{characterisation}
    \label{thm:characterisation}
Suppose Assumptions~\ref{ass:reg_g} and~\ref{ass:reg_psi} hold. 
Exponential families and dimension-augmented positively homogeneous families are the only dimension-augmented $g$-families which satisfy~\ref{eq:d1_} for all $\mu$ such that $\mu(\mathbb{X}) < \infty$. For dimension-augmented positively homogeneous families,~\ref{eq:d1_} reads as
\begin{align*}
    \phi(\vector{\theta}) = z(\vector{\theta}) \qquad \nabla^2 \phi(\vector{\theta}) = \frac{k-1}{k} z(\vector{\theta}) \matrix{G}(\vector{\theta}).
\end{align*}
\end{restatable}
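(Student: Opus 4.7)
The plan is to expand the identity in Desideratum~\ref{eq:d1_} into a functional equation for $g$ and leverage the freedom to choose $\mu$ to reduce it to a pointwise ODE whose only solutions are $g=\exp$ and positively homogeneous $g$. Writing $\phi(\vector{\theta}) = \chi(z(\vector{\theta}))$, the chain rule gives
\begin{equation*}
    \nabla^2\phi(\vector{\theta}) \;=\; \chi''\!\big(z(\vector{\theta})\big)\,\nabla z(\vector{\theta})\nabla z(\vector{\theta})^\top \;+\; \chi'\!\big(z(\vector{\theta})\big)\,\nabla^2 z(\vector{\theta}),
\end{equation*}
where $\nabla z(\vector{\theta}) = \int g'(\vector{\theta}^\top\vector{\psi})\vector{\psi}\,\mu(d\vector{x})$ and $\nabla^2 z(\vector{\theta}) = \int g''(\vector{\theta}^\top\vector{\psi})\vector{\psi}\vector{\psi}^\top\,\mu(d\vector{x})$. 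I would substitute the explicit form of $\matrix{G}(\vector{\theta})$ from Lemma~\ref{lemma:gaussian_fim} into $\nabla^2\phi = c(z)\matrix{G}$ to obtain the master equation.

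Next I would specialise to a two-atom measure $\mu = \delta_{\vector{x}_1} + \delta_{\vector{x}_2}$, using Assumption~\ref{ass:reg_psi} to arrange that $\vector{\psi}_i := \vector{\psi}(\vector{x}_i)$ are linearly independent (so that $\vector{\psi}_1\vector{\psi}_1^\top$, $\vector{\psi}_2\vector{\psi}_2^\top$, and $\vector{\psi}_1\vector{\psi}_2^\top + \vector{\psi}_2\vector{\psi}_1^\top$ are linearly independent as matrices). With $a_i = \vector{\theta}^\top\vector{\psi}_i$ and $z = g(a_1)+g(a_2)$, matching the coefficient of the cross term forces
\begin{equation*}
    z^2\,\chi''(z) \;=\; (\sigma^{-2}-1)\,c(z),
\end{equation*}
while matching a diagonal coefficient $\vector{\psi}_i\vector{\psi}_i^\top$ and eliminating $\chi''$ via the above yields
\begin{equation*}
    \chi'(z)\,z\,\frac{g(a_i)\,g''(a_i)}{g'(a_i)^2} \;=\; c(z).
\end{equation*}
Fixing $a_1$ and varying $a_2$ (so that $z$ sweeps an interval) forces $g(a)g''(a)/g'(a)^2$ to be independent of $a$; call this constant $\lambda$.

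Solving the ODE $g(a)g''(a) = \lambda g'(a)^2$ via $u = (\log g)'$ reduces it to $u' = (\lambda-1)u^2$. The case $\lambda=1$ gives $u \equiv \alpha$, hence $g(a) = Ce^{\alpha a}$, i.e.\ an exponential family. The case $\lambda \neq 1$ gives $g(a) = E\,|(\lambda-1)a + D|^{1/(1-\lambda)}$; setting $k := 1/(1-\lambda)$ and applying the bias reparameterisation noted after Lemma~\ref{lemma:even_order} identifies this with a positively homogeneous family of order $k$ with $\lambda = (k-1)/k$. For the positively homogeneous case the stated formulas then follow by direct verification: the identity $g'' = \tfrac{k-1}{k}(g')^2/g$ gives $\nabla^2 z = \tfrac{k-1}{k}\int (g')^2/g\,\vector{\psi}\vector{\psi}^\top\mu(d\vector{x})$, and the cross-term constraint with $\chi = \mathrm{Id}$ (so $\chi''=0$) forces $\sigma^{-2}=1$, making the rank-$1$ Fisher contribution vanish so that $\tfrac{k-1}{k}z\,\matrix{G}(\vector{\theta})$ equals $\nabla^2 z = \nabla^2\phi(\vector{\theta})$.

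The hard part will be rigorously passing from the measure-integrated identity to the pointwise ODE. One must (i) simultaneously realise linearly independent feature vectors at points where $g'(a_i)\neq 0$, which is possible because $g$ vanishes on a countable set by Assumption~\ref{ass:reg_g} and the range of $\vector{\theta}^\top\vector{\psi}$ contains an open set by Assumption~\ref{ass:reg_psi}; and (ii) argue that by varying $\vector{\theta}$ the triple $(a_1,a_2,z)$ sweeps out a sufficiently rich set that the requirement $\chi'(z)\,z\,K(a_i) = c(z)$ for distinct $a_i$ mapped to the same $z$ forces $K := g g''/(g')^2$ to be constant. Minor additional care is needed to treat the two branches $\{a>0\}$ and $\{a\leq 0\}$ of positively homogeneous $g$ (where the proportionality constant $c>0$ in~\eqref{eq:pos_hom} can differ), and to handle the degenerate augmentation limit $\sigma\to\infty$ (no augmentation) under which exponential families recover the classical log-partition generator $\chi = \log$ with $c \equiv 1$.
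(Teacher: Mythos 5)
Your proposal is correct and follows the same overall architecture as the paper's proof: expand $\nabla^2\chi\big(z(\vector{\theta})\big)$ by the chain rule, substitute the dimension-augmented Fisher information from Lemma~\ref{lemma:gaussian_fim}, exploit the freedom in $\mu$ to decouple the identity into a relation among $\chi$, $c$ and $z$ plus the ODE $g\,g''=\lambda (g')^2$, and solve the latter via the log-derivative substitution to obtain exactly the exponential branch ($\lambda=1$) and the positively homogeneous branch ($\lambda\neq 1$, after the bias reparameterisation). Where you genuinely differ is in how the decoupling is executed. The paper writes both sides as integrals against an arbitrary finite $\mu$, appeals to the fundamental lemma of the calculus of variations to equate integrands, uses linear independence of $\nabla z(\vector{\theta})\nabla z(\vector{\theta})^\top$ and $\vector{\psi}(\vector{x})\vector{\psi}(\vector{x})^\top$ to match coefficients, and then separates the residual identity into a $\vector{\theta}$-dependent and an $\vector{x}$-dependent factor with a product constant $r=r_1r_2$. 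You instead test against explicit two-atom measures with linearly independent features and match the three independent matrix directions directly; your observation that the two diagonal relations force $g g''/(g')^2$ to agree at $a_1$ and $a_2$ for every realisable pair, hence to be constant, is a concrete and arguably more rigorous substitute for the paper's factor-separation step, which as written is informal. Both routes arrive at the same pair of functional equations and the same conclusion, including your check that $\chi=\mathrm{Id}$ forces $\sigma^{-2}=1$ and hence $\nabla^2 z=\frac{k-1}{k}z\matrix{G}(\vector{\theta})$. The caveats you flag (realising independent features where $g'\neq 0$, sweeping a rich enough set of pairs $(a_1,a_2)$, the two branches of $g$ across its zero, and the implicit constraint $k\geq 2$ from twice continuous differentiability) are precisely the points the published proof also treats only lightly, so nothing essential is missing relative to it.
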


Through the proof of Theorem~\ref{thm:characterisation}, we see that positively homogeneous families under the bias reparameterisation are very closely related to $q$-exponential families~\citep{naudts2004estimators,amari2011geometry,naudts2011generalised}, so it is helpful to clarify their differences.
There are two obvious ways to generalise the normalisation condition of exponential families: (1) by defining a normalising constant as the integral of $g$, as we have done with $g$-families, and (2) by implicitly defining a generalised log normalising function $A$ as a function such that $g\big( \vector{\theta}^\top \vector{\psi}(\vector{x}) - A(\vector{\theta})\big)$ integrates to $1$, as is done with $q$-exponential families and other deformed exponential families~\citep{naudts2011generalised}.
Such implicit definitions may cause an additional layer of intractability of the log normalising function --- it is no longer necessarily even expressed as an explicit (potentially still intractable) integral.
The discrete $q$-exponential families have parameter metrics which are conformally equivalent to the Fisher information~\citep[Theorem 4]{amari2011geometry}, however their conformal transformation is defined in terms of a so-called escort distribution rather than depending on the parameter only through the normalising constant.
Finally, we note that without dimension-augmentation, positively homogeneous families are always singular, whereas $q$-exponential families may not be singular.
Indeed, it is the controlled singularity that allows us to search beyond exponential families for $g$-families which satisfy~\ref{eq:d1_} while still allowing for estimation and analysis of estimation procedures.

\subsection{Normalising constant}
Theorem~\ref{thm:characterisation} identifies exponential families and dimension-augmented positively homogeneous families as special, in the sense that they are the only $g$-families which satisfy~\ref{eq:d1_}. 
When we additionally consider~\ref{eq:d2}, we can further remove exponential families and many positively homogeneous families.
\begin{remark}
    Even-order monomial families, a special case of positively homogeneous families~\eqref{eq:pos_hom} with $k$ an even integer and $c=1$,  satisfy~\ref{eq:d2}. 
    Using multiindex notation,
    \begin{align*}
    z(\vector{\theta}) &= \sum_{|\vector{\alpha}| = k} {k \choose \vector{\alpha}} \vector{\theta}^{\vector{\alpha}} \int_{\mathbb{X}} \vector{\psi}(\vector{x})^{\vector{\alpha}} \, \mu(d\vector{x}).
    \end{align*}
    Equivalently, using $k$-fold tensor product notation,
    \begin{align*}
        z(\vector{\theta})=\left\langle  \vector{\theta}^{\otimes k},\int_{\mathbb{X}}\vector{\psi}(\vector{x})^{\otimes k}\, \mu(d\vector{x})  \right\rangle. 
    \end{align*}
    For exponential families, a representation in the form of~\ref{eq:d2} does not obviously follow, because the exponential function admits an infinite monomial series representation. 
    We are not aware of any useful exponential families which admit an integral-parameter decomposition as in~\ref{eq:d2}.
    Similarly, general positively homogeneous families do not admit an obvious parameter-integral factorisation. 
\end{remark}

\iffalse
\ds{Is it easier to understand the above as: 
For $g(u)=u^{k}$, we have
\[
z(\boldsymbol{\theta})=\left\langle  \boldsymbol{\theta}^{\otimes k},\int_{\mathbb{X}}\boldsymbol{\psi}(\mathbf{x})^{\otimes k}\mu(d\mathbf{x})  \right\rangle. 
\]}
\fi

\section{Parameter estimation and density estimation}
\label{sec:estimation}
Using our derived geometry of squared families, we can analyse the errors of statistical estimation procedures. 
Here we examine arguably the most prevalent procedure, maximum likelihood estimation.
We do this in three contexts of increasing difficulty: a well-specified model, a misspecified model, and a setting where we aim to estimate an arbitrary target density using a universal approximator.

\subsection{Maximum likelihood estimation}
Let $\{ (\vector{x}_i, a_i)\}_{i=1}^N$ be sampled from some probability measure $q(\vector{x}) \mathcal{N}(a\mid 0, 1) \mu(d\vector{x}) da$, where $q$ belongs to a squared family $\{ p(\cdot\mid \vector{\theta})\}_{\vector{\theta} \in \mathbbold{\Theta}}$ with true parameter $\vector{\theta}_\ast\in \overline{\mathbbold{\Theta}}$, and consider the maximum likelihood estimator
\begin{align*}
    \hat{\vector{\theta}}_{N} &=   \argmin_{\vector{\theta}' \in \hat{\mathbbold{\Theta}}} \sum_{i=1}^N -\log p(\vector{x}_i \mid \vector{\theta}') + \frac{1}{2}(a_i - \log  \vector{\theta}'^\top \matrix{K}_{\mu, \vector{\psi}} \vector{\theta}')^2. \numberthis \label{eq:mle}
\end{align*}
%We specify some $\theta_1' \geq \varepsilon > 0$ in order to ensure the parameter space is compact so that standard textbook results on asymptotically Gaussian precision of maximum likelihood estimates apply, but this can likely be relaxed to $\theta_1' > 0$.
Here $\hat{\mathbbold{\Theta}} = \{ \vector{\theta} \in \mathbbold{\Theta} \mid \theta_1 \geq \epsilon > 0, \vector{\theta}^\top \matrix{K}_{\mu, \vector{\psi}} \vector{\theta} \leq R \}$ is a compact subset of $\mathbbold{\Theta}$, and $R > 1$. We use a compact space so that standard textbook results on asymptotically Gaussian precision of maximum likelihood estimates apply, but this can likely be relaxed.
Note that the constraint $\vector{\theta}^\top \matrix{K}_{\mu, \vector{\psi}} \vector{\theta} \leq R$ imposes no condition on the squared family in distribution space. The constraint $\theta_1 \geq \epsilon > 0$ can be made arbitrarily close to $\theta_1 > 0$, which would impose no condition on the squared family in distribution space.

Given data $\vector{x}_i$, such an optimisation objective can always be formed by artificially augmenting data $\vector{x}_i$ from $q$ with data $a_i$ from an independent standard Gaussian.
It is interesting to note that dimension augmentation has a regularisation-like effect, where the second term in~\eqref{eq:mle} acts like a regulariser which encourages the normalising constant to be close to $1$ with random perturbations.

It is well-known that the MLE satisfies asymptotic normality, in the sense that 
\begin{align*}
    \sqrt{N}\big( \hat{\vector{\theta}}_N - \vector{\theta}_\ast \big) \stackrel{d}{\longrightarrow} \mathcal{N}\Big(0, \frac{1}{4} \matrix{K}_{\mu,\vector{\psi}}^{-1}  \Big).
\end{align*}
Here we have used the closed-form expression of the Fisher information, noting that $z(\vector{\theta}_\ast) = 1$.

\subsection{Maximum likelihood estimation under model misspecification}
Quasi-maximum likelihood estimation is distinguished from maximum likelihood estimation when the density $q$ from which data is sampled does not belong to the family of probability distributions from which the best estimate is taken.
Let $\{ (\vector{x}_i, a_i)\}_{i=1}^N$ be sampled from some probability measure $q(\vector{x}) \mathcal{N}(a\mid 0, 1) \mu(d\vector{x}) da$, and consider the quasi maximum likelihood estimator
\begin{align*}
    \hat{\vector{\theta}}_{N} &=   \argmin_{\vector{\theta}' \in \hat{\mathbbold{\Theta}} } \sum_{i=1}^N -\log p(\vector{x}_i \mid \vector{\theta}') + \frac{1}{2}(a_i - \log  \vector{\theta}'^\top \matrix{K}_{\mu, \vector{\psi}} \vector{\theta}')^2. \numberthis \label{eq:qmle}
\end{align*}
%Note that no matter the choice of $\lambda$, the induced distribution $p(\vector{x}_i \mid \hat{\vector{\theta}}_N)$ does not change, but $\lambda$ controls the normalising constant and sets a scale.
We also define the infinite sample version of~\eqref{eq:qmle}, the parameter of the KL projection
\begin{align*}
    \vector{\theta}_\ast &= \argmin_{\vector{\theta}' \mid \theta_1' \geq \varepsilon > 0}  \kl [q(\vector{x}): p(\vector{x}_i \mid \vector{\theta}')  ] + \kl [\mathcal{N}(a\mid 0, 1): \mathcal{N}(a\mid \log  \vector{\theta}'^\top \matrix{K}_{\mu, \vector{\psi}} \vector{\theta}' , 1)] \numberthis \label{eq:kl_minimiser} \\
    &= \argmin_{\substack{\theta_1' \geq \varepsilon > 0 \\  \vector{\theta}'^\top \matrix{K}_{\mu, \vector{\psi}} \vector{\theta}' = 1}}  \kl [q(\vector{x}): p(\vector{x}_i \mid \vector{\theta}')], 
\end{align*}
where equality is due to the fact that due to absolute homogeneity, $\vector{\theta}'^\top \matrix{K}_{\mu, \vector{\psi}} \vector{\theta}'$ can be chosen independently of $p(\vector{x}_i \mid \vector{\theta}')$, and in particular can be chosen to equal $1$ without imposing a constraint in model space, therefore minimising the second term in the objective. 
In order to analyse this setting, we assume that the infinite data variant is well-posed.
\begin{assumption}
\label{ass:well_posed}
    A unique minimiser $\vector{\theta}_\ast$ of~\eqref{eq:kl_minimiser} exists in the interior of the constraint set, $\vector{\theta}_\ast \in \hat{\mathbbold{\Theta}}$.
\end{assumption}
We also assume that the second moments of the feature under the likelihood ratio between the target and the KL projection are finite.
\begin{assumption}
\label{ass:ratio}
    Define a ratio base measure $r(\vector{x}) \mu (d\vector{x}) = \frac{q(\vector{x})}{ p(\vector{x}' \mid \vector{\theta}_\ast)} \mu(d\vector{x})$, and suppose that 
    \begin{align*}
        \matrix{K}_{r(\cdot) \mu, \vector{\psi}} = \int_{\mathbb{X}} \psi(\vector{x}) \psi(\vector{x})^\top r(\vector{x}) \mu(d\vector{x})
    \end{align*}
    exists and is finite.
\end{assumption}

\begin{restatable}{theorem}{mlemisspecified}
    Suppose Assumptions~\ref{ass:strictly_pd},~\ref{ass:well_posed} and~\ref{ass:ratio} hold. The following convergence in probability holds,
    \begin{align*}
        \sqrt{N} \Big| \kl\big(q:p(\cdot\mid \hat{\vector{\theta}}_N) \big) - \kl\big(q:p(\cdot\mid \vector{\theta}_\ast) \big) \Big|  \stackrel{p}{\longrightarrow} 0.
    \end{align*}
\end{restatable}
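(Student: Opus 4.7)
The plan is to combine $\sqrt{N}$-consistency of the quasi-maximum likelihood estimator $\hat{\vector{\theta}}_N$ with a second-order Taylor expansion of the population KL divergence around its unique interior minimiser $\vector{\theta}_\ast$. Because $\vector{\theta}_\ast$ is an interior minimiser of the population objective~\eqref{eq:kl_minimiser} (Assumption~\ref{ass:well_posed}), the gradient of $\vector{\theta} \mapsto \kl[q:p(\cdot\mid\vector{\theta})]$ vanishes there, so the difference in KL divergences at $\hat{\vector{\theta}}_N$ and $\vector{\theta}_\ast$ is quadratic in $\Vert \hat{\vector{\theta}}_N - \vector{\theta}_\ast \Vert$. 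Combined with the $O_p(1/\sqrt{N})$ rate for the estimator, this yields an $O_p(1/N)$ bound on the difference, and $\sqrt{N}$ times the difference is therefore $O_p(1/\sqrt{N}) = o_p(1)$.

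I would carry this out in three steps. First, I would establish consistency $\hat{\vector{\theta}}_N \stackrel{p}{\longrightarrow} \vector{\theta}_\ast$ by the standard argmin-consistency argument: compactness of $\hat{\mathbbold{\Theta}}$, a uniform law of large numbers applied to the objective in~\eqref{eq:qmle} normalised by $1/N$ (continuity on $\hat{\mathbbold{\Theta}}$ holds since the constraint $\theta_1 \geq \epsilon$ keeps us away from the orthogonal singularity), and uniqueness of $\vector{\theta}_\ast$. Second, I would establish the rate $\sqrt{N}(\hat{\vector{\theta}}_N - \vector{\theta}_\ast) = O_p(1)$ via the White-style QMLE argument: a mean-value expansion of the empirical score at $\vector{\theta}_\ast$, the central limit theorem applied to the empirical score (which has mean zero by the first-order condition at $\vector{\theta}_\ast$), the weak law for the empirical Hessian, and inversion of the population Hessian $\matrix{H}(\vector{\theta}_\ast)$. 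Third, a mean-value second-order Taylor expansion of the population KL gives
\begin{align*}
    \kl[q: p(\cdot\mid \hat{\vector{\theta}}_N)] - \kl[q: p(\cdot\mid \vector{\theta}_\ast)] = \tfrac{1}{2}(\hat{\vector{\theta}}_N - \vector{\theta}_\ast)^\top \matrix{H}(\tilde{\vector{\theta}}) (\hat{\vector{\theta}}_N - \vector{\theta}_\ast),
\end{align*}
for some $\tilde{\vector{\theta}}$ on the segment between $\hat{\vector{\theta}}_N$ and $\vector{\theta}_\ast$. By the consistency of Step 1 and continuity, $\matrix{H}(\tilde{\vector{\theta}}) \stackrel{p}{\longrightarrow} \matrix{H}(\vector{\theta}_\ast)$, which is finite; combined with Step 2 the right-hand side is $O_p(1/N)$, so multiplication by $\sqrt{N}$ yields the claim.

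The main technical obstacle is verifying that $\matrix{H}(\vector{\theta}_\ast)$ is both finite and invertible, which is also exactly what is needed for the White-style CLT in Step 2. Finiteness reduces to integrability of $\mathbb{E}_q\big[\vector{\psi}(\vector{\rv{x}}) \vector{\psi}(\vector{\rv{x}})^\top / (\vector{\theta}_\ast^\top \vector{\psi}(\vector{\rv{x}}))^2\big]$, which equals $\matrix{K}_{r(\cdot) \mu, \vector{\psi}} / z(\vector{\theta}_\ast)$ and is exactly what Assumption~\ref{ass:ratio} provides. Invertibility exploits the dimension augmentation: the Gaussian log-likelihood contributes a rank-one term proportional to $\matrix{K}_{\mu, \vector{\psi}} \vector{\theta}_\ast \vector{\theta}_\ast^\top \matrix{K}_{\mu, \vector{\psi}}$ (cf.\ Lemma~\ref{lemma:gaussian_fim}) that, combined with the strictly positive definite $\matrix{K}_{\mu, \vector{\psi}}$ from Assumption~\ref{ass:strictly_pd}, restores precisely the eigenvalue along $\vector{\theta}_\ast$ lost to the orthogonal singularity discussed in \S~\ref{sec:squared_singular}. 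The remaining nuisance—ensuring the constraint $\theta_1 \geq \epsilon$ and $\vector{\theta}^\top \matrix{K}_{\mu,\vector{\psi}} \vector{\theta} \leq R$ are non-binding at the limit—is handled by Assumption~\ref{ass:well_posed}, which places $\vector{\theta}_\ast$ in the interior of $\hat{\mathbbold{\Theta}}$.
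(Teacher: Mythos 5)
Your proposal is correct, but it takes a genuinely different route from the paper's. The paper never Taylor-expands the KL divergence around $\vector{\theta}_\ast$; instead it applies the delta method to the \emph{corrected} functional $h(\vector{\theta}) = \kl\big(q:p(\cdot\mid\vector{\theta})\big) - 2\log\vector{\theta}^\top\matrix{K}_{\mu,\vector{\psi}}\vector{\theta}$, whose gradient at $\vector{\theta}_\ast$ is the nonzero vector $-2(\matrix{K}_{\mu,\vector{\psi}}+\matrix{K}_{r(\cdot)\mu,\vector{\psi}})\vector{\theta}_\ast$, obtains $\sqrt{N}\big(h(\hat{\vector{\theta}}_N)-h(\vector{\theta}_\ast)\big)\sim\mathcal{N}(0,4)$ from White's sandwich covariance, and then argues that the correction term $2\log\hat{\vector{\theta}}_N^\top\matrix{K}_{\mu,\vector{\psi}}\hat{\vector{\theta}}_N$ is independent of the KL part and alone accounts for the entire asymptotic variance of $4$, so the KL difference must converge to zero. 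Your argument instead exploits the fact that $\nabla_{\vector{\theta}}\kl\big(q:p(\cdot\mid\vector{\theta})\big)$ itself vanishes at $\vector{\theta}_\ast$ — which is true, but deserves one explicit line: the combined objective~\eqref{eq:kl_minimiser} is stationary at the interior point $\vector{\theta}_\ast$, and the Gaussian term's gradient $\log z(\vector{\theta})\,\nabla\log z(\vector{\theta})$ vanishes there because $z(\vector{\theta}_\ast)=1$, so the KL gradient alone must vanish too — and then a second-order expansion gives $\big|\kl(q:p(\cdot\mid\hat{\vector{\theta}}_N))-\kl(q:p(\cdot\mid\vector{\theta}_\ast))\big| = O_p(1/N)$, which is strictly stronger than the paper's $o_p(N^{-1/2})$. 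The price is that you need the Hessian of the KL to be finite and continuous in a neighbourhood of $\vector{\theta}_\ast$ (not just the value $\matrix{K}_{r(\cdot)\mu,\vector{\psi}}$ at $\vector{\theta}_\ast$ that Assumption~\ref{ass:ratio} literally supplies), but this is on par with the regularity the paper itself sweeps into its invocation of White's Theorem 3.2. One small conflation to fix: the matrix that must be invertible for the White-style CLT in your Step 2 is the Hessian $\matrix{A}(\vector{\theta}_\ast)=-2\matrix{K}_{r(\cdot)\mu,\vector{\psi}}-2\matrix{K}_{\mu,\vector{\psi}}$ of the \emph{dimension-augmented} objective, whereas the Hessian appearing in your Step 3 Taylor expansion is that of the KL alone, which remains singular along $\vector{\theta}_\ast$ (scale invariance) — harmless for the expansion, since only boundedness is needed there, but the two matrices should not be identified.
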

As a corollary, when the model is well-specified, $q(\cdot) = p(\cdot\mid \vector{\theta}_\ast)$ and $\sqrt{N} \kl\big(q:p(\cdot\mid \hat{\vector{\theta}}_N) \big) \stackrel{p}{\longrightarrow} 0$.

\subsection{Universal approximation}
In the case where the model is misspecified, given the universal approximating property of certain feature extractors, one might expect that for large enough $n$, a small projected KL divergence $\kl\big(q:p(\cdot\mid \vector{\theta}_\ast) \big) $ can be obtained.
This is indeed the case, as we now discuss. 
In order to do so, we define the notion of a universal approximator.
\begin{assumption}[$\mathcal{F}$-Universal approximation property]
\label{ass:universal_approx}
    The set of functions $\omega$ parameterised by $\vector{\theta}$ of the form $\omega(\vector{x}) = \vector{\theta}^\top \vector{\psi}(\vector{x})$ satisfies a universal approximation property, as follows.
    Let $\mathcal{F}$ be a rich set of functions.
    For any $f \in \mathcal{F}$, there exists a $\vector{\theta}_{\ast\ast}$ with $\Vert \vector{\theta}_{\ast\ast} \Vert_2^2 \leq c$ such that
    \begin{align*}
        \sqltwo(\omega^{\ast\ast}, f) \leq C/n,
    \end{align*}
    where $\omega^{\ast\ast} = {\vector{\theta}_{\ast\ast}}^\top \vector{\psi}$ and $C > 0$.
\end{assumption}
An early work which uncovers architectures satisfying Assumption~\ref{ass:universal_approx} is that of~\citet{barron1993universal}, which looks at linear combinations of sigmoidal functions with random hidden parameters. 
However, the relationship between the set of functions $\mathcal{F}$ which can be approximated and the distribution on the random hidden parameters was not so clear.
More recently,~\citet{gonon2023approximation} find that a very wide range of classes $\mathcal{F}$ can be approximated using uniformly distributed random parameters and more general activation functions, including the ReLU example.
Another classical example includes shallow random neural networks~\citep[implied by Lemma 1]{rahimi2008weighted}, which we include below for concreteness.
\begin{lemma}
    Let $\vector{\phi}_{\vector{w}}$ be a random feature mapping parameterised by parameters $\vector{w}$ satisfying $\sup_{\vector{x} \in \mathbb{X}} | \vector{\phi}_{\vector{w}}(\vector{x}) |\leq 1$. 
    Let $\rho:\mathbb{P} \to (0, \infty)$ be some probability density function supported on $\mathbb{P}$.
    For some $C > 0$, define the sets of functions
    \begin{align*}
        \mathcal{F} &= \Bigg\{ f \Big| f(\vector{x}) = \int_{} \alpha(\vector{w}) \phi_{\vector{w}}(\vector{x}) \, d\vector{w}, \quad | \alpha(\vector{w}) | \leq C \rho(\vector{w})  \Bigg\} \\
        \hat{\mathcal{F}} &= \Bigg\{ f \Big| f(\vector{x}) = \sum_{i=1}^n \frac{\alpha_i}{\sqrt n} \frac{\phi_{\vector{\rv{w}}_i}(\vector{x})}{\sqrt n},  \quad | \alpha_i | \leq C, \quad \vector{\rv{w}}_i \sim \rho \Bigg\},
    \end{align*}
    the latter being a random variable. 
    Let $f^\ast \in \mathcal{F}$ be a target function.
    Then for any $\delta > 0$, with probability at least $1 - \delta$, there exists an approximating function $\hat{f} \in \hat{\mathcal{F}}$ such that
    \begin{align*}
        \sqltwo(\hat{f}, f^\ast) \leq \frac{C^2}{n} \Bigg( 1 + \sqrt{2\log \frac{1}{\delta}}\Bigg)^2.
    \end{align*}
\end{lemma}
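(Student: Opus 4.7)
The plan is to prove this by the classical Monte Carlo plus McDiarmid argument (essentially Lemma~1 of \citet{rahimi2008weighted}). First I would rewrite the target in a form amenable to sampling. Setting $\beta(\vector{w}) := \alpha(\vector{w})/\rho(\vector{w})$, we have $f^\ast(\vector{x}) = \int \beta(\vector{w})\phi_{\vector{w}}(\vector{x})\rho(\vector{w})\,d\vector{w} = \mathbb{E}_{\vector{\rv{w}} \sim \rho}[\beta(\vector{\rv{w}})\phi_{\vector{\rv{w}}}(\vector{x})]$ with $|\beta(\vector{w})| \leq C$ $\rho$-almost surely. The natural candidate is the empirical mean $\hat{f}(\vector{x}) = \frac{1}{n}\sum_{i=1}^n \beta(\vector{\rv{w}}_i)\phi_{\vector{\rv{w}}_i}(\vector{x})$ with $\vector{\rv{w}}_i \stackrel{\mathrm{iid}}{\sim} \rho$; taking $\alpha_i := \beta(\vector{\rv{w}}_i)$ places $\hat{f}$ inside $\hat{\mathcal{F}}$, since $|\alpha_i| \leq C$ almost surely.

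Second, I would control the expected error in mean square. For each fixed $\vector{x}$, $\hat{f}(\vector{x})$ is an average of $n$ i.i.d. random variables with mean $f^\ast(\vector{x})$ and range in $[-C, C]$, so the pointwise variance bound gives $\mathbb{E}[(\hat{f}(\vector{x}) - f^\ast(\vector{x}))^2] \leq C^2/n$. Integrating against $\mu$ and swapping the order of integration via Fubini yields an $O(C^2/n)$ bound on $\mathbb{E}[\lVert\hat{f} - f^\ast\rVert_{L^2(\mu)}^2]$, and Jensen's inequality then gives $\mathbb{E}[\lVert\hat{f} - f^\ast\rVert_{L^2(\mu)}] \leq C/\sqrt{n}$ up to the constant absorbing $\mu(\mathbb{X})$.

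Third, I would apply McDiarmid's bounded-differences inequality to the scalar map $g(\vector{w}_1, \ldots, \vector{w}_n) := \lVert \hat{f} - f^\ast \rVert_{L^2(\mu)}$. Replacing a single $\vector{w}_i$ by an independent copy shifts $\hat{f}$ pointwise by at most $2C/n$, and by the reverse triangle inequality the $L^2$-norm changes by at most $O(C/n)$, giving bounded-differences constants $c_i = O(C/n)$. McDiarmid then produces the sub-Gaussian tail $\Pr[g \geq \mathbb{E}[g] + t] \leq \exp(-\Omega(t^2 n / C^2))$; inverting at level $\delta$ adds a $C\sqrt{2\log(1/\delta)/n}$ term. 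Combining with the mean bound yields $g \leq (C/\sqrt{n})(1 + \sqrt{2\log(1/\delta)})$ with probability at least $1 - \delta$, and squaring (and absorbing the factor of $\tfrac{1}{2}$ from the $\sqltwo$ convention into $C$) recovers the stated bound.

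The main obstacle is the choice to apply McDiarmid to $g$ rather than directly to $g^2$: if we concentrated $g^2$ instead, the bounded-differences constants would scale with $\max(g, g')$, which itself is random, and we would only get a Bernstein-type tail rather than the clean sub-Gaussian one that produces the symmetric $(1 + \sqrt{2\log(1/\delta)})^2$ shape. Working with the $L^2$-norm first and squaring at the very end is the key trick; the remaining work is bookkeeping of $\mu(\mathbb{X})$ factors, which can be folded into the constant $C$.
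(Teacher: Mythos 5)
Your argument is correct and is exactly the standard proof of the result the paper cites: the paper does not prove this lemma itself but defers to Lemma~1 of \citet{rahimi2008weighted}, and your Monte Carlo representation $f^\ast = \mathbb{E}_{\vector{\rv{w}}\sim\rho}[\beta(\vector{\rv{w}})\phi_{\vector{\rv{w}}}(\cdot)]$, the variance bound $\mathbb{E}\lVert \hat f - f^\ast\rVert^2 \leq C^2/n$, and McDiarmid applied to the norm (not its square) with increments $2C/n$ is precisely that argument. The only bookkeeping to watch is the $\mu(\mathbb{X})$ factor (the cited lemma assumes a probability measure) and the factor of $\tfrac{1}{2}$ in the paper's $\sqltwo$ convention, both of which you correctly flag as absorbable into the constant.
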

Examples include random Fourier feature models, where $\vector{\phi}_{\vector{w}}$ are cosines with frequency $\vector{w}$, $\mathcal{F}$ is the set of all functions whose Fourier transforms decay faster than $C \rho(\vector{w})$, and the approximating class $\hat{\mathcal{F}}$ is a single hidden layer fully connected network with random cosine activations.
Choosing parameters $v_i = \frac{\alpha_i}{\sqrt n}$ and random features $\psi_i(\vector{x}) = \frac{\phi_{\vector{\rv{w}}_i}(\vector{x})}{\sqrt n}$, it is evident that $\hat{\mathcal{F}}$ describes a class of models which when squared and appropriately normalised, are a restricted class of a squared family. 
Using the universal approximation property, we are able to show the following.
%https://projecteuclid.org/journals/annals-of-statistics/volume-19/issue-3/Approximation-of-Density-Functions-by-Sequences-of-Exponential-Families/10.1214/aos/1176348252.full    <- similar thing but with exponential families, and what seems to be a better bound.
\begin{restatable}{theorem}{universalapprox}
    Suppose that $1\leq \mu(\mathbb{X}) < \infty$.
Suppose Assumptions~\ref{ass:strictly_pd},~\ref{ass:well_posed} and~\ref{ass:ratio} hold.
Suppose that Assumption~\ref{ass:universal_approx} holds, with $\sqrt{q} \in \mathcal{F}$.
Suppose $q(\vector{x}) < q_{\text{max}} < \infty$.
    There exists some constant $C$ such that 
    $\kl\big(q:p(\cdot\mid \vector{\theta}_\ast) \big) \leq C n^{-1/4}$.
    As a result, there exists some data-independent function $s(n)$ such that
    $$\sqrt{N} \Big| \kl\big(q:p(\cdot\mid \hat{\vector{\theta}}_N) \big) - s(n) \Big|  \stackrel{p}{\longrightarrow} 0,$$
    with $s(n) \leq C n^{-1/4}$.
\end{restatable}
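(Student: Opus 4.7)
The plan is to establish the bound $\kl(q:p(\cdot|\vector{\theta}_\ast)) \leq Cn^{-1/4}$ by exhibiting a particular parameter in the feasible set of the KL projection~\eqref{eq:kl_minimiser} whose associated squared-family density well-approximates $q$, and then combine the resulting bound with the preceding misspecified-MLE convergence theorem for the second claim. First, apply the universal approximation property (Assumption~\ref{ass:universal_approx}) to the target function $\sqrt{q} \in \mathcal{F}$, obtaining a parameter $\vector{\theta}_{\ast\ast}$ with $\Vert \vector{\theta}_{\ast\ast}\Vert_2^2 \leq c$ satisfying $\sqltwo(\omega^{\ast\ast}, \sqrt{q}) \leq C/n$, where $\omega^{\ast\ast} = \vector{\theta}_{\ast\ast}^\top \vector{\psi}$. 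Exploit the sign invariance $p(\cdot \mid \vector{\theta}) = p(\cdot \mid -\vector{\theta})$ of squared families to arrange $\omega^{\ast\ast} \geq 0$ almost everywhere.

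Next, use the bias coordinate (Assumption~\ref{ass:feature_regularity}, $\psi_n \equiv 1$) to construct a perturbed parameter $\vector{\theta}'$ by adding $\delta > 0$ to the $n$th coordinate of $\vector{\theta}_{\ast\ast}$, so that $\omega' := \vector{\theta}'{}^\top \vector{\psi} = \omega^{\ast\ast} + \delta \geq \delta$ pointwise. Triangle inequality gives $\Vert \omega' - \sqrt{q}\Vert_{L^2(\mu)}^2 = O(1/n + \delta^2)$, and Cauchy--Schwarz applied to $z' - 1 = \int (\omega' - \sqrt{q})(\omega' + \sqrt{q})\,\mu(d\vector{x})$ yields $|z' - 1| = O(n^{-1/2} + \delta)$, where $z' = \Vert \omega'\Vert_{L^2(\mu)}^2$.

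The core technical step is to upper bound $\kl(q: p(\cdot \mid \vector{\theta}'))$. Using the pointwise lower bound $p(\cdot \mid \vector{\theta}') \geq \delta^2/z'$ and $q \leq q_{\max}$, expand $\log(q/p(\cdot\mid \vector{\theta}')) = \log z' + 2\log(\sqrt{q}/\omega')$ on $\{q > 0\}$ and split the integral over $\{q \geq \delta^2\}$ (where $\sqrt{q}/\omega'$ is close to $1$ and Hellinger-type estimates from $\sqltwo(\omega', \sqrt{q})$ apply) versus $\{q < \delta^2\}$ (where $q$ itself is small and the logarithmic singularity is controlled by $\mu$-measure of the region). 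A bound of the form $\kl(q: p(\cdot\mid \vector{\theta}')) \leq A_1 \delta^2 + A_2/(n^{1/2}\delta^2)$ emerges, with the second term reflecting the conversion of $L^2$-on-square-roots approximation error ($O(1/n)$) into a TV-scale error ($O(n^{-1/2})$) via Cauchy--Schwarz, amplified by the inverse density lower bound $1/\delta^2$. Balancing the two terms yields the optimal $\delta \sim n^{-1/4}$ and the bound $\kl(q:p(\cdot \mid \vector{\theta}')) \leq Cn^{-1/4}$. Scale invariance of the squared family ensures $\vector{\theta}'$, suitably rescaled, lies in the feasible set of~\eqref{eq:kl_minimiser}, so by the KL-minimality of $\vector{\theta}_\ast$ (Assumption~\ref{ass:well_posed}), $\kl(q:p(\cdot \mid \vector{\theta}_\ast)) \leq Cn^{-1/4}$.

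Finally, apply the preceding misspecified-MLE theorem to obtain $\sqrt{N}|\kl(q:p(\cdot \mid \hat{\vector{\theta}}_N)) - \kl(q:p(\cdot \mid \vector{\theta}_\ast))| \stackrel{p}{\longrightarrow} 0$; setting $s(n) := \kl(q:p(\cdot \mid \vector{\theta}_\ast))$ yields the claimed convergence with $s(n) \leq Cn^{-1/4}$. The hard part will be the KL bound in the third step: $\omega^{\ast\ast}$ may vanish where $q$ does not, so without the perturbation $\delta$ the KL can be infinite, and the interplay between the perturbation-induced $L^2$ distortion $O(\delta^2)$ and the ratio amplification $O(1/\delta^2)$ is precisely what forces the $n^{-1/4}$ rate rather than the $n^{-1/2}$ rate one would naively expect from the Hellinger-scale approximation error.
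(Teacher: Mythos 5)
Your overall architecture is right --- exhibit a good feasible parameter, bound its KL divergence from $q$, invoke KL-minimality of $\vector{\theta}_\ast$, and then set $s(n) = \kl\big(q:p(\cdot\mid\vector{\theta}_\ast)\big)$ and appeal to the misspecified-MLE theorem --- and that last step is exactly what the paper does. But the core of your argument has a genuine gap. The claim that the sign invariance $p(\cdot\mid\vector{\theta}) = p(\cdot\mid -\vector{\theta})$ lets you ``arrange $\omega^{\ast\ast}\geq 0$ almost everywhere'' is false: negating $\vector{\theta}_{\ast\ast}$ flips the sign of $\omega^{\ast\ast}$ globally, so if $\omega^{\ast\ast}$ changes sign (which Assumption~\ref{ass:universal_approx} does nothing to preclude --- it only controls $\sqltwo(\omega^{\ast\ast},\sqrt{q})$), no sign choice makes it nonnegative. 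Consequently your perturbed predictor $\omega' = \omega^{\ast\ast}+\delta$ is \emph{not} bounded below by $\delta$; it vanishes on the level set $\{\omega^{\ast\ast} = -\delta\}$ and is small wherever $\omega^{\ast\ast}\in(-2\delta,0)$. The pointwise lower bound $p(\cdot\mid\vector{\theta}')\geq\delta^2/z'$, which is the engine of your entire KL estimate, therefore fails exactly on the region where the integrand $q\log\big(q/p(\cdot\mid\vector{\theta}')\big)$ can blow up (one only knows the $q$-mass of $\{\omega^{\ast\ast}<0\}$ is $O(1/n)$, which does not tame the logarithmic singularity without further work). A second, smaller issue: the ``core technical step'' producing $A_1\delta^2 + A_2 n^{-1/2}\delta^{-2}$ is asserted rather than derived, and your balancing is internally inconsistent --- that bound is optimised at $\delta\sim n^{-1/8}$ (giving value $n^{-1/4}$), not $\delta\sim n^{-1/4}$.

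The paper sidesteps all of this by perturbing in \emph{density} space rather than parameter space: it mixes both $p(\cdot\mid\vector{\theta})$ and $q$ with a uniform component $\epsilon/\mu(\mathbb{X})$, which gives an unconditional pointwise lower bound on $p^\epsilon$ and an upper bound on the ratio $q^\epsilon/p^\epsilon$, regardless of where $\vector{\theta}^\top\vector{\psi}$ vanishes. It then runs the chain $\kl \lesssim \tv \lesssim \sqrt{\sqhel} \lesssim \sqrt{\sqltwo}$ using the reverse Pinsker inequality~\eqref{eq:reverse_pinsker} and the pointwise inequality $\big(|\omega|-\sqrt{q}\big)^2 \leq \big(\omega-\sqrt{q}\big)^2$ (Lemma~\ref{lemma:hel_bound}), which handles the sign issue cleanly, and finally converts the bound on $\kl(q^\epsilon : p^\epsilon)$ back to $\kl(q:p(\cdot\mid\vector{\theta}_\ast))$ at the cost of a square root --- which is where the $n^{-1/4}$ rate actually comes from. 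If you want to rescue your route, you would need either to construct a genuinely nonnegative approximant of $\sqrt{q}$ within the model class, or to carry out the mixture/ratio-control step in density space as the paper does.
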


Learning of arbitrary densities under maximum likelihood estimation has similarly been performed for exponential families in one dimension $\mathbb{X} = \mathbb{R}$, where hidden layers (or more appropriately, sufficient statistics) $\vector{\psi}$ are taken to be polynomial, trigonometric or splines~\citep{barron1991approximation}.
Here the log of the target density is assumed to belong to a square integrable Sobolov space. 
A rate of the KL divergence from the target density $q$ to the maximum likelihood of the exponential family converges in probability at a rate of $\mathcal{O}\Big( n^{-2v}+ n/N \Big)$ is obtained, where $v$ is a smoothness parameter.
Here $\mathcal{O}\big( n^{-2v} \big)$ captures the approximation error, and $\mathcal{O}(n/N)$ captures the estimation error.
This rate may appear favourable to density estimation using squared families, however we note that it only applies for one-dimensional distributions with bounded support (say, $[0,1]$).  
Also, a parameter-integral factorisation does not hold, so computing the normalising constant requires more effort than in squared families. 
Also, the limiting regime requires $n \to \infty$ and $N \to \infty$, with $n^2/N \to 0$ and $n^3 / N \to 0$ respectively for splines/trigonometrics and polynomials.
In contrast, our squared families result requires only $N \to \infty$, and the number of parameters $n$ is not tied to the number of datapoints.
The reason for this difference is that in exponential families, the estimation error depends on both $n$ and $N$, whereas surprisingly, in squared families, the estimation error depends only on $N$ and not $n$.
Finally, less formally, squared families can use arbitrary pretrained neural network features, while still admitting a tractable approximation of the normalising constant via~\ref{eq:d1_}. 
While it is not easy to describe the set of functions which can be approximated by arbitrary pretrained neural networks, in practice, they are very often used as rich function approximators where only the last layer is trained, a procedure known as fine-tuning foundation models.

\section{Conclusion}
Here we studied squared families as unique special cases of even-order monomial families, which are in turn special cases of positively homogeneous families and $g$-families. 
Positively homogeneous families are \emph{characterised} by their singular nature (Lemma~\ref{lemma:even_order}), and their singularity can be handled in a straightforward way with dimension augmentation (Lemma~\ref{lemma:gaussian_fim}).
Once the singularity has been handled, it is shown that exponential families and positively homogeneous monomial families are the only $g$-families with a Fisher information which is conformally equivalent to a Hessian metric generated by a Bregman divergence which depends only on the normalising constant (Theorem~\ref{thm:characterisation}). 
This computational-geometric property means that for exponential and positively homogeneous families, only one integral in the normalising constant is required to compute the Fisher information.
Furthermore, a powerful computational property in the parameter-integral factorisation simplifies the calculation of the normalising constant for even-order families.

Instances of squared family models have appeared recently in somewhat disparate subfields of machine learning: Kernel methods and Gaussian processes~\citep{marteau2020non,rudi_ciliberto,marteau2022sampling}, Neural networks~\citep{tsuchida2023squared,tsuchida2024snefy-ppp}, and probabilistic circuits~\citep{Sladek2023,loconte2023negative}, showing remarkable representation, estimation and marginalisation properties.
Despite this, surprisingly, squared families have not previously been analysed in terms of standard and powerful information-geometric and statistical frameworks.
Perhaps the biggest obstacle to this analysis was the fact that squared models are singular, and therefore it might have seemed analytically involved to perform such analysis.
Here we showed that the simple technique of dimension-augmentation can be used to convert singular squared families to regular families.
For squared families, a statistical divergence linked to the normalising constant is also found.
Under squared families, statistical estimation of well-specified and misspecified models is studied, as well as density estimation using universal approximation properties. 
We believe that squared families offer a powerful new way forward for density modelling using deep neural networks.
Unlike exponential families with neural network statistics or energy-based models --- both of which have intractable normalising constants ---
squared families' powerful parameter-integral decomposition and closed form links between normalising constants, divergences and Fisher information, offer a new way forward for density estimation with deep learning.

\clearpage
\acks{We would like to thank Jeremias Knoblauch and Frank Nielsen for helpful discussions and pointers. }

% Manual newpage inserted to improve layout of sample file - not
% needed in general before appendices/bibliography.

\newpage
\vskip 0.2in
\bibliography{sample}

\begin{thebibliography}{53}
\providecommand{\natexlab}[1]{#1}
\providecommand{\url}[1]{\texttt{#1}}
\expandafter\ifx\csname urlstyle\endcsname\relax
  \providecommand{\doi}[1]{doi: #1}\else
  \providecommand{\doi}{doi: \begingroup \urlstyle{rm}\Url}\fi

\bibitem[Amari(2016)]{amari2016information-geometry}
Shun-ichi Amari.
\newblock \emph{Information {Geometry} and {Its} {Applications}}, volume 194 of \emph{Applied {Mathematical} {Sciences}}.
\newblock Springer Japan, Tokyo, 2016.
\newblock ISBN 978-4-431-55977-1 978-4-431-55978-8.
\newblock \doi{10.1007/978-4-431-55978-8}.

\bibitem[Amari and Ohara(2011)]{amari2011geometry}
Shun-ichi Amari and Atsumi Ohara.
\newblock Geometry of q-exponential family of probability distributions.
\newblock \emph{Entropy}, 13\penalty0 (6):\penalty0 1170--1185, 2011.

\bibitem[Baddeley(2007)]{Baddeley2007}
Adrian Baddeley.
\newblock \emph{Spatial Point Processes and their Applications}, pages 1--75.
\newblock Springer Berlin Heidelberg, 2007.

\bibitem[Banerjee et~al.(2005)Banerjee, Merugu, Dhillon, and Ghosh]{banerjee2005clustering-bregman}
Arindam Banerjee, Srujana Merugu, Inderjit~S. Dhillon, and Joydeep Ghosh.
\newblock Clustering with {Bregman} {Divergences}.
\newblock \emph{Journal of Machine Learning Research}, 6\penalty0 (58):\penalty0 1705--1749, 2005.
\newblock ISSN 1533-7928.

\bibitem[Barron(1993)]{barron1993universal}
Andrew~R Barron.
\newblock Universal approximation bounds for superpositions of a sigmoidal function.
\newblock \emph{IEEE Transactions on Information theory}, 39\penalty0 (3):\penalty0 930--945, 1993.

\bibitem[Barron and Sheu(1991)]{barron1991approximation}
Andrew~R Barron and Chyong-Hwa Sheu.
\newblock Approximation of density functions by sequences of exponential families.
\newblock \emph{The Annals of Statistics}, 19\penalty0 (3):\penalty0 1347--1369, 1991.

\bibitem[Choi et~al.()Choi, Vergari, and Van~den Broeck]{choiprobabilistic}
YooJung Choi, Antonio Vergari, and Guy Van~den Broeck.
\newblock Probabilistic circuits: A unifying framework for tractable probabilistic models.

\bibitem[Cressie(1993)]{Cressie1993}
Noel Cressie.
\newblock \emph{Statistics for Spatial Data}.
\newblock Wiley, New York, NY, 1993.

\bibitem[Deisenroth et~al.(2020)Deisenroth, Faisal, and Ong]{deisenroth2020mathematics}
Marc~Peter Deisenroth, A~Aldo Faisal, and Cheng~Soon Ong.
\newblock \emph{Mathematics for machine learning}.
\newblock Cambridge University Press, 2020.

\bibitem[Flaxman et~al.(2017)Flaxman, Teh, and Sejdinovic]{flaxman2017poisson}
Seth Flaxman, Yee~Whye Teh, and Dino Sejdinovic.
\newblock Poisson intensity estimation with reproducing kernels.
\newblock In \emph{Artificial Intelligence and Statistics}, pages 270--279. PMLR, 2017.

\bibitem[Gonon et~al.(2023)Gonon, Grigoryeva, and Ortega]{gonon2023approximation}
Lukas Gonon, Lyudmila Grigoryeva, and Juan-Pablo Ortega.
\newblock Approximation bounds for random neural networks and reservoir systems.
\newblock \emph{The Annals of Applied Probability}, 33\penalty0 (1):\penalty0 28--69, 2023.

\bibitem[Graves(2011)]{graves2011practical}
Alex Graves.
\newblock Practical variational inference for neural networks.
\newblock \emph{Advances in neural information processing systems}, 24, 2011.

\bibitem[Joshi(1976)]{joshi1976}
Vinayak~M Joshi.
\newblock {On the Attainment of the Cramer-Rao Lower Bound}.
\newblock \emph{The Annals of Statistics}, 4\penalty0 (5):\penalty0 998 -- 1002, 1976.
\newblock \doi{10.1214/aos/1176343599}.

\bibitem[Kim et~al.(2022)Kim, Asami, and Toda]{kim2022fast}
Hideaki Kim, Taichi Asami, and Hiroyuki Toda.
\newblock Fast {B}ayesian estimation of point process intensity as function of covariates.
\newblock In \emph{Advances in Neural Information Processing Systems}, 2022.

\bibitem[Knoblauch et~al.(2022)Knoblauch, Jewson, and Damoulas]{knoblauch2022optimization}
Jeremias Knoblauch, Jack Jewson, and Theodoros Damoulas.
\newblock An optimization-centric view on bayes' rule: Reviewing and generalizing variational inference.
\newblock \emph{Journal of Machine Learning Research}, 2022.

\bibitem[Le~Cam and Yang(2000)]{LeCam2000}
Lucien Le~Cam and Grace~Lo Yang.
\newblock \emph{Local Asymptotic Normality}, pages 117--174.
\newblock Springer New York, 2000.

\bibitem[LeCun et~al.(2006)LeCun, Chopra, Hadsell, Ranzato, Huang, et~al.]{lecun2006tutorial}
Yann LeCun, Sumit Chopra, Raia Hadsell, M~Ranzato, Fujie Huang, et~al.
\newblock A tutorial on energy-based learning.
\newblock \emph{Predicting structured data}, 1\penalty0 (0), 2006.

\bibitem[Lehmann and Casella(2006)]{lehmann2006theory}
Erich~L Lehmann and George Casella.
\newblock \emph{Theory of point estimation}.
\newblock Springer Texts in Statistics, 2006.

\bibitem[Lehmann and Romano(2022)]{lehmann1986testing}
Erich~L Lehmann and Joseph~P Romano.
\newblock \emph{Testing statistical hypotheses}, volume~4.
\newblock Springer, 2022.

\bibitem[Lloyd et~al.(2015)Lloyd, Gunter, Osborne, and Roberts]{lloyd2015variational}
Chris Lloyd, Tom Gunter, Michael Osborne, and Stephen Roberts.
\newblock Variational inference for gaussian process modulated poisson processes.
\newblock In \emph{International Conference on Machine Learning}, pages 1814--1822. PMLR, 2015.

\bibitem[Loconte et~al.(2023{\natexlab{a}})Loconte, Mauro, Peharz, and Vergari]{loconte2023how}
Lorenzo Loconte, Nicola~Di Mauro, Robert Peharz, and Antonio Vergari.
\newblock How to turn your knowledge graph embeddings into generative models.
\newblock In \emph{Thirty-seventh Conference on Neural Information Processing Systems}, 2023{\natexlab{a}}.

\bibitem[Loconte et~al.(2023{\natexlab{b}})Loconte, Mengel, Gillis, and Vergari]{loconte2023negative}
Lorenzo Loconte, Stefan Mengel, Nicolas Gillis, and Antonio Vergari.
\newblock Negative mixture models via squaring: Representation and learning.
\newblock In \emph{The 6th Workshop on Tractable Probabilistic Modeling}, 2023{\natexlab{b}}.

\bibitem[Loconte et~al.(2024{\natexlab{a}})Loconte, Mengel, and Vergari]{loconte2024sum}
Lorenzo Loconte, Stefan Mengel, and Antonio Vergari.
\newblock Sum of squares circuits.
\newblock \emph{arXiv preprint arXiv:2408.11778}, 2024{\natexlab{a}}.

\bibitem[Loconte et~al.(2024{\natexlab{b}})Loconte, Sladek, Mengel, Trapp, Solin, Gillis, and Vergari]{loconte2024subtractive}
Lorenzo Loconte, Aleksanteri~Mikulus Sladek, Stefan Mengel, Martin Trapp, Arno Solin, Nicolas Gillis, and Antonio Vergari.
\newblock Subtractive mixture models via squaring: Representation and learning.
\newblock In \emph{The Twelfth International Conference on Learning Representations}, 2024{\natexlab{b}}.

\bibitem[Marteau-Ferey et~al.(2020)Marteau-Ferey, Bach, and Rudi]{marteau2020non}
Ulysse Marteau-Ferey, Francis Bach, and Alessandro Rudi.
\newblock Non-parametric models for non-negative functions.
\newblock \emph{Advances in neural information processing systems}, 33:\penalty0 12816--12826, 2020.

\bibitem[Marteau-Ferey et~al.(2022)Marteau-Ferey, Bach, and Rudi]{marteau2022sampling}
Ulysse Marteau-Ferey, Francis Bach, and Alessandro Rudi.
\newblock Sampling from arbitrary functions via psd models.
\newblock In \emph{International Conference on Artificial Intelligence and Statistics}, pages 2823--2861. PMLR, 2022.

\bibitem[McCullagh and M{\o}ller(2006)]{mccullagh2006permanental}
Peter McCullagh and Jesper M{\o}ller.
\newblock The permanental process.
\newblock \emph{Advances in applied probability}, 38\penalty0 (4):\penalty0 873--888, 2006.

\bibitem[McCullagh and Nelder(1989)]{mccullagh1989generalized}
Peter McCullagh and John~A Nelder.
\newblock \emph{Generalized Linear Models, Second Edition}.
\newblock Monographs on Statistics and Applied Probability Series. Chapman \& Hall, 1989.

\bibitem[McLachlan et~al.(2019)McLachlan, Lee, and Rathnayake]{mclachlan2019finite}
Geoffrey~J McLachlan, Sharon~X Lee, and Suren~I Rathnayake.
\newblock Finite mixture models.
\newblock \emph{Annual review of statistics and its application}, 6\penalty0 (1):\penalty0 355--378, 2019.

\bibitem[Micchelli et~al.(2006)Micchelli, Xu, and Zhang]{micchelli2006universal}
Charles~A Micchelli, Yuesheng Xu, and Haizhang Zhang.
\newblock Universal kernels.
\newblock \emph{Journal of Machine Learning Research}, 7\penalty0 (12), 2006.

\bibitem[Naudts(2004)]{naudts2004estimators}
Jan Naudts.
\newblock Estimators, escort probabilities, and-exponential families in statistical physics.
\newblock \emph{JIPAM. Journal of Inequalities in Pure \& Applied Mathematics}, 5\penalty0 (4), 2004.

\bibitem[Naudts(2011)]{naudts2011generalised}
Jan Naudts.
\newblock \emph{Generalised thermostatistics}.
\newblock Springer Science \& Business Media, 2011.

\bibitem[Neal(1995)]{neal1995bayesian}
Radford~M Neal.
\newblock \emph{Bayesian learning for neural networks}.
\newblock PhD thesis, University of Toronto, 1995.

\bibitem[Nielsen(2020)]{nielsen2020elementary}
Frank Nielsen.
\newblock An elementary introduction to information geometry.
\newblock \emph{Entropy}, 22\penalty0 (10):\penalty0 1100, 2020.

\bibitem[Nielsen and Garcia(2009)]{nielsen2009statistical}
Frank Nielsen and Vincent Garcia.
\newblock Statistical exponential families: A digest with flash cards.
\newblock \emph{arXiv preprint arXiv:0911.4863}, 2009.

\bibitem[Papamakarios et~al.(2021)Papamakarios, Nalisnick, Rezende, Mohamed, and Lakshminarayanan]{papamakarios2021normalizing}
George Papamakarios, Eric Nalisnick, Danilo~Jimenez Rezende, Shakir Mohamed, and Balaji Lakshminarayanan.
\newblock Normalizing flows for probabilistic modeling and inference.
\newblock \emph{Journal of Machine Learning Research}, 22\penalty0 (57):\penalty0 1--64, 2021.

\bibitem[Rahimi and Recht(2008)]{rahimi2008weighted}
Ali Rahimi and Benjamin Recht.
\newblock Weighted sums of random kitchen sinks: Replacing minimization with randomization in learning.
\newblock \emph{Advances in neural information processing systems}, 21, 2008.

\bibitem[Rasmussen and Williams(2006)]{williams2006gaussian}
Carl~Edward Rasmussen and Christopher~KI Williams.
\newblock \emph{Gaussian processes for machine learning}, volume~2.
\newblock MIT press Cambridge, MA, 2006.

\bibitem[Rudi and Ciliberto(2021)]{rudi_ciliberto}
Alessandro Rudi and Carlo Ciliberto.
\newblock {PSD} representations for effective probability models.
\newblock In \emph{Advances in Neural Information Processing Systems}, volume~34, pages 19411--19422, 2021.

\bibitem[Sason and Verd{\'u}(2016)]{sason2016f}
Igal Sason and Sergio Verd{\'u}.
\newblock $ f $-divergence inequalities.
\newblock \emph{IEEE Transactions on Information Theory}, 62\penalty0 (11):\penalty0 5973--6006, 2016.

\bibitem[Sch{\"o}lkopf et~al.(2001)Sch{\"o}lkopf, Herbrich, and Smola]{scholkopf2001generalized}
Bernhard Sch{\"o}lkopf, Ralf Herbrich, and Alex~J Smola.
\newblock A generalized representer theorem.
\newblock In \emph{International conference on computational learning theory}, pages 416--426. Springer, 2001.

\bibitem[Sellier and Dellaportas(2023)]{sellier2023sparse}
Jeremy Sellier and Petros Dellaportas.
\newblock {S}parse spectral {B}ayesian permanental process with generalized kernel.
\newblock In \emph{International Conference on Artificial Intelligence and Statistics}, pages 2769--2791. PMLR, 2023.

\bibitem[Sladek(2023)]{Sladek2023}
Aleksanteri Sladek.
\newblock {Positive Semi-Definite Probabilistic Circuits}.
\newblock Master's thesis, Aalto University. School of Science, 2023.

\bibitem[Sugiyama et~al.(2012)Sugiyama, Suzuki, and Kanamori]{sugiyama2012density}
Masashi Sugiyama, Taiji Suzuki, and Takafumi Kanamori.
\newblock \emph{Density ratio estimation in machine learning}.
\newblock Cambridge University Press, 2012.

\bibitem[Tsuchida et~al.(2023)Tsuchida, Ong, and Sejdinovic]{tsuchida2023squared}
Russell Tsuchida, Cheng~Soon Ong, and Dino Sejdinovic.
\newblock Squared neural families: A new class of tractable density models.
\newblock In \emph{Neural Information Processing Systems}, 2023.

\bibitem[Tsuchida et~al.(2024)Tsuchida, Ong, and Sejdinovic]{tsuchida2024snefy-ppp}
Russell Tsuchida, Cheng~Soon Ong, and Dino Sejdinovic.
\newblock Exact, fast and expressive poisson point processes via squared neural families.
\newblock In \emph{The 38th Annual AAAI Conference on Artificial Intelligence}, 2024.

\bibitem[Verd{\'u}(2014)]{verdu2014total}
Sergio Verd{\'u}.
\newblock Total variation distance and the distribution of relative information.
\newblock In \emph{2014 information theory and applications workshop (ITA)}, pages 1--3. IEEE, 2014.

\bibitem[Wainwright and Jordan(2008)]{wainwright2008graphical}
Martin~J Wainwright and Michael~I Jordan.
\newblock Graphical models, exponential families, and variational inference.
\newblock \emph{Foundations and Trends{\textregistered} in Machine Learning}, 1\penalty0 (1--2):\penalty0 1--305, 2008.

\bibitem[Walder and Bishop(2017)]{walder2017fast}
Christian~J Walder and Adrian~N Bishop.
\newblock Fast {B}ayesian intensity estimation for the permanental process.
\newblock In \emph{International Conference on Machine Learning}, pages 3579--3588. PMLR, 2017.

\bibitem[Wang and Broeck(2024)]{wang2024relationship}
Benjie Wang and Guy Van~den Broeck.
\newblock On the relationship between monotone and squared probabilistic circuits.
\newblock \emph{arXiv preprint arXiv:2408.00876}, 2024.

\bibitem[White(1982)]{white1982maximum}
Halbert White.
\newblock Maximum likelihood estimation of misspecified models.
\newblock \emph{Econometrica: Journal of the econometric society}, pages 1--25, 1982.

\bibitem[Wijsman(1973)]{EF-CRLB-1973}
Robert~A Wijsman.
\newblock {On the attainment of the Cram{\'e}r-Rao lower bound}.
\newblock \emph{The Annals of Statistics}, pages 538--542, 1973.

\bibitem[Wilson et~al.(2016)Wilson, Hu, Salakhutdinov, and Xing]{pmlr-v51-wilson16}
Andrew~Gordon Wilson, Zhiting Hu, Ruslan Salakhutdinov, and Eric~P. Xing.
\newblock Deep kernel learning.
\newblock In \emph{Proceedings of the 19th International Conference on Artificial Intelligence and Statistics}, volume~51 of \emph{Proceedings of Machine Learning Research}, pages 370--378. PMLR, 2016.

\end{thebibliography}

\clearpage

\appendix
\section{Proofs for \S~\ref{sec:squared_fam}}
\divergencevec*
\begin{proof}
    This is a special case of Theorem~\ref{thm:divergencemat}.
\end{proof}

\divergencemat*
\begin{proof}
We first show that the Bregman divergence generated by $\phi$ is twice the squared $L^2$ distance.
    The Bregman divergence is
    \begin{align*}
        d_\phi[\vector{\theta}: \vector{\theta}'] &=  \vector{\theta}^\top \matrix{K}_{\mu, \vector{\psi}}^{(m)} \vector{\theta} -  \vector{\theta}'^\top \matrix{K}_{\mu, \vector{\psi}}^{(m)} \vector{\theta}' - 2\big(\matrix{K}_{\mu, \vector{\psi}}^{(m)} \vector{\theta}\big)^\top \big( \vector{\theta} - \vector{\theta}'\big) \\
        &= \big( \vector{\theta} - \vector{\theta}'\big)^\top \matrix{K}_{\mu, \vector{\psi}}^{(m)}\big( \vector{\theta} - \vector{\theta}'\big)^\top \\
        &= \int_{\mathbb{X}} \big( \vector{\theta} - \vector{\theta}'\big)^\top \big( \vector{\psi}(\vector{x}) \vector{\psi}(\vector{x})^\top \otimes \matrix{I}_{m \times m} \big)\big( \vector{\theta} - \vector{\theta}'\big)^\top \, d\mu(\vector{x}) \\
        &= \int_{\mathbb{X}}  \Vert (\matrix{\Theta} - \matrix{\Theta}') \vector{\psi}(\vector{x}) \Vert_2^2 \, d\mu(\vector{x}) \\
        &= 2 \sqltwo\big( \matrix{\Theta} \vector{\psi}(\cdot): \matrix{\Theta}' \vector{\psi}(\cdot) \big).
    \end{align*}

    \paragraph{Statistical divergence}
    We now show that $d_\phi[\vector{\theta}_1: \vector{\theta}_2]$ when evaluated on $\overline{\mathbbold{\Theta}} \times \overline{\mathbbold{\Theta}}$ is a statistical divergence on the squared family $\{ p(\vector{x} ; \vector{\theta})\}_{\vector{\theta} \in \mathbbold{\Theta}}$, by showing it satisifes the required three axioms.
    Nonnegativity is immediate.
Second, for positivity, suppose $d_\phi[\vector{\theta}_1:\vector{\theta}_2] = 0$ which is equivalent to $(\vector{\theta}_1 - \vector{\theta}_2)^\top \matrix{K}_{\mu, \vector{\psi}}^{(m)} (\vector{\theta}_1 - \vector{\theta}_2) = 0$. 
If $\matrix{K}_{\mu, \vector{\psi}}^{(m)}$ has a Cholesky decomposition $\matrix{K}_{\mu, \vector{\psi}}^{(m)} = \matrix{R} \matrix{R}^\top$, then this is equivalent to $\matrix{R}^\top(\vector{\theta}_1 - \vector{\theta}_2) = 0$. 
Finally, since $\matrix{K}_{\mu, \vector{\psi}}^{(m)}$ is full-rank, then $\matrix{R}$ is invertible and this is equivalent to $\vector{\theta}_1 = \vector{\theta}_2$, which implies that $p(\vector{x}\mid \vector{\theta}_1) = p(\vector{x} \mid \vector{\theta}_2)$. 
Now for the opposite direction, suppose $p(\vector{x}\mid \vector{\theta}_1) = p(\vector{x} \mid \vector{\theta}_2)$. 
Then $\Tr\big(\matrix{\Theta}_1^\top \matrix{\Theta}_1  \vector{\psi}(\vector{x}) \vector{\psi}(\vector{x})^\top \big) = \Tr\big(\matrix{\Theta}_2^\top \matrix{\Theta}_2  \vector{\psi}(\vector{x}) \vector{\psi}(\vector{x})^\top \big)$. Rearranging, this is equivalent to  $\vector{\psi}(\vector{x})^\top (\matrix{\Theta}_1^\top \matrix{\Theta}_1 - \matrix{\Theta}_2^\top \matrix{\Theta}_2) \vector{\psi}(\vector{x}) = 0$. 
If the span of $\{ \vector{\psi}(\vector{x}) \mid \vector{x} \in \mathbb{X}\}$ is all of $\mathbb{R}^n$, then this implies $\vector{z}^\top (\matrix{\Theta}_1^\top \matrix{\Theta}_1 - \matrix{\Theta}_2^\top \matrix{\Theta}_2) \vector{z} = 0$ for all $\vector{z} \in \mathbb{R}^n$. Since $\matrix{\Theta}_1^\top \matrix{\Theta}_1 - \matrix{\Theta}_2^\top \matrix{\Theta}_2$ is symmetric, this is true if and only if $\matrix{\Theta}_1^\top \matrix{\Theta}_1 = \matrix{\Theta}_2^\top \matrix{\Theta}_2$, and since both the Cholesky decomposition and the rank-$1$ decompositions are unique, it follows that $\matrix{\Theta}_1 = \matrix{\Theta}_2$.
Third, for positive definiteness of the quadratic form, we have already shown that $\nabla^2 \phi(\vector{\theta})$ is strictly convex.
\end{proof}

\marginalsjoint*
\begin{proof}
    The joint distribution of $\vector{\rv{x}}$ satisfies
    \begin{align*}
        p(d \vector{x}; \matrix{M}) \propto \Tr\Big( \matrix{M} \big(\vector{\psi}_{1\mid 2} (\vector{x}_1) \otimes \vector{\psi}_{1\mid 2} (\vector{x}_1) \big) \Big) \mu_1(d\vector{x}_1 \mid \vector{x}_2) \mu_2(d \vector{x}_2).
    \end{align*}
    The conditional distribution of $\vector{\rv{x}}_1$ given $\vector{\rv{x}}_2 = \vector{x}_2$ is proportional to the joint distribution, so
    \begin{align*}
        P(d \vector{x}_1; \vector{x}_2, \matrix{M}) \propto \Tr\Big( \matrix{M} \big(\vector{\psi}_{1\mid 2} (\vector{x}_1) \otimes \vector{\psi}_{1\mid 2} (\vector{x}_1) \big) \Big) \mu_1(d\vector{x}_1 \mid \vector{x}_2) \propto P_{\mu_1(\cdot \mid \vector{x}_2), \vector{\psi}_{1\mid 2}}(d \vector{x}_1; \matrix{M}).
    \end{align*}
    The marginal distribution of $\vector{\rv{x}}_2$ is obtained by marginalising out the joint distribution with respect to $\vector{x}_1$,
    \begin{align*}
        P_2(d\vector{x}_2;\matrix{M}) &= \frac{1}{\Tr\big( \matrix{M} \matrix{K}_{\mu, \vector{\psi}} \big)} \int_{\mathbb{X}_1} \Tr\Big( \matrix{M} \big(\vector{\psi}_{1\mid 2} (\vector{x}_1) \otimes \vector{\psi}_{1\mid 2} (\vector{x}_1) \big) \Big) \, \mu_1(d\vector{x}_1 \mid \vector{x}_2) \, d\vector{x}_1 \, \mu_2(d \vector{x}_2) \\
        &= \frac{\Tr\big( \matrix{M} \matrix{K}_{\mu_1(\cdot \mid \vector{x}_2), \vector{\psi}_{1 \mid 2}}\big)}{\Tr\big( \matrix{M} \matrix{K}_{\mu, \vector{\psi}} \big)} \mu_2(d\vector{x}_2).
    \end{align*}
\end{proof}

\clearpage

\section{Proofs for \S~\ref{sec:g_fam}}
\subsection{Fisher information calculation for $g$-families}
\label{app:fisher_gfam}
The following simple lemma on $g$-families generalizes some familiar identities of exponential families.
\begin{restatable}{lemma}{lem:identities}
Consider a $g$-family satisfying assumptions \ref{ass:reg_g} and \ref{ass:reg_psi}. Define $\tilde{\vector{\psi}}_{g,\vector{\theta}}(\vector{x}):=\frac{g'\big( \vector{\theta}^\top \vector{\psi}(\vector{x}) \big)}{g\big( \vector{\theta}^\top \vector{\psi}(\vector{x}) \big)}\vector{\psi}(\vector{x})$. Then the following identities hold:
\begin{align*}
\nabla \log z(\vector{\theta})&=\mathbb E\left[\tilde{\vector{\psi}}_{g,\vector{\theta}}(\vector{\rv{x}})\right],\\
\nabla_{\vector{\theta}}\log p(\vector{x}|\vector{\theta})&=\tilde{\vector{\psi}}_{g,\vector{\theta}}(\vector{x})-\mathbb E\left[\tilde{\vector{\psi}}_{g,\vector{\theta}}(\vector{\rv{x}})\right],\\
\matrix{G}(\vector{\theta})&=\mathrm{Cov}\left[\tilde{\vector{\psi}}_{g,\vector{\theta}}(\vector{\rv{x}})\right]\\
&= \mathbb{E}_{p(\cdot \mid \vector{\theta})} \Big[ \frac{\vector{\psi}(\vector{\rv{x}}) \vector{\psi}(\vector{\rv{x}})^\top g'\big( \vector{\theta}^\top \vector{\psi}(\vector{\rv{x}}) \big)^2}{g\big( \vector{\theta}^\top \vector{\psi}(\vector{\rv{x}}) \big)^2 } \Big]  -  \frac{1}{ z(\vector{\theta})^2} \nabla z(\vector{\theta}) \nabla z(\vector{\theta})^\top.
\end{align*}
\end{restatable}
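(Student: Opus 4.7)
The plan is to prove the three identities sequentially, each building on the previous one, with the main technical step being differentiation under the integral sign in the normalising constant.

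First I would prove the identity $\nabla \log z(\vector{\theta}) = \mathbb{E}[\tilde{\vector{\psi}}_{g,\vector{\theta}}(\vector{\rv{x}})]$. Starting from $z(\vector{\theta}) = \int_{\mathbb{X}} g\big(\vector{\theta}^\top \vector{\psi}(\vector{x})\big)\,\mu(d\vector{x})$, I would differentiate under the integral sign using Leibniz's rule, which is justified by Assumption~\ref{ass:reg_g} (twice continuous differentiability of $g$) together with a standard dominated convergence argument on a neighbourhood of $\vector{\theta}$; the Fisher information is assumed finite throughout, which suffices to control the dominating function. This yields $\nabla z(\vector{\theta}) = \int_{\mathbb{X}} g'\big(\vector{\theta}^\top \vector{\psi}(\vector{x})\big)\vector{\psi}(\vector{x})\,\mu(d\vector{x})$, and dividing by $z(\vector{\theta})$ and multiplying and dividing the integrand by $g\big(\vector{\theta}^\top \vector{\psi}(\vector{x})\big)$ recovers the expectation of $\tilde{\vector{\psi}}_{g,\vector{\theta}}$ against $p(\cdot\mid \vector{\theta})$.

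Second, I would obtain the score identity by directly differentiating $\log p(\vector{x}\mid \vector{\theta}) = \log g\big(\vector{\theta}^\top \vector{\psi}(\vector{x})\big) - \log z(\vector{\theta})$. The chain rule gives $\nabla_{\vector{\theta}}\log g\big(\vector{\theta}^\top \vector{\psi}(\vector{x})\big) = \tilde{\vector{\psi}}_{g,\vector{\theta}}(\vector{x})$, and subtracting the first identity yields the desired formula. As an immediate consequence, the score has zero mean under $p(\cdot\mid\vector{\theta})$.

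Third, for the Fisher information, I would use the equivalent representation $\matrix{G}(\vector{\theta}) = \mathbb{E}\big[\nabla_{\vector{\theta}}\log p(\vector{\rv{x}}\mid\vector{\theta})\nabla_{\vector{\theta}}\log p(\vector{\rv{x}}\mid\vector{\theta})^\top\big]$ from equation~\eqref{eq:FIM}. Since the score has zero mean, this equals the covariance of $\tilde{\vector{\psi}}_{g,\vector{\theta}}(\vector{\rv{x}})$. Expanding the covariance as $\mathbb{E}[\tilde{\vector{\psi}}_{g,\vector{\theta}}\tilde{\vector{\psi}}_{g,\vector{\theta}}^\top] - \mathbb{E}[\tilde{\vector{\psi}}_{g,\vector{\theta}}]\mathbb{E}[\tilde{\vector{\psi}}_{g,\vector{\theta}}]^\top$ and substituting the first identity for the second term reproduces the explicit expression in the statement.

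The main obstacle is the justification of differentiation under the integral sign, since no explicit integrability hypotheses on $g'$ and $\vector{\psi}$ beyond $\matrix{K}_{\mu,\vector{\psi}}$ and the Fisher information are imposed. I would handle this by noting that the paper already assumes the Fisher information is finite, which implicitly requires the integrals defining $\nabla z(\vector{\theta})$ and $\nabla^2 z(\vector{\theta})$ to behave well locally in $\vector{\theta}$; combined with continuity of $g'$ and $g''$, a standard local domination argument (on a small open ball around the $\vector{\theta}$ of interest) legitimises the interchange of differentiation and integration. Beyond this, the calculations are routine.
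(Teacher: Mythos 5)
Your proposal is correct and follows essentially the same route as the paper's proof: differentiate $z$ under the integral sign, rewrite $\nabla z/z$ as an expectation of $\tilde{\vector{\psi}}_{g,\vector{\theta}}$ by multiplying and dividing by $g$, obtain the score by the chain rule on $\log g - \log z$, and plug the zero-mean score into the outer-product form of the Fisher information to get the covariance and its expanded form. The only difference is that you make explicit the dominated-convergence justification for the interchange, which the paper leaves implicit.
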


\begin{proof}
Recall
\label{sec:FIM_calc}
\begin{align*}
    z(\vector{\theta}) &= \int_{\mathbb{X}} g\big(\vector{\theta}^\top \vector{\psi}(\vector{x}) \big) \, \mu(d\vector{x}) \\
    \intertext{so} \\
    \nabla z(\vector{\theta}) &=  \int_{\mathbb{X}} \vector{\psi}(\vector{x}) g'\big(\vector{\theta}^\top \vector{\psi}(\vector{x}) \big) \, \mu(d\vector{x}).
\end{align*}
Multiplying the integrand with $\frac{g\big(\vector{\theta}^\top \vector{\psi}(\vector{x}) \big)}{g\big(\vector{\theta}^\top \vector{\psi}(\vector{x}) \big)}$, it follows that
$$
\nabla \log z(\vector{\theta})=\frac{\nabla z(\vector{\theta})}{z(\vector{\theta})}=\int_{\mathbb{X}} \vector{\psi}(\vector{x}) \frac{g'\big(\vector{\theta}^\top \vector{\psi}(\vector{x}) \big)}{g\big(\vector{\theta}^\top \vector{\psi}(\vector{x}) \big)} p(\vector{x}|\vector{\theta})\, \mu(d\vector{x}),
$$
as required. The second identity follows directly from  $\nabla_{\vector{\theta}}\log p(\vector{x}|\vector{\theta})=\nabla_{\vector{\theta}}\log g\big(\vector{\theta}^\top \vector{\psi}(\vector{x}) \big)-\nabla \log z(\vector{\theta})=\tilde{\vector{\psi}}_{g,\vector{\theta}}(\vector{x})-\nabla \log z(\vector{\theta})$. Finally, for Fisher information, we simply plug in the second identity into $\matrix{G}(\vector{\theta})=\mathbb E\left[\nabla_{\vector{\theta}}\log p(\vector{\rv{x}}|\vector{\theta})(\nabla_{\vector{\theta}}\log p(\vector{\rv{x}}|\vector{\theta}))^\top\right]$. 
Alternatively, we can calculate the Fisher information directly as
\begin{align*}
    &\phantom{{}={}} \matrix{G}(\vector{\theta}) \\
    &= \mathbb{E}\Big[ \Big(  \frac{\partial}{\partial \vector{\theta}}\log g\big( \vector{\theta}^\top \vector{\psi}(\vector{\rv{x}}) \big) - \frac{\nabla z(\vector{\theta})}{z(\vector{\theta})} \Big) \Big( \frac{\partial}{\partial \vector{\theta}}\log g\big( \vector{\theta}^\top \vector{\psi}(\vector{\rv{x}}) \big) - \frac{\nabla z(\vector{\theta})}{z(\vector{\theta})} \Big)^\top\Big] \\
    &= \mathbb{E}\Big[ \frac{\vector{\psi}(\vector{\rv{x}}) \vector{\psi}(\vector{\rv{x}})^\top g'\big( \vector{\theta}^\top \vector{\psi}(\vector{\rv{x}}) \big)^2}{g\big( \vector{\theta}^\top \vector{\psi}(\vector{\rv{x}}) \big)^2 } - \frac{\vector{\psi}(\vector{\rv{x}})  g'\big( \vector{\theta}^\top \vector{\psi}(\vector{\rv{x}}) \big)}{g\big( \vector{\theta}^\top \vector{\psi}(\vector{\rv{x}}) \big) } \frac{\nabla z(\vector{\theta})}{z(\vector{\theta})}^\top -  \frac{\nabla z(\vector{\theta})}{z(\vector{\theta})} \frac{\vector{\psi}(\vector{\rv{x}})  g'\big( \vector{\theta}^\top \vector{\psi}(\vector{\rv{x}}) \big)}{g\big( \vector{\theta}^\top \vector{\psi}(\vector{\rv{x}}) \big) }^\top \\
    &\phantom{{}={}} + \frac{1}{z(\vector{\theta})^2} \nabla z(\vector{\theta}) \nabla z(\vector{\theta})^\top \Big] \\
    &= \int_{\mathbb{X}} \frac{\vector{\psi}(\vector{x}) \vector{\psi}(\vector{x})^\top g'\big( \vector{\theta}^\top \vector{\psi}(\vector{x}) \big)^2}{g\big( \vector{\theta}^\top \vector{\psi}(\vector{x}) \big) z(\vector{\theta}) } -  g'\big( \vector{\theta}^\top \vector{\psi}(\vector{x}) \big) \vector{\psi}(\vector{x}) \frac{\nabla z(\vector{\theta})}{z(\vector{\theta})^2}^\top -  g'\big( \vector{\theta}^\top \vector{\psi}(\vector{x}) \big) \frac{\nabla z(\vector{\theta})}{z(\vector{\theta})^2} \vector{\psi}(\vector{x})^\top  \, \mu(d\vector{x}) \\
    &\phantom{{}={}} + \frac{1}{ z(\vector{\theta})^2} \nabla z(\vector{\theta}) \nabla z(\vector{\theta})^\top  \\
    &=\int_{\mathbb{X}} \frac{\vector{\psi}(\vector{x}) \vector{\psi}(\vector{x})^\top g'\big( \vector{\theta}^\top \vector{\psi}(\vector{x}) \big)^2}{g\big( \vector{\theta}^\top \vector{\psi}(\vector{x}) \big) z(\vector{\theta}) } \mu(d\vector{x}) -  \frac{1}{ z(\vector{\theta})^2} \nabla z(\vector{\theta}) \nabla z(\vector{\theta})^\top.
\end{align*}
\end{proof}

We note that in exponential families $\tilde{\vector{\psi}}_{g,\vector{\theta}}(\vector{x})= \vector{\psi}(\vector{x})$ recovering the familiar identities linking the derivative of the log-normalizing constant to the expectation of the sufficient statistic.

Furthermore, note that a $g$-family is orthogonally singular if and only if the random variable $\vector{\theta}^\top \tilde{\vector{\psi}}_{g,\vector{\theta}}(\vector{\rv{x}})$ is $\mu$-almost surely constant. This implies that $g$ must satisfy a particular differential equation, which is further detailed in the proof of Lemma \ref{lemma:even_order}.

\subsection{Fisher information of Gaussian for dimension augmentation}
Let $\rv{a} \sim \mathcal{N}\Big(a \mid \log z(\vector{\theta}), \sigma^2 \Big)$. 
Then we may compute the Fisher information of the distribution $\rv{a}$ as a function of $\vector{\theta}$ either directly or through the reparameterisation of the Fisher information (via the Jacobian of the transform).
Directly, the score function is
\begin{align*}
    \frac{1}{\sigma^2}\big(a - \log z(\vector{\theta}) \big) \frac{\nabla z(\vector{\theta})}{z(\vector{\theta})},
\end{align*}
and therefore the expected outer products of score functions, i.e. the Fisher information, is
\begin{align*}
    \frac{1}{\sigma^{2} z(\vector{\theta})^2} \nabla z(\vector{\theta}) \nabla z(\vector{\theta})^\top.
\end{align*}

\subsection{Other proofs}
\poshomo*
\begin{proof}
The result is a variant of Euler's theorem for homogeneous functions in one dimension. 
The inner product of the parameter $\vector{\theta}$ and the score function is 
\begin{align*}
\vector{\theta}^\top \nabla \log p(\vector{x} \mid \vector{\theta}) &= \frac{\big(\vector{\theta}^\top \vector{\psi}(\vector{x}) \big) g' \big(\vector{\theta}^\top \vector{\psi}(\vector{x}) \big)}{g\big(\vector{\theta}^\top \vector{\psi}(\vector{x}) \big)} - \frac{\int_{\mathbb{X}} \big(\vector{\theta}^\top \vector{\psi}(\vector{x}) \big)g'\big(\vector{\theta}^\top \vector{\psi}(\vector{x}) \big) \, \mu(d\vector{x})}{\int_{\mathbb{X}} g\big(\vector{\theta}^\top \vector{\psi}(\vector{x}) \big) \, \mu(d\vector{x})}. \numberthis \label{eq:score_param}
\end{align*}
First we prove that if~\eqref{eq:param_score_orth}, then $g$ is a positively homogeneous.
The second term in~\eqref{eq:score_param} does not depend on $\vector{x}$, so if~\eqref{eq:param_score_orth}, there exists some function $k(\vector{\theta})$ such that
\begin{align*}
    \frac{ g' \big(\vector{\theta}^\top \vector{\psi}(\vector{x}) \big)}{g\big(\vector{\theta}^\top \vector{\psi}(\vector{x}) \big)} = \big(\vector{\theta}^\top \vector{\psi}(\vector{x}) \big)^{-1} k(\vector{\theta}).
\end{align*}
Let $\mathbb{A}_{\vector{\theta}} = \{\vector{\theta}^\top \vector{\psi}(\vector{x}) \mid \vector{x} \in \mathbb{X}\} \subseteq \mathbb{R}$.
Suppose that for each $\vector{\theta}$, $\mathbb{A}_{\vector{\theta}}$ contains an open set containing $0$. 
Consider some connected subset containing $0$.
Then for all $a$ in this subset, $a g'(a) = k(\vector{\theta}) g(a)$. 
By Euler's theorem for homogeneous functions, this leads to unique solutions of the form
\begin{align*}
    g(a) = \begin{cases} c_1(\vector{\theta}) |a|^{k(\vector{\theta})} \quad a > 0, \\
    c_2(\vector{\theta}) |a|^{k(\vector{\theta})} \quad a \leq 0
    \end{cases}
\end{align*}
for some $c_1(\vector{\theta})$ and $c_2(\vector{\theta})$.
For any two $\vector{\theta}_1$ and $\vector{\theta}_2$, since $\mathbb{A}_{\vector{\theta}_1}$ and $\mathbb{A}_{\vector{\theta}_2}$ is open and contains zero, we must have agreement of the solutions $g$,
\begin{align*}
    c_1(\vector{\theta}_1) |a|^{k(\vector{\theta}_1)} &= c_1(\vector{\theta}_2) |a|^{k(\vector{\theta}_2)}, \\
    c_2(\vector{\theta}_1) |a|^{k(\vector{\theta}_1)} &= c_2(\vector{\theta}_2) |a|^{k(\vector{\theta}_2)}.
\end{align*}
Without loss of generality, choose $g(1) = 1$, implying $c_1(\vector{\theta})$ is constant $1$. 
We then have $k(\vector{\theta}_1)$ is constant, and $c_2(\vector{\theta}_2)$ is constant.
Thus $g$ is of the form
\begin{align*}
    g(a) = \begin{cases} \phantom{c_2}|a|^k, \quad a > 0 \\
    c_2 |a|^k, \quad a \leq 0. \end{cases}
\end{align*}
\iffalse
Fixing any $\vector{\theta}$, $g$ must satisfy
\begin{align*}
    g'(a) / g(a) = k(\vector{\theta}) a^{-1}
\end{align*}
for values of $a$ on $\{ \vector{\theta}^\top \vector{\psi}(\vector{x}) \mid \vector{x} \in \mathbb{X}\} \subseteq \mathbb{R}$.
Solving on the larger space $\mathbb{R}$, we find unique solutions (up to a multiplicative constant) of the form $g(a) \propto a^{k(\vector{\theta})}$. 
Solutions on the domain $\{ \vector{\theta}^\top \vector{\psi}(\vector{x}) \mid \vector{x} \in \mathbb{X}\} $ inherit this uniqueness. 
Since $g$ must be nonnegative and finite, $k(\vector{\theta})$ must be positive and even integer-valued.
Since $g\big(\vector{\theta}^\top \vector{\psi}(\vector{x}) \big)$ is a continuous function of $\vector{\theta}$, the function $k$ must also be continuous. 
The only even integer-valued and continuous function $k$ is a constant even-integer function.
\fi
The reverse direction follows directly by choosing $g$ to be a positively homogeneous function, resulting in a value of $0$ in~\eqref{eq:score_param}.

Note that $c_2 >0$ because $g$ is nonnegative, and $k \geq 2$ because $g$ is twice continuously differentiable. The case $k=0$ is excluded because the Fisher information is exactly $\matrix{0}$.
\end{proof}

\dimaugregular*
\begin{proof}
Choose some nonzero vector $\vector{\theta}'$ and consider the nonnegative quadratic
\begin{align*}
    \vector{\theta}'^\top \matrix{G}_{p_1(\cdot) p_2(\cdot)} (\vector{\theta}) \vector{\theta}' &= \vector{\theta}'^\top \matrix{G}_{p_1(\cdot) } (\vector{\theta}) \vector{\theta}'  + \vector{\theta}'^\top \matrix{G}_{p_2(\cdot)} (\vector{\theta}) \vector{\theta}'.
\end{align*}
    Suppose $\vector{\theta}'$ is proportional to $\vector{\theta}$. Then the first term is exactly zero and the second term is proportional to $\frac{1}{z(\vector{\theta})^2} \vector{\theta}^\top \nabla z(\vector{\theta}) \nabla z(\vector{\theta})^\top \vector{\theta} \propto 1>0$, since $z$ is positively homogeneous. 
    Suppose otherwise that $\vector{\theta}'$ is not proportional to $\vector{\theta}$. Then the $\vector{\theta}'^\top \matrix{G}(\vector{\theta}) \vector{\theta}'$ is the sum of a nonnegative term and some integrals of the form
    \begin{align*}
          \frac{1}{z(\vector{\theta})}  k^2 \int \big| \vector{\theta}'^\top \vector{\psi}(\vector{x}) \big|^2 \big| \vector{\theta}^\top \vector{\psi}(\vector{x}) \big|^{k-2} \mu(d\vector{x}),
    \end{align*}
    and by Assumption~\ref{ass:feature_regularity}, $\vector{\theta}'^\top \vector{\psi}(\vector{x})$ and $\vector{\theta}^\top \vector{\psi}(\vector{x})$ must simultaneously take a nonzero value over a set of nonzero measure.
\end{proof}

\characterisation*

\begin{proof}
    We desire
\begin{align*}
    \nabla^2 \Big( \chi\big(z(\vector{\theta}) \big) \Big) &= c\big(z(\vector{\theta}) \big) \matrix{G}(\vector{\theta}). \numberthis \label{eq:conformal}
\end{align*}
Our approach is to compute both the left hand side and right hand side of~\eqref{eq:conformal}, and then match the coefficients of integrals on either side. 
In order to make clear the coefficients that we are matching, we colour certain factors in the derivation which follows.
We first compute the left hand side,
\begin{align*}
    \nabla^2 \Big( \chi\big(z(\vector{\theta}) \big) \Big) &= \nabla\Bigg( \chi'\big(z(\vector{\theta}) \big) \nabla z(\vector{\theta}) \Bigg) \\
    &= \chi''\big(z(\vector{\theta}) \big)  \nabla z(\vector{\theta})  \nabla z(\vector{\theta})^\top + \chi'\big(z(\vector{\theta}) \big) \nabla^2 z(\vector{\theta}) \\
    &= \chi'\big(z(\vector{\theta}) \big) \int_{\mathbb{X}} \vector{\psi}(\vector{x}) \vector{\psi}(\vector{x})^\top \textcolor{green}{g''\big( \vector{\theta}^\top \vector{\psi}(\vector{x}) \big)} \, d\mu(\vector{x}) + \textcolor{blue}{\chi''\big(z(\vector{\theta}) \big)}  \nabla z(\vector{\theta})  \nabla z(\vector{\theta})^\top \\
    &= \int_{\mathbb{X}} \chi'\big(z(\vector{\theta}) \big)  \vector{\psi}(\vector{x}) \vector{\psi}(\vector{x})^\top \textcolor{green}{g''\big( \vector{\theta}^\top \vector{\psi}(\vector{x}) \big)} + \frac{1}{\mu(\mathbb{X})}\textcolor{blue}{\chi''\big(z(\vector{\theta}) \big)}  \nabla z(\vector{\theta})  \nabla z(\vector{\theta})^\top \, d\mu(\vector{x})  \numberthis \label{eq:conformal_lhs}
\end{align*}
We then compute the right hand side,
\begin{align*}
    c\big(z(\vector{\theta}) \big) \matrix{G}(\vector{\theta}) &= \frac{c\big(z(\vector{\theta}) \big)}{z(\vector{\theta})} \int_{\mathbb{X}} \vector{\psi}(\vector{x}) \vector{\psi}(\vector{x})^\top \frac{g'\big( \vector{\theta}^\top \vector{\psi}(\vector{x}) \big)^2}{g\big( \vector{\theta}^\top \vector{\psi}(\vector{x}) \big)} \, d\mu(\vector{x}) - \frac{c\big(z(\vector{\theta}) \big)}{z(\vector{\theta})^2} \Big( \frac{1}{\sigma^2} - 1 \Big) \nabla z(\vector{\theta}) z(\vector{\theta})^\top,
\end{align*}
Therefore
\begin{align*}
    &c\big(z(\vector{\theta}) \big) \matrix{G}(\vector{\theta}) \\
    &= \frac{c\big(z(\vector{\theta}) \big)}{z(\vector{\theta})} \int_{\mathbb{X}} \vector{\psi}(\vector{x}) \vector{\psi}(\vector{x})^\top \textcolor{green}{\frac{g'\big( \vector{\theta}^\top \vector{\psi}(\vector{x}) \big)^2}{g\big( \vector{\theta}^\top \vector{\psi}(\vector{x}) \big)}} \, d\mu(\vector{x}) + \textcolor{blue}{\frac{c\big(z(\vector{\theta}) \big)}{z(\vector{\theta})^2} \Big( \frac{1}{\sigma^2} - 1 \Big)}\nabla z(\vector{\theta}) z(\vector{\theta})^\top  \\
    &= \int_{\mathbb{X}} \vector{\psi}(\vector{x}) \vector{\psi}(\vector{x})^\top \frac{c\big(z(\vector{\theta}) \big)}{z(\vector{\theta})} \textcolor{green}{\frac{g'\big( \vector{\theta}^\top \vector{\psi}(\vector{x}) \big)^2}{g\big( \vector{\theta}^\top \vector{\psi}(\vector{x}) \big)}}  + \frac{1}{\mu(\mathbb{X})} \textcolor{blue}{\frac{c\big(z(\vector{\theta}) \big)}{z(\vector{\theta})^2} \Big( \frac{1}{\sigma^2} - 1 \Big)}\nabla z(\vector{\theta}) z(\vector{\theta})^\top \, d\mu(\vector{x}). \numberthis \label{eq:conformal_rhs}
\end{align*}
If $\mu$ is allowed to belong to some suitable class of test functions (a subset of all $\mu$ such that $\mu(\mathbb{X}) < \infty$), equality of the integrals implies equality of the integrands in~\eqref{eq:conformal_lhs} and~\eqref{eq:conformal_rhs}, by the fundamental lemma of calculus of variations.
Note that $\nabla z(\vector{\theta})$ is an integral of $\vector{\psi}(\vector{x})$, so if $\vector{\psi}(\vector{x})$ is not proportional to a constant in $\vector{x}$ (as in Assumption~\ref{ass:reg_psi}), then $\nabla z(\vector{\theta})$ and $\vector{\psi}(\vector{x})$ are linearly independent. Hence their rank $1$ outer products $\nabla z(\vector{\theta}) \nabla z(\vector{\theta})^\top$ and $\vector{\psi}(\vector{x}) \vector{\psi}(\vector{x})^\top$ are also linearly independent. 
Matching the coefficients of the second terms on both the left hand side~\eqref{eq:conformal_lhs} and the right hand side~\eqref{eq:conformal_rhs},
\begin{align*}
    \textcolor{blue}{\chi''\big(z(\vector{\theta}) \big)} &= \textcolor{blue}{\frac{c\big(z(\vector{\theta}) \big)}{z(\vector{\theta})^2} \Big( \frac{1}{\sigma^2} - 1 \Big)} \\
    \implies \chi'\big(z \big) &= \Bigg({\underbrace{\Big( \frac{1}{\sigma^2} - 1 \Big)}_{:=r}} \Bigg( \int \frac{c}{z^2} \, dz \Bigg)  + c_0\Bigg),
\end{align*}
where $c_0$ is a constant of integration.
Plugging back into the first terms of~\eqref{eq:conformal_lhs} and~\eqref{eq:conformal_rhs}, we then see that the original condition~\eqref{eq:conformal} becomes
\begin{align*}
    \textcolor{red}{\Bigg(r \Bigg( \int \frac{c(z)}{z^2} \, dz \Bigg) \Bigg|_{z = z(\vector{\theta})} + c_0\Bigg)} & \textcolor{green}{g''\big( \vector{\theta}^\top \vector{\psi}(\vector{x}) \big)}  = \textcolor{red}{\frac{c\big(z(\vector{\theta}) \big)}{z(\vector{\theta})}} \textcolor{green}{\frac{g'\big( \vector{\theta}^\top \vector{\psi}(\vector{x}) \big)^2}{g\big( \vector{\theta}^\top \vector{\psi}(\vector{x}) \big)}}.
\end{align*}
Since $g$ need only be determined up to a proportionality constant, noting that the red factors do not depend on $\vector{x}$, we may take the two green factors as equal and the two red factors as equal.
Thus two independent equations are satisfied, with $r = r_1 r_2$,
\begin{align*}
    r_1  \int \frac{c(z)}{z^2} \, dz + \text{const}  &=\frac{c(z)}{z} \\
    r_2 g''(a) &= \frac{g'(a)^2}{g(a)}.
\end{align*}
It can be verified that
\begin{align*}
    c(z) = c_1 z^{r_1 + 1}  \qquad \text{and} \qquad 
    g(a) = \Big| \frac{r_2-1}{r_2}(a-1) \pm 1 \Big|^{\frac{r_2}{r_2-1}} s(a)
\end{align*}
solve these two independent equations, where $s$ is a positive scaling function depending only on the sign of $a$, provided $r_2/(r_2-1) \geq 2$.
The latter solution for $g$ is unique, and is obtained as follows.
We may transform the ODE to be an ODE in the log unnormalised density.
Choose $l(a) = \log |g(a)|$, noting that $g(a) \geq 0$, and $g(a) =0$ for a countable number of points.
We solve the ODE that follows over disjoint open intervals separated by points such that $g(a) = 0$ (of which there are countably many, by Assumption~\ref{ass:reg_g}). 
(After solving over such regions, we will note there is only one point where $g(a)=0$, and take the solution to be the union of both solutions over disjoint regions).
Under this substitution, we find
\begin{align*}
    g'(a) &= g(a) l'(a), \quad \text{ and}\\
    g''(a) &= g(a) l'(a)^2 + g(a) l''(a), \quad \text{so} \\
    r g(a)^2  \big( l'(a)^2 + l''(a) \big) &= g(a)^2 l'(a)^2 \\
    (r-1) l'(a)^2 + r l''(a)  &= 0.
\end{align*}
This is an example of a Bernoulli differential equation in $l'(a)=h(a)$,
\begin{align*}
    h'(a) &= \frac{1-r}{r} h(a)^2.
\end{align*}
Solving proceeds by direct integration,
\begin{align*}
    \int h^{-2} \, dh =- \frac{1}{h} &= \frac{1-r}{r}a + c_1\\
    h(a) &= \Big( \frac{r-1}{a} - c_1 \Big)^{-1} \\
    l(a) &= \frac{r}{r-1} \log \Big| \frac{r-1}{r}a - c_1 \Big| + c_2 \\
    g(a) &=  \Big| \frac{r-1}{r}a - c_1 \Big|^{\frac{r}{r-1}} \exp(c_2).
\end{align*}
Therefore, noting that $g$ is continuous by assumption, there is only one point $a = r/(r-1)c_1$ such that $g(a) = 0$.
Taking the union of solutions over $a=0$, $a > r/(r-1)c_1$ and $a < r/(r-1)c_1$, and imposing $g(1)=1$, we have
\begin{align*}
        g(a) &=  \Big| \frac{r-1}{r}(a-1)\exp\big((r-1)c_2/r \big) \pm 1 \Big|^{\frac{r}{r-1}} s(a)
\end{align*}
where $s(a)$ is a positive scaling function which depends only on the sign of $\frac{r-1}{r}(a-1)\exp\big((r-1)c_2/r \big) \pm 1 $.
In the case where $r \neq 1$, such a $g$ results in a $g$ family with a positively homogeneous $g$, after accounting for the bias reparameterisation.
Finally note that as $\lim_{r\to 1} g(a) = \exp\big( \pm(a-1) \big)$ gives an exponential family by the limit characterisation of the exponential function.

\iffalse
Valid solutions for $g$ must be nonnegative over all of $\mathbb{R}$.

First suppose $r_2 \neq 1$. 
We must have that $r_2/(r_2-1)$ is an even nonnegative integer, since $g$ is nonnegative and finite.
We may take $c_2=1$ without loss of generality, since $g$ is only defined up to proportionality constant. 
This leads to functions of the form
\begin{align*}
    g(a) = \big(c_3 r_2 + a(1-r_2) \big)^{r_2/(r_2-1)},
\end{align*}
where $r_2/(r_2-1)$ is even. 
Since $g\big( \vector{\theta}^\top \vector{\psi}(\vector{x}) \big) = \big(c_3 r_2 + \theta_n(1-r_2) + (1-r_2) \big(\vector{\theta}_{:n-1}^\top \vector{\psi}(\vector{x})_{:n-1} \big) \big)^{r_2/(r_2-1)},$ we see that the family is an even-order monomial family with reparameterisation $\hat{\vector{\theta}} = \big( (1-r_2) \vector{\theta}_{:n-1}; c_3r_2 + \theta_n(1-r_2) \big)$, since bias terms are included in Assumption~\ref{ass:reg_psi}.

Now suppose $r_2 = 1$. Then through either the limit definition of the exponential function, or by solving the equation $g''(a) = g'(a)^2/g(a)$ with the substitution $h(a) = g'(a)$, unique solutions for $g$ must take the form $g(a) = \exp(c_4 a)$ for some constant $c_4$. 
\fi
\end{proof}

\section{Proofs for \S~\ref{sec:estimation}}
\subsection{Divergence inequalities}
The SH distance, TV distance and KL divergence satisfy
\begin{align*}
    %D_{1 - \sqrt{\cdot}} [p:p'] \leq D_{\frac{1}{2}|\cdot - 1|}[p:p'] \leq \sqrt{2} \sqrt{D_{1 - \sqrt{\cdot}} [p:p']} \leq \sqrt{ D_{(\cdot) \log(\cdot)} [p:p']}. \numberthis \label{eq:pinsker}
    \sqhel [p:p'] \leq \tv [p:p'] \leq \sqrt{2} \sqrt{\sqhel [p:p']} \leq \sqrt{ \kl [p:p']}, \numberthis \label{eq:pinsker}
\end{align*}
see equation (16) of~\citet{sason2016f} for the last inequality, and note that the first two inequalities result from inequalities between the $1$-norm and $2$-norm.
In general a reverse inequality, upper bounding the KL divergence in terms of the TV distance is not possible. 
However, in the restricted setting where $0< \big(\sup \frac{p(\vector{x})}{p'(\vector{x})}\big)^{-1} := r_0 < 1$, an additional reverse inequality may be added to the chain (see~\citet[Theorem 7]{verdu2014total}, or a strengthened result under an additional assumption~\citep[equation 295]{sason2016f}), 
%https://ieeexplore.ieee.org/stamp/stamp.jsp?tp=&arnumber=7552457
%https://arxiv.org/pdf/1508.00335
\begin{align*}
    \sqrt{ \kl [p:p']} \leq \sqrt{\frac{\log (r_0^{-1})}{2(1-r_0)} \tv [p:p']}. \numberthis \label{eq:reverse_pinsker}
\end{align*}

\subsection{Proofs}
\mlemisspecified*
\begin{proof}
We first show that up to some small corrections, the difference between the KL divergence of the finite sample MLE and the infinite sample MLE is asymptotically Gaussian, with zero mean and variance scaling like $N^{-1}$. 
We then show that the small corrections actually account for all of the asymptotic variance, and hence the difference in KL divergences converges to zero.
\begin{theorem}
\label{thm:estimation}
Suppose Assumptions~\ref{ass:strictly_pd},~\ref{ass:well_posed} and~\ref{ass:ratio} hold.
Define
\begin{align*}
    h(\vector{\theta}_\ast) &= \kl\big(q:p(\cdot\mid \vector{\theta}_\ast) \big) - \underbrace{2 \log  \vector{\theta}_\ast^\top \matrix{K}_{\mu, \vector{\psi}} \vector{\theta}_\ast}_{=0}, \quad \text{and}\\ 
    h(\hat{\vector{\theta}}_N) &= \kl\big(q:p(\cdot; \hat{\vector{\theta}}_N) \big) - \underbrace{2 \log  \hat{\vector{\theta}}_N^\top \matrix{K}_{\mu, \vector{\psi}} \hat{\vector{\theta}}_N}_{\text{correction}}.
\end{align*}
Then asymptotically in $N$,
\begin{align*}
        \sqrt{N}\big( h(\hat{\vector{\theta}}_N) - h(\vector{\theta}_\ast) \big) \sim \mathcal{N}( 0, 4 ).
    \end{align*}
\end{theorem}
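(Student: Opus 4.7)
The core observation I would exploit is that the objective~\eqref{eq:qmle} decomposes cleanly under the polar reparameterisation $\vector{\theta} = s \vector{u}$ with $z(\vector{u}) = 1$ and $s > 0$: since $\log p(\vector{x} \mid \vector{\theta})$ is scale-invariant while $\log z(s \vector{u}) = 2\log s$, the per-sample loss splits as $-\log p(\vector{x}_i \mid \vector{u}) + \tfrac{1}{2}(a_i - 2\log s)^2$, a sum of a $\vector{u}$-only and an $s$-only term. The optimal scale therefore solves $\log z(\hat{\vector{\theta}}_N) = 2 \log \hat s_N = \bar a_N := \tfrac{1}{N}\sum_i a_i$, while the direction $\hat{\vector{\theta}}_N^{(1)} := \hat{\vector{\theta}}_N / \hat s_N \in \overline{\mathbbold{\Theta}}$ is a classical $M$-estimator for $\vector{\theta}_\ast$ that does not involve the $a_i$ at all. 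Substituting and using $z(\vector{\theta}_\ast)=1$ gives the key identity
\begin{align*}
h(\hat{\vector{\theta}}_N) - h(\vector{\theta}_\ast) = \Big[\kl\big(q : p(\cdot\mid \hat{\vector{\theta}}_N^{(1)})\big) - \kl\big(q : p(\cdot\mid \vector{\theta}_\ast)\big)\Big] - 2\bar a_N,
\end{align*}
so the proof reduces to showing the bracketed KL gap is $o_p(N^{-1/2})$.

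For this, I would use that the full gradient $\nabla_{\vector{\theta}} \kl(q:p(\cdot\mid\vector{\theta}))$ actually vanishes at $\vector{\theta}_\ast$, not merely its projection onto the tangent space of $\overline{\mathbbold{\Theta}}$. Scale invariance of $p$ implies $\vector{\theta}^\top \nabla \kl(q:p(\cdot\mid\vector{\theta})) = 0$ identically, while the KKT condition for~\eqref{eq:kl_minimiser} gives $\nabla \kl(\vector{\theta}_\ast) = \lambda \nabla z(\vector{\theta}_\ast) = 2\lambda \matrix{K}_{\mu, \vector{\psi}} \vector{\theta}_\ast$. Pairing the two yields $2\lambda\, z(\vector{\theta}_\ast) = 0$, hence $\lambda = 0$ and $\nabla \kl(\vector{\theta}_\ast) = \vector{0}$. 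Classical $M$-estimator theory, combined with Assumptions~\ref{ass:strictly_pd}, \ref{ass:well_posed} and \ref{ass:ratio}, then yields $\hat{\vector{\theta}}_N^{(1)} - \vector{\theta}_\ast = O_p(N^{-1/2})$, and a second-order Taylor expansion of $\kl$ about $\vector{\theta}_\ast$ with vanishing linear term gives the bracketed quantity as $O_p(N^{-1})$.

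Combining via Slutsky's theorem, $\sqrt{N}\big(h(\hat{\vector{\theta}}_N) - h(\vector{\theta}_\ast)\big) = -2\sqrt{N}\bar a_N + o_p(1)$. Since $a_1, \ldots, a_N$ are i.i.d.\ $\mathcal{N}(0, 1)$ and independent of $\vector{x}_1, \ldots, \vector{x}_N$, the quantity $-2\sqrt{N}\bar a_N$ is \emph{exactly} $\mathcal{N}(0, 4)$-distributed for every $N$, yielding the stated limit.

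The main obstacle is that the feasible set $\hat{\mathbbold{\Theta}}$ couples scale and direction through $\theta_1 \geq \epsilon$ and $\vector{\theta}^\top\matrix{K}_{\mu,\vector{\psi}}\vector{\theta} \leq R$ simultaneously, so the polar decomposition is only exact on the event that $\hat{\vector{\theta}}_N$ sits in the interior of $\hat{\mathbbold{\Theta}}$. I would discharge this by noting that $\hat s_N = \exp(\bar a_N / 2) \to 1$ in probability (so $\hat s_N^2 < R$ eventually, since $R > 1$) and by invoking consistency of $\hat{\vector{\theta}}_N^{(1)}$ on $\overline{\mathbbold{\Theta}}$ to place it in a neighbourhood of the interior point $\vector{\theta}_\ast$ of the half-space $\theta_1 > 0$, so that the coupled constraint is slack with probability approaching one. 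A secondary technicality is verifying the smoothness and moment hypotheses underlying the constrained $M$-estimator CLT for $\hat{\vector{\theta}}_N^{(1)}$, which is where Assumption~\ref{ass:ratio} enters to control the second moment of the score $\vector{\psi}(\vector{\rv{x}})/(\vector{\theta}_\ast^\top \vector{\psi}(\vector{\rv{x}}))$ under $q$.
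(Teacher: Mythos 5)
Your proof is correct but follows a genuinely different route from the paper's. The paper invokes White's sandwich theorem for the quasi-MLE to get $\sqrt{N}(\hat{\vector{\theta}}_N - \vector{\theta}_\ast) \to \mathcal{N}(0, \matrix{A}^{-1}\matrix{B}\matrix{A}^{-1})$, computes $\matrix{A}(\vector{\theta}_\ast) = -2\matrix{K}_{r(\cdot)\mu,\vector{\psi}} - 2\matrix{K}_{\mu,\vector{\psi}}$ and $\matrix{B}(\vector{\theta}_\ast)$ explicitly, and then applies the delta method to $h$, with $\nabla h(\vector{\theta}_\ast) = -2(\matrix{K}_{\mu,\vector{\psi}}+\matrix{K}_{r(\cdot)\mu,\vector{\psi}})\vector{\theta}_\ast$ cancelling against $\matrix{A}^{-1}$ so that the asymptotic variance collapses to $4\vector{\theta}_\ast^\top\matrix{K}_{r(\cdot)\mu,\vector{\psi}}\vector{\theta}_\ast = 4$. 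You instead exploit the exact separability of the augmented objective under the polar reparameterisation, so that the correction term is identically $2\bar a_N$ on the event that the constraints are slack, and then kill the KL gap at rate $o_p(N^{-1/2})$ via the observation that the \emph{full} gradient $\nabla\kl(\vector{\theta}_\ast)$ vanishes (orthogonal singularity forces the Lagrange multiplier of the constraint $z(\vector{\theta})=1$ to be zero). Both arguments are sound; I checked that your vanishing-gradient claim is consistent with the paper's expressions (it is equivalent to $\matrix{K}_{r(\cdot)\mu,\vector{\psi}}\vector{\theta}_\ast = \matrix{K}_{\mu,\vector{\psi}}\vector{\theta}_\ast$, which indeed follows from stationarity plus $\vector{\theta}_\ast^\top\matrix{K}_{r(\cdot)\mu,\vector{\psi}}\vector{\theta}_\ast = 1$). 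What your route buys is transparency: it shows that the entire $\mathcal{N}(0,4)$ fluctuation is the Gaussian augmentation noise $-2\sqrt{N}\bar a_N$ (exactly distributed, not just asymptotically), and it establishes directly that the KL difference is $o_p(N^{-1/2})$ --- which is precisely the content of the paper's subsequent theorem, obtained there only after an additional independence-and-variance-subtraction argument. What the paper's route buys is the explicit sandwich covariance $\hat{\matrix{K}}$ of the parameter estimate as a byproduct, and it avoids having to handle the non-product feasible set, which you correctly discharge via consistency and $R>1$. Your remaining obligations (a uniform Hessian bound for the second-order Taylor remainder, and nondegeneracy of the restricted population Hessian for $\sqrt{N}$-consistency of the direction estimator) are of the same character as the regularity conditions the paper delegates to White's theorem, and are covered by Assumptions~\ref{ass:strictly_pd} and~\ref{ass:ratio} since $2\matrix{K}_{r(\cdot)\mu,\vector{\psi}} + 2\matrix{K}_{\mu,\vector{\psi}} \succ 0$.
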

\begin{proof}
A classical result of~\citet[Theorem 3.2]{white1982maximum} says that the quasi maximum likelihood estimator is still asymptotically normally distributed $\hat{\vector{\theta}}_N \sim \mathcal{N}\big( \vector{\theta}_\ast, \hat{\matrix{K}}\big)$, although unlike in MLE, its mean parameter $\vector{\theta}_\ast$ is no longer the true parameter and its precision matrix $\hat{\matrix{K}}$ is no longer governed by the Fisher information.
All of the required regularity conditions for~\citet[Theorem 3.2]{white1982maximum} hold. 
The mean parameter is~\eqref{eq:kl_minimiser}, the minimiser of the KL divergence from the dimension augmented squared family model to the dimension augmented target distribution.
Since the first term of the objective~\eqref{eq:kl_minimiser} does not depend on the value of $ \vector{\theta}'^\top \matrix{K}_{\mu, \vector{\psi}} \vector{\theta}'$, we must have that ${\vector{\theta}_\ast}^\top \matrix{K}_{\mu, \vector{\psi}} {\vector{\theta}_\ast}=1$ by minimising the second term of the objective~\eqref{eq:kl_minimiser}.
The covariance parameter $\hat{\matrix{K}}$ is formed by ``sandwiching'' the expected Hessian of the log likelihood against the inverse expected outer products of the score functions~\citep[Theorem 3.2]{white1982maximum},
\begin{align*}
    \hat{\matrix{K}}(\vector{\theta}_\ast) &= \matrix{A}(\vector{\theta}_\ast)^{-1} \matrix{B}(\vector{\theta}_\ast) \matrix{A}(\vector{\theta}_\ast)^{-1},
\end{align*}
where
\begin{align*}
   \matrix{A}(\vector{\theta}) &= \mathbb{E}_q\big[ \frac{\partial^2}{\partial \vector{\theta} \partial \vector{\theta}^\top} \log p(\hat{\vector{\rv{x}}} \mid \vector{\theta}) \mathcal{N}(a\mid \log \vector{\theta}^\top \matrix{K}_{\mu, \vector{\psi}} \vector{\theta} , 1) \big] \\
   \matrix{B}(\vector{\theta}) &= \mathbb{E}_q\big[ \frac{\partial}{\partial \vector{\theta} } \log p(\hat{\vector{\rv{x}}} \mid \vector{\theta}) \frac{\partial}{\partial \vector{\theta} } \mathcal{N}(a\mid \log \vector{\theta}^\top \matrix{K}_{\mu, \vector{\psi}} \vector{\theta} , 1) \log p(\hat{\vector{\rv{x}}} \mid \vector{\theta}) \mathcal{N}(a\mid \log \vector{\theta}^\top \matrix{K}_{\mu, \vector{\psi}} \vector{\theta} , 1)^\top\big],
\end{align*}
and expectations are taken with respect to the true density $q$.
In this case, it can be shown (see Appendix~\ref{app:misspecified_calc}) that
\begin{align*}
    \matrix{A}(\vector{\theta}_\ast) &= -2 \matrix{K}_{r(\cdot) \mu, \vector{\psi}} - 2\matrix{K}_{\mu, \vector{\psi}} \qquad \text{and} \\
    \matrix{B}(\vector{\theta}_\ast) &=  4 \big( \matrix{K}_{r(\cdot) \mu, \vector{\psi}}  -  \matrix{K}_{r(\cdot) \mu, \vector{\psi}} \vector{\theta}_\ast {\vector{\theta}_\ast}^\top \matrix{K}_{\mu, \vector{\psi}} - \matrix{K}_{ \mu, \vector{\psi}} \vector{\theta}_\ast {\vector{\theta}_\ast}^\top \matrix{K}_{r(\cdot) \mu, \vector{\psi}} + 2 \matrix{K}_{\mu, \vector{\psi}} \vector{\theta}_\ast {\vector{\theta}_\ast}^\top \matrix{K}_{\mu, \vector{\psi}} \big).
\end{align*}
Note that $\matrix{K}_{r(\cdot) \mu, \vector{\psi}}$ is positive semidefinite, and therefore under Assumption~\ref{ass:strictly_pd}, ${\matrix{K}_{r(\cdot) \mu, \vector{\psi}} + \matrix{K}_{\mu, \vector{\psi}}}$ is strictly positive definite, so $\matrix{A}(\vector{\theta}_\ast)$ is invertible.

The delta method can be used to transfer asymptotic normality of the parameter to asymptotic normality of a function of the parameter. 
The delta method says that
    \begin{align*}
        \sqrt{N}\big( h(\hat{\vector{\theta}}_N) - h(\vector{\theta}_\ast) \big) \sim \mathcal{N}\Big( 0, \nabla h(\vector{\theta}_\ast)^\top \hat{\matrix{K}} \nabla h(\vector{\theta}_\ast) \Big),
    \end{align*}
    for any differentiable function $h$ with a nonzero derivative. 
    We set $h$ to be the KL divergence minus $2\log  \vector{\theta}^\top \matrix{K}_{\mu, \vector{\psi}} \vector{\theta} $ and compute the required derivative,
    \begin{align*}
        h(\vector{\theta}) &= \kl\big(q:p(\cdot\mid \vector{\theta}) \big) - 2 \log  \vector{\theta}^\top \matrix{K}_{\mu, \vector{\psi}} \vector{\theta}= \mathbb{E}_q[ \log q - \log \vector{\theta}^\top \vector{\psi}(\vector{x}) \vector{\psi}(\vector{x})^\top \vector{\theta}] - \log  \vector{\theta}^\top \matrix{K}_{\mu, \vector{\psi}} \vector{\theta} \\
        \nabla h(\vector{\theta}) &= \frac{-2}{z(\vector{\theta})} \Big(  \matrix{K}_{\mu, \vector{\psi}} + \matrix{K}_{r(\cdot) \mu, \vector{\psi}} \Big) \vector{\theta} \\
        \nabla h(\vector{\theta}_\ast) &= -2 \Big(  \matrix{K}_{\mu, \vector{\psi}} + \matrix{K}_{r(\cdot) \mu, \vector{\psi}} \Big) \vector{\theta}_\ast.
    \end{align*}
    Noting that factors in $\nabla h(\vector{\theta}_\ast)$ cancel with inverse factors in $\matrix{A}(\vector{\theta}_\ast)^{-1}$ because the correction has been chosen conveniently, the asymptotic covariance $\nabla h(\vector{\theta}_\ast)^\top \hat{\matrix{K}} \nabla h(\vector{\theta}_\ast)$ is
    \begin{align*}
         4 {\vector{\theta}_\ast}^\top  \big( \matrix{K}_{r(\cdot) \mu, \vector{\psi}}  -  \matrix{K}_{r(\cdot) \mu, \vector{\psi}} \vector{\theta}_\ast {\vector{\theta}_\ast}^\top \matrix{K}_{\mu, \vector{\psi}} - \matrix{K}_{ \mu, \vector{\psi}} \vector{\theta}_\ast {\vector{\theta}_\ast}^\top \matrix{K}_{r(\cdot) \mu, \vector{\psi}} + 2 \matrix{K}_{\mu, \vector{\psi}} \vector{\theta}_\ast {\vector{\theta}_\ast}^\top \matrix{K}_{\mu, \vector{\psi}} \big) {\vector{\theta}_\ast}.
    \end{align*}
    Since ${\vector{\theta}_\ast}^\top \matrix{K}_{\mu, \vector{\psi}} {\vector{\theta}_\ast}=1$, this simplifies as
    \begin{align*}
        &\phantom{{}={}} 4   {\vector{\theta}_\ast}^\top \matrix{K}_{r(\cdot) \mu, \vector{\psi}}\vector{\theta}_\ast \\
        &= 4  z(\vector{\theta}_\ast) {\vector{\theta}_\ast}^\top \Bigg( \int_{\mathbb{X}} \vector{\psi}(\vector{x})\vector{\psi}(\vector{x})^\top \frac{q(\vector{x})}{{\vector{\theta}_\ast}^\top \vector{\psi}(\vector{x})\vector{\psi}(\vector{x})^\top \vector{\theta}_\ast} \, d\mu(\vector{x}) \Bigg) \vector{\theta}_\ast \\
        &= 4.
    \end{align*}
\end{proof}
Now observe that for any $N$, $ \kl\big(q:p(\cdot; \hat{\vector{\theta}}_N) \big)$ and $2 \log  \hat{\vector{\theta}}_N^\top \matrix{K}_{\mu, \vector{\psi}} \hat{\vector{\theta}}_N$ are statistically independent. 
To see this, note that~\eqref{eq:qmle} can be solved by solving two independent optimisation problems --- one depending only on $a_i$, and one depending only on $\vector{x}_i$ --- because of absolute homogeneity. 
The asymptotic variance of $\sqrt{N} 2 \log  \hat{\vector{\theta}}_N^\top \matrix{K}_{\mu, \vector{\psi}} \hat{\vector{\theta}}_N$ is $4$, because the Fisher information of a Gaussian with unit variance and unknown mean is $1$, and $2 \log  \hat{\vector{\theta}}_N^\top \matrix{K}_{\mu, \vector{\psi}} \hat{\vector{\theta}}_N$ is twice the unknown mean. 
Hence by Theorem~\ref{thm:estimation},
\begin{align*}
     \sqrt{N} \Big| \kl\big(q:p(\cdot; \hat{\vector{\theta}}_N) \big) - \kl\big(q:p(\cdot\mid \vector{\theta}_\ast) \big) \Big| + \sqrt{N} 2 \log  \hat{\vector{\theta}}_N^\top \matrix{K}_{\mu, \vector{\psi}} \hat{\vector{\theta}}_N \sim \mathcal{N}(0, 4),
\end{align*}
and the two random variables on the left hand side are independent for all $N$. 
The asymptotic variance of the second term on the left hand side is $4$, and the asymptotic mean is $0$.
Therefore, the asymptotic mean and variance of the first term on the left hand side are both zero, implying convergence in probability to zero.
\end{proof}

\universalapprox*
\begin{proof}
Our approach is to bound the KL divergence of the optimal model using the universal approximation property.
In order to make use of the universal approximation property, we need to bound various statistical divergences in terms of the squared $L^2$ distance.
We will make use of the bounds~\eqref{eq:pinsker} and~\eqref{eq:reverse_pinsker}, and also a bound relating the squared Hellinger distance and the squared $L^2$ distance, as follows.
\begin{lemma}
\label{lemma:hel_bound}
    The squared Hellinger distance between a member of a squared family and an arbitrary probability denisty is bounded by the squared $L^2$ distance between $\vector{\theta}^\top \vector{\psi}(\cdot)/\sqrt{z(\vector{\theta})}$ and $\sqrt{q(\cdot)}$.
\end{lemma}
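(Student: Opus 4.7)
The plan is to unfold both sides as integrals over $\mathbb{X}$ with respect to $\mu$ and then apply an elementary pointwise inequality removing an absolute value. By the definition~\eqref{eq:sqHellinger} of the squared Hellinger distance and the form of a squared family density,
\[
\sqhel[p(\cdot\mid\vector{\theta}): q] \;=\; \tfrac{1}{2}\int_{\mathbb{X}} \Bigl(\sqrt{p(\vector{x}\mid\vector{\theta})}-\sqrt{q(\vector{x})}\Bigr)^2\, \mu(d\vector{x}) \;=\; \tfrac{1}{2}\int_{\mathbb{X}} \Bigl(\bigl|\vector{\theta}^\top\vector{\psi}(\vector{x})\bigr|/\sqrt{z(\vector{\theta})}-\sqrt{q(\vector{x})}\Bigr)^2\, \mu(d\vector{x}),
\]
while the targeted squared $L^2$ distance is the same integral with the absolute value removed from $\vector{\theta}^\top\vector{\psi}(\vector{x})$.

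The key step is the pointwise inequality: for any $a\in\mathbb{R}$ and $b\ge 0$, $(|a|-b)^2 \le (a-b)^2$, which follows from expanding both sides and noting $(a-b)^2 - (|a|-b)^2 = 2b(|a|-a)\ge 0$. I would apply this with $a = \vector{\theta}^\top\vector{\psi}(\vector{x})/\sqrt{z(\vector{\theta})}$ and $b = \sqrt{q(\vector{x})}\ge 0$ for $\mu$-almost every $\vector{x}$.

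Integrating the pointwise bound against $\mu$ and dividing by $2$ then yields
\[
\sqhel[p(\cdot\mid\vector{\theta}): q] \;\le\; \tfrac{1}{2}\int_{\mathbb{X}} \Bigl(\vector{\theta}^\top\vector{\psi}(\vector{x})/\sqrt{z(\vector{\theta})}-\sqrt{q(\vector{x})}\Bigr)^2\, \mu(d\vector{x}) \;=\; \sqltwo\bigl(\vector{\theta}^\top\vector{\psi}(\cdot)/\sqrt{z(\vector{\theta})}:\sqrt{q(\cdot)}\bigr),
\]
which is the claimed inequality.

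There is essentially no obstacle here; the only mild subtlety is that $\sqrt{q}$ is nonnegative while $\vector{\theta}^\top\vector{\psi}(\cdot)$ can change sign, which is precisely why the pointwise inequality yields a one-sided bound rather than an equality. This lemma is the bridge that will let the universal approximation hypothesis (Assumption~\ref{ass:universal_approx}, applied with $f=\sqrt{q}$) produce a bound on $\sqhel$, which can then be passed to $\kl$ via the reverse Pinsker inequality~\eqref{eq:reverse_pinsker} in the subsequent steps of the theorem.
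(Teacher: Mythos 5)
Your proof is correct and follows essentially the same route as the paper's: both reduce the claim to the pointwise inequality $(|a|-b)^2 \le (a-b)^2$ for $b \ge 0$ (the paper phrases it via a $\operatorname{sgn}$ manipulation that aligns the sign of $\sqrt{q}$ with $\vector{\theta}^\top\vector{\psi}(\vector{x})$, which is the same fact). Your direct expansion $(a-b)^2-(|a|-b)^2=2b(|a|-a)\ge 0$ is a slightly cleaner way to state the key step.
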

\begin{proof}
    \begin{align*}
        2 \sqhel [p: q]&= \int_{\mathbb{X}} \big(\sqrt{z}^{-1} \big| \vector{\theta}^\top \vector{\psi}(\vector{x}) \big| - \sqrt{q(\vector{x})} \big)^2 \, d\mu(\vector{x}) && \\
        &=\int_{\mathbb{X}} \Big(\sqrt{z}^{-1}  \text{sgn}\big(\vector{\theta}^\top \vector{\psi}(\vector{x}) \big) \vector{\theta}^\top \vector{\psi}(\vector{x})  - \sqrt{q(\vector{x})} \Big)^2 \, d\mu(\vector{x}) && \\
        &= \int_{\mathbb{X}} \Big( \sqrt{z}^{-1} \vector{\theta}^\top \vector{\psi}(\vector{x})  - \text{sgn}\big(\vector{\theta}^\top \vector{\psi}(\vector{x}) \big) \sqrt{q(\vector{x})} \Big)^2 \, d\mu(\vector{x}) && \\
        &\leq \int_{\mathbb{X}} \Big( \sqrt{z}^{-1} \vector{\theta}^\top \vector{\psi}(\vector{x})  -  \sqrt{q(\vector{x})} \Big)^2 \, d\mu(\vector{x}) && \\
        &= 2 \sqltwo\Big(\sqrt{z}^{-1} \vector{\theta}^\top \vector{\psi}(\vector{x}), \sqrt{q(\vector{x})}\Big) &&,
    \end{align*}
where the inequality holds because $\sqrt{q}$ is positive and the evaluation of the $\text{sgn}$ function always aligns the sign of $\sqrt{q(x)}$ with $\vector{\theta}^\top \vector{\psi}(\vector{x})$.
\end{proof}
We now construct an artificial KL divergence minimiser which we can bound, by considering mixtures of both $p(\vector{x}_i \mid \vector{\theta})$ and $q(\vector{x})$ with some arbitrarily small constant mixture component.
Suppose $1< \mu(\mathbb{X}) < \infty$. 
For some fixed $0 < \epsilon < 1$, we consider 
\begin{align*}
p^{\epsilon}(\vector{x}_i \mid \vector{\theta}) &= (1-\epsilon) \frac{\vector{\theta}^\top\vector{\psi}(\vector{x})\vector{\psi}(\vector{x})^\top \vector{\theta}}{\vector{\theta}^\top \matrix{K}_{\mu, \vector{\psi}} \vector{\theta}} + \epsilon/\mu(\mathbb{X}), \qquad q^\epsilon(\vector{x}) = (1-\epsilon) q(\vector{x}) + \epsilon/\mu(\mathbb{X}) \\
    \vector{\theta}_\ast^\epsilon &= \argmin_{\theta_1' \geq \varepsilon > 0}  \kl [q(\vector{x})  : p^{\epsilon}(\vector{x}_i \mid \vector{\theta})  ] + \kl [\mathcal{N}(a\mid 0, 1)  : \mathcal{N}(a\mid \log  \vector{\theta}^\top \matrix{K}_{\mu, \vector{\psi}} \vector{\theta} , 1)] \\
    &= \argmin_{\substack{\theta_1' \geq \varepsilon > 0 \\  \vector{\theta}^\top \matrix{K}_{\mu, \vector{\psi}} \vector{\theta} = 1}}  \kl [q(\vector{x})  : p^{\epsilon}(\vector{x}_i \mid \vector{\theta})  ].
\end{align*}

\begin{lemma}
\label{lemma:artificial_kl1}
Suppose Assumption~\ref{ass:universal_approx} holds, with $\sqrt{q} \in \mathcal{F}$.
Assume $q(\vector{x}) < q_{\text{max}} < \infty$ and $1\leq\mu(\mathbb{X})<\infty$.
Then there exists some constant $C$ depending on $\epsilon$ such that
\begin{align*}
    \sqrt{\kl\Big(q^\epsilon: p^{\epsilon}(\cdot \mid \vector{\theta}_\ast^\epsilon) \Big)} \leq C/n^{1/4}.
\end{align*}
\end{lemma}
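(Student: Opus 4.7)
The plan is to bound the KL at the minimiser $\vector{\theta}_\ast^\epsilon$ by the KL at a hand-picked candidate parameter coming from universal approximation, and then chain divergence inequalities back to a squared $L^2$ approximation error on $\sqrt{q}$. Since $\sqrt{q} \in \mathcal{F}$ and $\sqrt{q}$ is $\mu$-square-integrable (as $\int q \, d\mu = 1$), Assumption~\ref{ass:universal_approx} produces a parameter $\vector{\theta}_{\ast\ast}$ with $\sqltwo(\omega^{\ast\ast}, \sqrt{q}) \leq C/n$, where $\omega^{\ast\ast}=\vector{\theta}_{\ast\ast}^\top \vector{\psi}$. After rescaling $\vector{\theta}_{\ast\ast}$ by a positive constant so that $z(\vector{\theta}_{\ast\ast}) = 1$, a triangle inequality argument (using that $z(\vector{\theta}_{\ast\ast}) \to 1$ as the approximation error vanishes) still gives $\sqltwo(\omega^{\ast\ast}/\sqrt{z(\vector{\theta}_{\ast\ast})}, \sqrt{q}) \leq C'/n$. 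Then Lemma~\ref{lemma:hel_bound} upgrades this to $\sqhel[q : p(\cdot \mid \vector{\theta}_{\ast\ast})] \leq C'/n$.

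Next I would transfer this bound to the mixture versions $q^\epsilon$ and $p^\epsilon(\cdot \mid \vector{\theta}_{\ast\ast})$. By joint convexity of the $f$-divergence $\sqhel$ and the fact that both distributions share the same second component $\epsilon/\mu(\mathbb{X})$,
\begin{align*}
\sqhel[q^\epsilon : p^\epsilon(\cdot \mid \vector{\theta}_{\ast\ast})] \leq (1-\epsilon)\, \sqhel[q : p(\cdot \mid \vector{\theta}_{\ast\ast})] \leq (1-\epsilon) C'/n.
\end{align*}
The mixture construction also forces $p^\epsilon \geq \epsilon/\mu(\mathbb{X})$ pointwise, while $q^\epsilon \leq (1-\epsilon) q_{\max} + \epsilon/\mu(\mathbb{X})$, so the ratio $q^\epsilon/p^\epsilon$ is uniformly bounded by a constant $R = R(\epsilon, q_{\max}, \mu(\mathbb{X}))$. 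Hence both the standard Pinsker-type chain~\eqref{eq:pinsker} and the reverse Pinsker inequality~\eqref{eq:reverse_pinsker} apply with $r_0 = 1/R$, yielding
\begin{align*}
\kl[q^\epsilon : p^\epsilon(\cdot \mid \vector{\theta}_{\ast\ast})] \leq \tfrac{\log R}{2(1 - 1/R)} \tv[q^\epsilon : p^\epsilon(\cdot \mid \vector{\theta}_{\ast\ast})] \leq \tfrac{\log R}{2(1 - 1/R)} \sqrt{2(1-\epsilon) C'/n}.
\end{align*}

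To finish, since $\vector{\theta}_\ast^\epsilon$ is by construction the minimiser of this KL over the constrained parameter set, and the rescaled $\vector{\theta}_{\ast\ast}$ lies in that set (after the normalisation $z=1$ and, if necessary, a sign flip to orient the first coordinate positively), we obtain $\kl[q^\epsilon : p^\epsilon(\cdot \mid \vector{\theta}_\ast^\epsilon)] \leq \kl[q^\epsilon : p^\epsilon(\cdot \mid \vector{\theta}_{\ast\ast})] \leq C''(\epsilon)/\sqrt{n}$. Taking the square root delivers the advertised $C/n^{1/4}$ rate.

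The main obstacle I anticipate is the bookkeeping around the candidate $\vector{\theta}_{\ast\ast}$: universal approximation gives a function-space bound on $\omega^{\ast\ast}$, but the parameter constraint $(\theta_1)_{\ast\ast} \geq \varepsilon > 0$ is an artefact of removing sign redundancy in $\mathbbold{\Theta}$ and must be verified. Sign can always be flipped without changing $p(\cdot \mid \vector{\theta}_{\ast\ast})$, and the rescaling to $z=1$ can be done independently; the delicate point is that these two normalisations must be performed jointly so that neither the L2 bound nor the $\theta_1 \geq \varepsilon$ constraint is violated, and that the constants absorbed into $C''(\epsilon)$ remain finite as $\epsilon$ ranges over the feasible regime (noting the reverse Pinsker constant blows up as $\epsilon \to 0$).
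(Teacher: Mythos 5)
Your proposal is correct and follows essentially the same route as the paper's proof: reduce to a candidate parameter via the minimiser property, apply the reverse Pinsker inequality~\eqref{eq:reverse_pinsker} using the $\epsilon/\mu(\mathbb{X})$ floor in the mixture to bound the ratio, pass through TV and squared Hellinger via~\eqref{eq:pinsker}, invoke Lemma~\ref{lemma:hel_bound} to reach the squared $L^2$ distance, and finish with the universal approximation property for $\sqrt{q}$. The only (immaterial) differences are that you harvest the $(1-\epsilon)$ contraction at the Hellinger level via joint convexity where the paper uses the exact identity $\tv[q^\epsilon:p^\epsilon]=(1-\epsilon)\tv[q:p]$, and that you name the rescaled candidate $\vector{\theta}_{\ast\ast}$ up front where the paper performs the change of variables $\vector{\theta}'=z(\vector{\theta})^{-1/2}\vector{\theta}$ at the end.
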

\begin{proof}
We will use the inequality~\eqref{eq:reverse_pinsker}, which requires a parameter $r_0(\vector{\theta}) \in (0, 1)$, where
\begin{align*}
    r_0(\vector{\theta}) = \inf_{\vector{x} \in \mathbb{X}} \frac{ p^{\epsilon}(\vector{x}\mid \vector{\theta})}{q^\epsilon(\vector{x})}.
\end{align*}
Firstly, $r_0(\vector{\theta})$ is strictly greater than zero, since
\begin{align*}
    r_0(\vector{\theta}) &\geq \epsilon/\big(\mu(\mathbb{X}) (1-\epsilon) q_{\text{max}} + \epsilon\big) > 0.
\end{align*}
Secondly, $r_0$ is strictly less than $1$, since
\begin{align*}
    \big(1 - \epsilon + \epsilon/\mu(\mathbb{X}) \big)\vector{\theta}^\top \matrix{K}_{\mu, \vector{\psi}} \vector{\theta} &= \int_{\mathbb{X}} \frac{(1-\epsilon) \vector{\theta}^\top \vector{\psi}(\vector{x})\vector{\psi}(\vector{x})^\top \vector{\theta} + \big(\epsilon/\mu(\mathbb{X})\big) \vector{\theta}^\top \matrix{K}_{\mu, \vector{\psi}} \vector{\theta} }{q^\epsilon(\vector{x})} q^\epsilon(\vector{x}) \, \mu(d\vector{x}) \\
    &> \inf_{\vector{x} \in \mathbb{X}} \frac{(1-\epsilon) \vector{\theta}^\top \vector{\psi}(\vector{x})\vector{\psi}(\vector{x})^\top \vector{\theta} + \big(\epsilon/\mu(\mathbb{X})\big) \vector{\theta}^\top \matrix{K}_{\mu, \vector{\psi}} \vector{\theta} }{q^\epsilon(\vector{x})} \underbrace{\int_{\mathbb{X}}  q^\epsilon(\vector{x}) \, \mu(d\vector{x})}_{=1}, \\
    \text{so} \qquad r_0(\vector{\theta}) &= \inf_{\vector{x}}  \frac{(1-\epsilon)\vector{\theta}^\top\vector{\psi}(\vector{x})\vector{\psi}(\vector{x})^\top \vector{\theta}+\big(\epsilon/\mu(\mathbb{X})\big) \vector{\theta}^\top \matrix{K}_{\mu, \vector{\psi}} \vector{\theta}}{q^\epsilon(\vector{x})\vector{\theta}^\top \matrix{K}_{\mu, \vector{\psi}} \vector{\theta}} \\
    &< 1 - \epsilon + \epsilon/\mu(\mathbb{X})  \\
    &\leq 1.
\end{align*}
The inequalities~\eqref{eq:pinsker} and~\eqref{eq:reverse_pinsker} say that for any $\vector{\theta}$,
\begin{alignat*}{2}
    &\phantom{{}={}} \kl\Big(q^\epsilon: p^{\epsilon}(\cdot \mid \vector{\theta}_\ast^\epsilon) \Big) && \\
    &\leq \kl\Big(q^\epsilon: p^{\epsilon}(\cdot \mid \vector{\theta}) \Big) && \text{(def. of $\vector{\theta}_\ast^\epsilon$)}\\ 
    &\leq \frac{\log (r_0^{-1})}{2(1-r_0)} \tv \Big( q^\epsilon: p^{\epsilon}(\cdot \mid \vector{\theta}) \Big) && \text{(by~\eqref{eq:reverse_pinsker})}\\
    &= \frac{\log (r_0^{-1})}{2(1-r_0)} (1-\epsilon) \tv \Big( q: p(\cdot \mid \vector{\theta}) \Big) && \text{(def. of $\tv$)} \\
    &\leq \frac{\log (r_0^{-1})}{2(1-r_0)} (1-\epsilon) \sqrt{2 \sqhel \Big( q: p(\cdot \mid \vector{\theta}) \Big)} && \text{(by~\eqref{eq:pinsker})}\\
    &\leq \frac{\log (r_0^{-1})}{2(1-r_0)} (1-\epsilon) \sqrt{2 \sqltwo \Big( \sqrt{q(\vector{x})}: \sqrt{z}^{-1} \vector{\theta}^\top \vector{\psi}(\vector{x}) \Big)}\qquad  && \text{(by Lemma~\ref{lemma:hel_bound})}.
\end{alignat*}
We choose $\vector{\theta}$ to be the minimiser of the squared $L^2$ factor in the square root, over the constraint set $\Vert \vector{\theta} \Vert_2^2/\vector{\theta}^\top \matrix{K}_{\mu, \vector{\psi}} \vector{\theta} \leq c$.
Letting $\vector{\theta}' = z(\vector{\theta})^{-1/2}{\vector{\theta}}$, and noting that the constraint transforms as $\Vert \vector{\theta}' \Vert^2 = (\vector{\theta}^\top \matrix{K}_{\mu, \vector{\psi}} \vector{\theta})^{-1} \Vert \vector{\theta} \Vert_2^2 \leq c $, we have
\begin{align*}
    \sqltwo \Big( \sqrt{q(\vector{x})}: \sqrt{z}^{-1} \vector{\theta}^\top \vector{\psi}(\vector{x}) \Big) = \min_{\Vert \vector{\theta}' \Vert_2^2 \leq c } \sqltwo({\vector{\theta}'}^\top \vector{\psi}(\vector{x}): \sqrt{q(\vector{x})}).
\end{align*}
Finally, since the model has a universal approximation property according to Assumption~\ref{ass:universal_approx}, the minimising $L^2$ distance is bounded and there exists a constant $C$ such that 
\begin{align*}
    \kl\Big(q^\epsilon: p^{\epsilon}(\cdot \mid \vector{\theta}_\ast^\epsilon) \Big) \leq C/n^{1/2}.
\end{align*}
\end{proof}

Finally, we bound the artificial KL divergence minimiser in terms of the true KL divergence minimiser of interest.
\begin{lemma}
\label{lemma:artificial_kl2}
    \begin{align*}
    \kl\big(q: p(\cdot\mid \vector{\theta}_\ast) \big)  \leq (1-\epsilon)^{-1} \frac{\log (r_0^{-1})}{2(1-r_0)}  \sqrt{\kl\Big(q^\epsilon: p^{\epsilon}(\cdot \mid \vector{\theta}_\ast^\epsilon) \Big)} 
\end{align*}
\end{lemma}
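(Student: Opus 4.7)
The plan is to upper-bound $\kl\bigl(q:p(\cdot\mid \vector{\theta}_\ast)\bigr)$ by chaining together four classical ingredients: (i) optimality of $\vector{\theta}_\ast$ over its constraint set, (ii) the reverse Pinsker inequality~\eqref{eq:reverse_pinsker}, which converts KL into total variation, (iii) the linearity of total variation under the $\epsilon$-mixture smoothing, which produces the factor $(1-\epsilon)^{-1}$, and (iv) the forward Pinsker inequality from~\eqref{eq:pinsker}, which converts total variation back into $\sqrt{\kl}$ of the smoothed pair.

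First I would invoke optimality. Since $\vector{\theta}_\ast$ minimises $\kl\bigl(q:p(\cdot\mid\vector{\theta}')\bigr)$ over $\{\theta_1'\geq\varepsilon,\ \vector{\theta}'^\top\matrix{K}_{\mu,\vector{\psi}}\vector{\theta}'=1\}$ and $\vector{\theta}_\ast^\epsilon$ lies in the same constraint set, we have $\kl\bigl(q:p(\cdot\mid\vector{\theta}_\ast)\bigr)\leq\kl\bigl(q:p(\cdot\mid\vector{\theta}_\ast^\epsilon)\bigr)$, reducing the task to bounding the unsmoothed KL at the smoothed optimum. Next, taking $r_0=\bigl(\sup_{\vector{x}}q(\vector{x})/p(\vector{x}\mid\vector{\theta}_\ast^\epsilon)\bigr)^{-1}\in(0,1)$, the reverse Pinsker inequality~\eqref{eq:reverse_pinsker} gives $\kl\bigl(q:p(\cdot\mid\vector{\theta}_\ast^\epsilon)\bigr)\leq \tfrac{\log r_0^{-1}}{2(1-r_0)}\,\tv\bigl(q:p(\cdot\mid\vector{\theta}_\ast^\epsilon)\bigr)$. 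The linearity identity $\tv\bigl(q^\epsilon:p^\epsilon(\cdot\mid\vector{\theta})\bigr)=(1-\epsilon)\tv\bigl(q:p(\cdot\mid\vector{\theta})\bigr)$, which is immediate from the definition of $\tv$ and the $\epsilon$-mixture form of $q^\epsilon$ and $p^\epsilon$, then introduces the factor $(1-\epsilon)^{-1}$. Finally, the forward Pinsker bound $\tv\leq\sqrt{\kl}$ from~\eqref{eq:pinsker} applied to the smoothed pair replaces the total variation by $\sqrt{\kl\bigl(q^\epsilon:p^\epsilon(\cdot\mid\vector{\theta}_\ast^\epsilon)\bigr)}$. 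Concatenating steps (i)--(iv) produces exactly the constant $(1-\epsilon)^{-1}\tfrac{\log r_0^{-1}}{2(1-r_0)}$ appearing on the right-hand side, as required.

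The hard part is justifying $r_0>0$ in step (ii), i.e.\ ensuring that the unsmoothed squared density $p(\cdot\mid\vector{\theta}_\ast^\epsilon)=(\vector{\theta}_\ast^{\epsilon\top}\vector{\psi}(\cdot))^2/z(\vector{\theta}_\ast^\epsilon)$ is bounded below on the support of $q$. Because $\vector{\theta}_\ast^{\epsilon\top}\vector{\psi}$ may vanish on a set of positive $\mu$-measure, this is not automatic from the preceding assumptions. The cleanest workaround, should $r_0$ need to be secured \emph{a priori}, is to carry out step (ii) against the smoothed density $p^\epsilon(\cdot\mid\vector{\theta}_\ast^\epsilon)$ rather than $p(\cdot\mid\vector{\theta}_\ast^\epsilon)$, bridging the two by convexity of $\kl$ in its second argument so that the extra contribution is absorbed into the constant; this forces $r_0\gtrsim\epsilon/(\mu(\mathbb{X})q_{\max})>0$ by the same computation as in the proof of Lemma~\ref{lemma:artificial_kl1}. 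Once $r_0$ is under control, steps (i), (iii) and (iv) are mechanical.
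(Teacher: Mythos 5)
Your proof is correct and follows essentially the same chain as the paper's: optimality of $\vector{\theta}_\ast$, reverse Pinsker~\eqref{eq:reverse_pinsker} applied to the unsmoothed pair, linearity of $\tv$ under the $\epsilon$-mixture smoothing, and forward Pinsker~\eqref{eq:pinsker} on the smoothed pair --- the paper merely writes the identical inequalities in the reverse direction, as a lower bound on $\kl\big(q^\epsilon: p^{\epsilon}(\cdot \mid \vector{\theta}_\ast^\epsilon)\big)$. Your observation that $r_0>0$ is not automatic for the unsmoothed pair $\big(q, p(\cdot\mid\vector{\theta}_\ast^\epsilon)\big)$, since $\vector{\theta}_\ast^{\epsilon\top}\vector{\psi}$ may vanish, is well taken and applies equally to the paper's own proof, which also invokes~\eqref{eq:reverse_pinsker} on that unsmoothed pair with the $r_0$ controlled only for the smoothed one.
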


\begin{proof}
    \begin{alignat*}{2}
        &\phantom{{}={}}\kl\Big(q^\epsilon: p^{\epsilon}(\cdot \mid \vector{\theta}_\ast^\epsilon) \Big) \\
        &\geq \tv \Big(q^\epsilon: p^{\epsilon}(\cdot \mid \vector{\theta}_\ast^\epsilon) \Big)^2  && \text{(by~\eqref{eq:pinsker})} \\
        &= \Big( \frac{2(1-r_0)}{\log (r_0^{-1})} \Big)^2 \Big(\frac{\log (r_0^{-1})}{2(1-r_0)}\Big)^2\tv \Big(q^\epsilon: p^{\epsilon}(\cdot \mid \vector{\theta}_\ast^\epsilon) \Big)^2  && \\
        &= \Big( \frac{2(1-r_0)}{\log (r_0^{-1})} \Big)^2 \Big(\frac{\log (r_0^{-1})}{2(1-r_0)}\Big)^2 (1-\epsilon)^2 \tv \Big(q: p(\cdot \mid \vector{\theta}_\ast^\epsilon) \Big)^2 \qquad && \text{(def. of $\tv$)}\\
        &\geq \Big( \frac{2(1-r_0)}{\log (r_0^{-1})} \Big)^2  (1-\epsilon)^2 \kl \Big(q: p(\cdot \mid \vector{\theta}_\ast^\epsilon) \Big)^2  && \text{(by~\eqref{eq:reverse_pinsker})}\\
        &\geq \Big( \frac{2(1-r_0)}{\log (r_0^{-1})} \Big)^2  (1-\epsilon)^2 \kl \Big(q: p(\cdot \mid \vector{\theta}_\ast) \Big)^2  && \text{(def. of $\vector{\theta}_\ast$)}. 
    \end{alignat*}
\end{proof}
Combining Lemmas~\ref{lemma:artificial_kl1} and~\ref{lemma:artificial_kl2} with Theorem~\ref{thm:estimation}, we have the desired result, with a constant depending on $\epsilon$ (the optimal constant may in principle be investigated, but we do not pursue this here).
\end{proof}

\subsection{Calculations of Fisher information-like matrices}
\label{app:misspecified_calc}
The log likelihood is
\begin{align*}
    &\phantom{{}={}} \log p(\vector{x} \mid \vector{v}) \mathcal{N}(a\mid \log (\vector{v}^\top \matrix{K}_{\mu, \vector{\psi}} \vector{v}) , 1) \\
    &= \log (\vector{v}^\top \vector{\psi}(\vector{x}) \vector{\psi}(\vector{x}) ^\top \vector{v}) - \log (\vector{v}^\top \matrix{K}_{\mu, \vector{\psi}} \vector{v} ) - \frac{1}{2}(a - \log (\vector{v}^\top \matrix{K}_{\mu, \vector{\psi}} \vector{v}))^2 + \text{const}.
\end{align*}
The gradient of the log likelihood is
\begin{align*}
    &\phantom{{}={}} \nabla \log p(\vector{x} \mid \vector{v}) \mathcal{N}(a\mid \log (\vector{v}^\top \matrix{K}_{\mu, \vector{\psi}} \vector{v}) , 1) \\
    &= \frac{2 \vector{\psi}(\vector{x}) \vector{\psi}(\vector{x})^\top \vector{v}}{\vector{v}^\top \vector{\psi}(\vector{x}) \vector{\psi}(\vector{x})^\top \vector{v} } - \frac{2 \matrix{K}_{\mu, \vector{\psi}}\vector{v}}{\vector{v}^\top \matrix{K}_{\mu, \vector{\psi}} \vector{v} } + (a - \log (\vector{v}^\top \matrix{K}_{\mu, \vector{\psi}} \vector{v})) \Big( \frac{2 \matrix{K}_{\mu, \vector{\psi}} \vector{v}}{\vector{v}^\top \matrix{K}_{\mu, \vector{\psi}} \vector{v} }\Big).
\end{align*}
The Hessian of the log likelihood is
\begin{align*}
    &\phantom{{}={}} \nabla^2 \log p(\vector{x} \mid \vector{v}) \mathcal{N}(a\mid \log (\vector{v}^\top \matrix{K}_{\mu, \vector{\psi}} \vector{v}) , 1) \\
    &= \frac{2 \vector{\psi}(\vector{x}) \vector{\psi}(\vector{x})^\top }{\vector{v}^\top \vector{\psi}(\vector{x}) \vector{\psi}(\vector{x})^\top \vector{v} } 
    - \frac{4 \vector{\psi}(\vector{x}) \vector{\psi}(\vector{x})^\top\vector{v} \vector{v}^\top \vector{\psi}(\vector{x}) \vector{\psi}(\vector{x})^\top }{(\vector{v}^\top \vector{\psi}(\vector{x}) \vector{\psi}(\vector{x})^\top \vector{v} )^2}
    - \frac{2 \matrix{K}_{\mu, \vector{\psi}}}{\vector{v}^\top \matrix{K}_{\mu, \vector{\psi}} \vector{v} }
    + \frac{4 \matrix{K}_{\mu, \vector{\psi}}\vector{v}\vector{v}^\top \matrix{K}_{\mu, \vector{\psi}}}{(\vector{v}^\top \matrix{K}_{\mu, \vector{\psi}} \vector{v} )^2} \\
    &\phantom{{}={}}- \frac{4 \matrix{K}_{\mu, \vector{\psi}}\vector{v}\vector{v}^\top \matrix{K}_{\mu, \vector{\psi}}}{(\vector{v}^\top \matrix{K}_{\mu, \vector{\psi}} \vector{v} )^2} 
    +  (a - \log (\vector{v}^\top \matrix{K}_{\mu, \vector{\psi}} \vector{v})) \Bigg( \frac{2 \matrix{K}_{\mu, \vector{\psi}}}{\vector{v}^\top \matrix{K}_{\mu, \vector{\psi}} \vector{v} } -  \frac{4 \matrix{K}_{\mu, \vector{\psi}}\vector{v}\vector{v}^\top \matrix{K}_{\mu, \vector{\psi}}}{(\vector{v}^\top \matrix{K}_{\mu, \vector{\psi}} \vector{v} )^2} \Bigg).
\end{align*}
At $\vector{v}^\ast$, we must have ${\vector{v}^\ast}^\top \matrix{K}_{\mu, \vector{\psi}} {\vector{v}^\ast}  =1$. 
The matrix $\matrix{A}$ is the expected Hessian with respect to the target $q$ evaluated at $\vector{v}^\ast$, yielding
\begin{align*}
    \matrix{A}(\vector{v}^\ast) &= -  2 \matrix{K}_{r(\cdot) \mu, \vector{\psi}} -  2 \matrix{K}_{\mu, \vector{\psi}}.
\end{align*}
The matrix $\matrix{B}$ is the expected outer products of the gradients with respect to the target $q$ evaluated at $\vector{v}^\ast$.
The outer products are the sum of some terms with zero expectation and
\begin{align*}
    &\frac{4 \vector{\psi}(\vector{x}) \vector{\psi}(\vector{x})^\top \vector{v}\vector{v}^\top\vector{\psi}(\vector{x}) \vector{\psi}(\vector{x})^\top}{(\vector{v}^\top \vector{\psi}(\vector{x}) \vector{\psi}(\vector{x})^\top \vector{v} )^2} 
    - \frac{4 \vector{\psi}(\vector{x}) \vector{\psi}(\vector{x})^\top \vector{v}\vector{v}^\top\matrix{K}_{\mu, \vector{\psi}}}{(\vector{v}^\top \vector{\psi}(\vector{x}) \vector{\psi}(\vector{x})^\top \vector{v} )\vector{v}^\top \matrix{K}_{\mu, \vector{\psi}} \vector{v} }  \\
    &- \frac{4  \matrix{K}_{\mu, \vector{\psi}}  \vector{v}\vector{v}^\top\vector{\psi}(\vector{x}) \vector{\psi}(\vector{x})^\top}{(\vector{v}^\top \vector{\psi}(\vector{x}) \vector{\psi}(\vector{x})^\top \vector{v} )\vector{v}^\top \matrix{K}_{\mu, \vector{\psi}} \vector{v} } 
    + 4 \frac{\matrix{K}_{\mu, \vector{\psi}}  \vector{v}\vector{v}^\top \matrix{K}_{\mu, \vector{\psi}}}{(\vector{v}^\top\matrix{K}_{\mu, \vector{\psi}}\vector{v})^2} \\
    &+(a - \log (\vector{v}^\top \matrix{K}_{\mu, \vector{\psi}} \vector{v}))^2 4 \frac{\matrix{K}_{\mu, \vector{\psi}}  \vector{v}\vector{v}^\top \matrix{K}_{\mu, \vector{\psi}}}{(\vector{v}^\top\matrix{K}_{\mu, \vector{\psi}}\vector{v})^2}.
\end{align*}
The matrix $\matrix{B}$ is therefore
\begin{align*}
    \matrix{B}(\vector{v}^\ast) &= 4 \matrix{K}_{r(\cdot) \mu, \vector{\psi}} 
    - 4  \matrix{K}_{r(\cdot) \mu, \vector{\psi}}  {\vector{v}^\ast} {\vector{v}^\ast}^\top \matrix{K}_{\mu, \vector{\psi}} 
    - 4    \matrix{K}_{\mu, \vector{\psi}}  {\vector{v}^\ast} {\vector{v}^\ast}^\top \matrix{K}_{r(\cdot) \mu, \vector{\psi}} 
    + 8    \matrix{K}_{\mu, \vector{\psi}}  {\vector{v}^\ast} {\vector{v}^\ast}^\top \matrix{K}_{\mu, \vector{\psi}}
\end{align*}

\end{document}